\documentclass{article}

\usepackage{microtype}
\usepackage{graphicx}
\usepackage{float}
\usepackage{subcaption}
\usepackage{booktabs}
\usepackage{hyperref}

\usepackage[accepted]{icml2026}

\usepackage{amsmath}
\usepackage{amssymb}
\usepackage{mathtools}
\usepackage{amsthm}
\usepackage{amsfonts}

\usepackage[capitalize,noabbrev]{cleveref}
\usepackage{enumitem}

\usepackage{tikz}
\usepackage{tikz-3dplot}
\usetikzlibrary{decorations.markings}
\usetikzlibrary{arrows.meta}
\usepackage{xcolor}
\usepackage{pifont}
\usepackage{algorithm}
\usepackage{algorithmic}
\newcommand{\RETURN}{\STATE \textbf{return} }

\newcommand{\tp}{{\scriptscriptstyle\mathsf{T}}}

\let\oh=\circ
\newcommand{\ccirc}{\mathbin{\mathchoice
  {\xcirc\scriptstyle}
  {\xcirc\scriptstyle}
  {\xcirc\scriptscriptstyle}
  {\xcirc\scriptscriptstyle}
}}
\newcommand{\xcirc}[1]{\vcenter{\hbox{$#1\oh$}}}
\let\circ\ccirc

\theoremstyle{plain}
\newtheorem{theorem}{Theorem}[section]
\newtheorem{proposition}[theorem]{Proposition}
\newtheorem{lemma}[theorem]{Lemma}
\newtheorem{corollary}[theorem]{Corollary}
\theoremstyle{definition}
\newtheorem{definition}[theorem]{Definition}
\newtheorem{example}[theorem]{Example}
\theoremstyle{remark}
\newtheorem{remark}[theorem]{Remark}

\newcommand{\R}{\mathbb{R}}
\newcommand{\Z}{\mathbb{Z}}
\DeclareMathOperator{\lk}{link}

\DeclareMathOperator{\ReLU}{ReLU}
\DeclareMathOperator{\rank}{rank}

\DeclareMathOperator{\softmax}{softmax}
\DeclareMathOperator{\sigmoid}{sigmoid}
\DeclareMathOperator{\Attention}{Attention}

\icmltitlerunning{Low-dimensional topology of deep neural networks}

\begin{document}

\twocolumn[
\icmltitle{Low-dimensional topology of deep neural networks}

\icmlsetsymbol{equal}{*}

\begin{icmlauthorlist}
\icmlauthor{Junyu Ren}{uchicago}
\icmlauthor{Lek-Heng Lim}{uchicago}
\end{icmlauthorlist}

\icmlaffiliation{uchicago}{University of Chicago, Chicago, IL, USA}
\icmlcorrespondingauthor{Junyu Ren}{junyuren@uchicago.edu}

\icmlkeywords{neural networks, topology, expressivity, linking number, ReLU}

\vskip 0.3in
]

\printAffiliationsAndNotice{}

\begin{abstract}
We study layered models, including feedforward networks, ResNets, and transformers, by limiting each layer to a width of $d = 3$, i.e.,  $\mathbb{R}^3$ as representation space. This allows us to track how a neural network changes low-dimensional topological invariants through its layers. Just about any topological structure may be simplified or even trivialized by simply increasing dimension; e.g., any knot is equivalent to an unknot in $\mathbb{R}^4$. By restricting to $\mathbb{R}^3$, we not only isolate the effects of activation and depth from that of width, we work in a space that lends itself to easy visualization. We focus on \emph{linking number} here, deferring other invariants like link groups, Milnor's $\bar{\mu}$-invariants, knot types, ambient cobordisms, to a sequel.  We provide full proofs and empirical experiments to justify the following insights: When measured by their power to effect changes in linking numbers, the layer-skipping feature in ResNets is as powerful as the attention mechanism in transformers; both ResNets and transformers are strictly more powerful than feedforward neural networks with monotonic activations, which are in turn more powerful than invertible and flow-based models; but replacing monotonic activation with a nonmonotonic one elevates a feedforward network into the same expressivity class as ResNets and transformers. These results suggest that low-dimensional topology can be a useful tool to guide designs of AI architectures. We also generalize our results from $d = 3$ to arbitrary $d > 3$.
\end{abstract}

\begin{figure*}[t]
\centering
\begin{subfigure}[t]{0.325\textwidth}
\centering
\includegraphics[width=\linewidth]{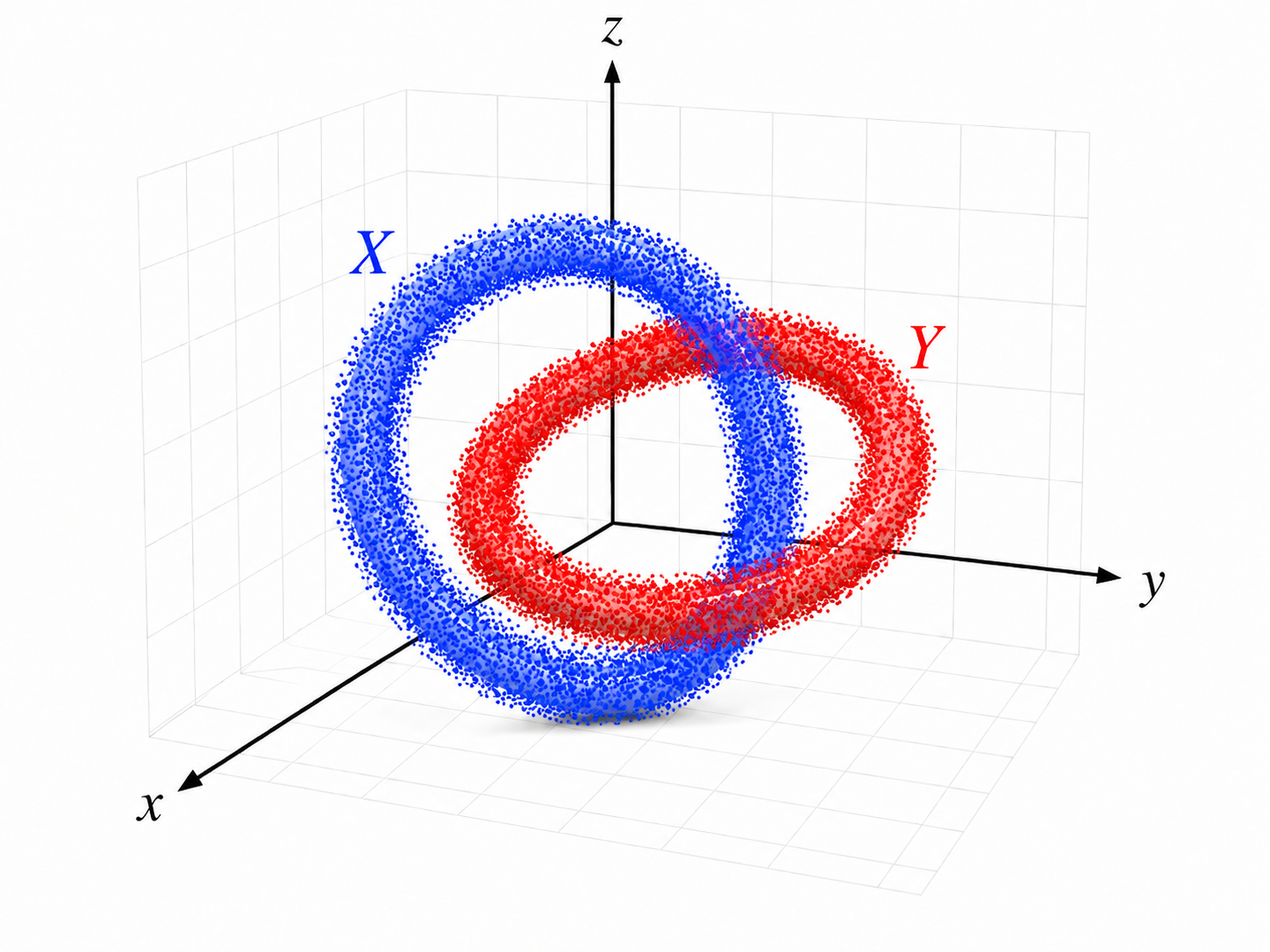}
\caption{point cloud data with linked supports}
\end{subfigure}\hfill
\begin{subfigure}[t]{0.325\textwidth}
\centering
\includegraphics[width=\linewidth]{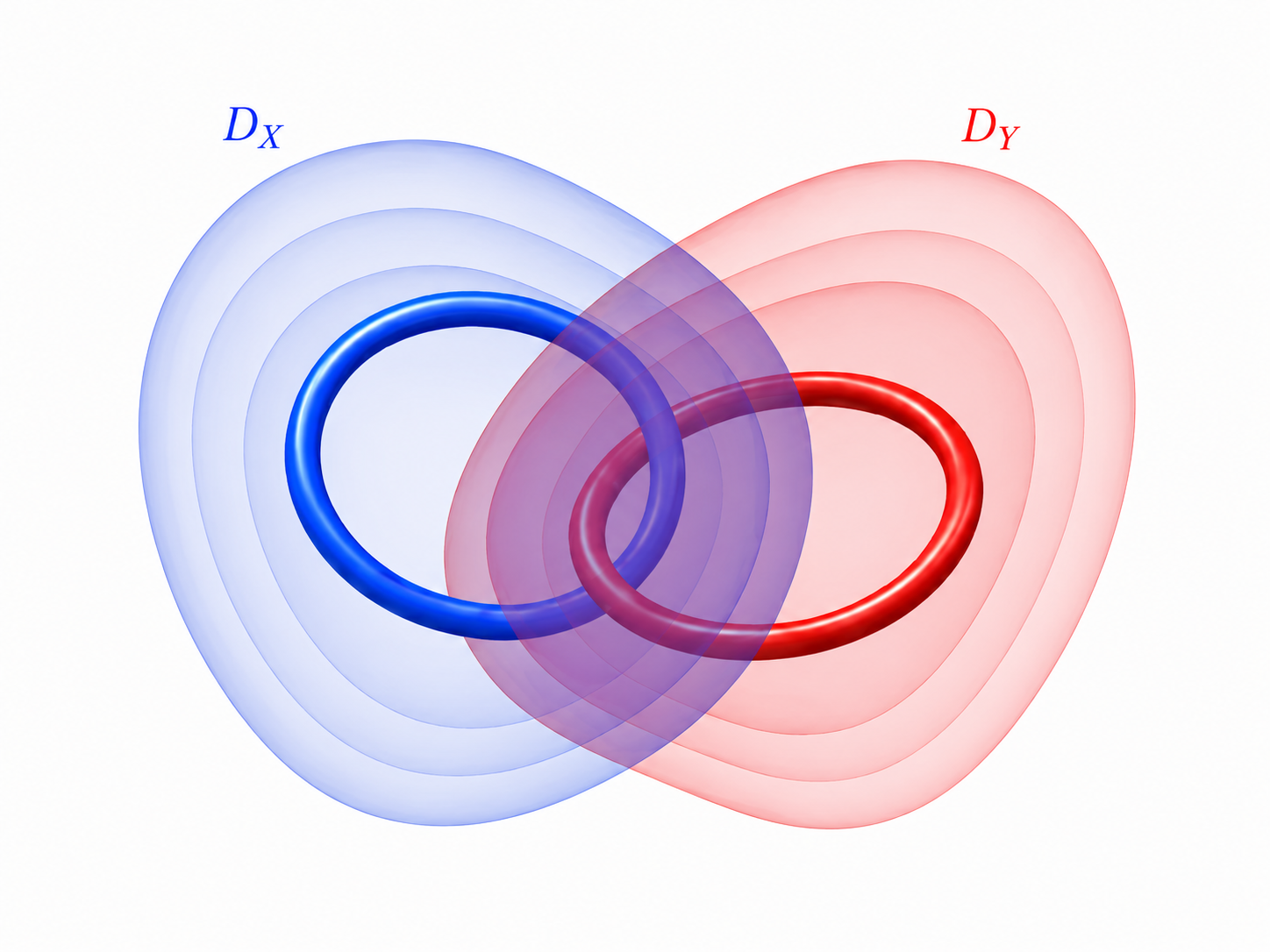}
\caption{convex decision regions containing linked classes must intersect}
\end{subfigure}\hfill
\begin{subfigure}[t]{0.325\textwidth}
\centering
\includegraphics[width=\linewidth]{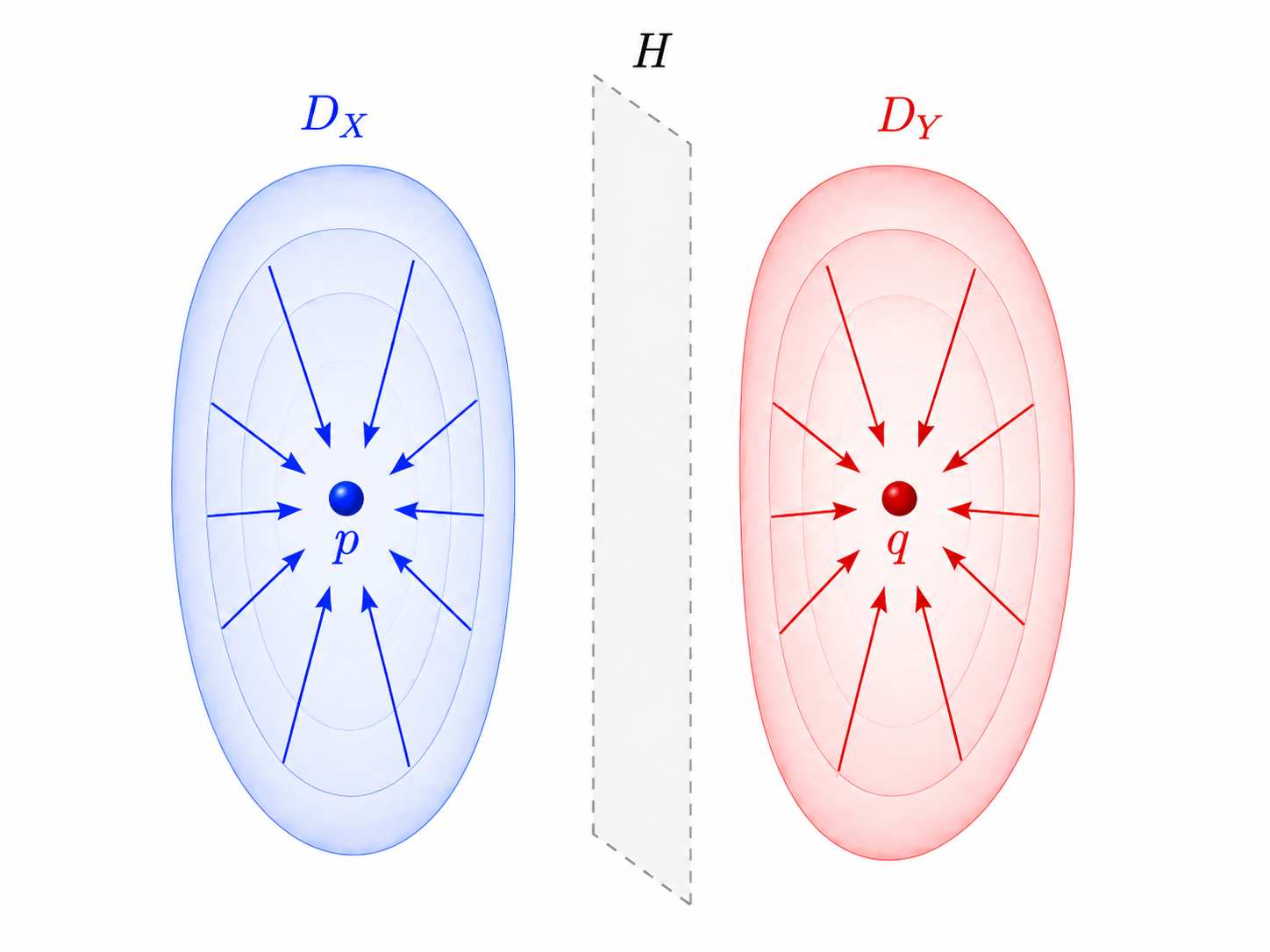}
\caption{classes separated by a hyperplane cannot be linked}
\end{subfigure}
\caption{Linked supports create a topological obstruction to classification: linked class manifolds cannot be contained in disjoint convex decision regions, while hyperplane-separated classes are necessarily unlinked.}
\label{fig:hopf-link}
\end{figure*}

\section{Introduction}
\label{sec:introduction}

The last 50 years in topology has arguably been the half-century of \emph{low-dimensional topology}, with many astounding groundbreaking advances in the topology of $3$-manifolds \cite{Thurston, Jones, Perelman} and $4$-manifolds \cite{Donaldson, Freedman}. We will show how innovative topological invariants developed to study low-dimensional (i.e., $3$- or $4$-dimensional) manifolds may be put to good use in the analysis and design of modern AI models. 

We begin with a simple example. Consider two interlocking rings in $\mathbb{R}^3$, the Hopf link. Can a neural network learn to classify points on these rings into their respective connected components? This deceptively simple question reveals deep connections between topology and machine learning. Under the manifold hypothesis \citep{bengio2013representation}, real-world data concentrates near low-dimensional manifolds embedded in high-dimensional space. When class manifolds are topologically entangled, like in a Hopf link, classification \emph{requires} one to geometrically ``untangle'' them. We will see how such topological obstructions impose fundamental constraints on neural network architectures. Due to space constraints, we limit ourselves to one specific invariant, the linking number, to illustrate our general framework of using low-dimensional topological invariants to study neural network architectures. A companion paper treats more sophisticated invariants in detail \cite{renlim2026topologyII}.

We emphasize that dimension itself is a powerful, if not all-powerful, attribute in machine learning \cite{cover1965geometrical, cortes1995support} --- there is almost nothing that one cannot do through simply increasing dimension, including effecting topological changes; although curse-of-dimensionality often also  accompanies a dimension increase. The flip side of this coin is that dimension is a factor that masks the effects of all other aspects of a neural network architecture. Our approach removes this confounding factor by fixing width throughout, first at $d = 3$, and only increasing $d$ later for comparison.

Existing neural network theory largely ignores topological structure other than dimension (a topological invariant). Universal approximation theorems \citep{cybenko1989approximation,hornik1989multilayer} show that sufficiently wide networks approximate any function, but say nothing about which architectures succeed on which data. Depth--width tradeoffs \citep{telgarsky2016benefits,eldan2016power} and approximation rates \citep{barron1993universal,yarotsky2017error} characterize function complexity, not data complexity. Topological data analysis \citep{carlsson2009topology} studies intrinsic data shape but not how embeddings in ambient space constrain learning. Meanwhile, practitioners have observed that certain architectural features like skip connections, nonmonotonic activations, attention, etc, consistently outperform alternatives. Engineering reasons have been proffered for some of these: training stability, information flow, incremental layer-wise transformations, etc. We will provide a topological perspective.

When does topological structure create fundamental barriers for neural networks, and which mechanisms overcome them? The Hopf link (Figure~\ref{fig:hopf-link}) provides a canonical example: two interlocking circles with linking number $\lk = 1$, measuring how many times one curve winds through the other. Linear separability requires $\lk = 0$. Indeed, if a hyperplane separates two curves, each curve can be continuously contracted to a point inside its own half-space without ever leaving it, yielding a link homotopy to a disjoint pair of points whose linking number is zero. We prove that width-3 feedforward networks with ReLU activations \emph{preserve} linking numbers: invertible affine layers preserve $\lk$ as homeomorphisms, and monotonic activations preserve $\lk$ via straight-line homotopies. Since the Hopf link has $\lk = 1 \neq 0$, no such network can achieve linear separability, regardless of depth. Similarly, decision regions cannot be made disjoint and convex, so logit-based classifiers cannot achieve perfect accuracy.

How do modern architectures escape from such topological traps? The key is \emph{folding}: coordinatewise nonmonotonic transformations like the absolute value $|x|$. A ResNet block can synthesize $|x| = x + 2\cdot\text{ReLU}(-x)$ using only ReLU and skip connections. Nonmonotonic activations (GELU, Swish) fold directly. Attention mechanisms create input-dependent convex combinations that locally approximate folding. These mechanisms break the homotopy argument underlying our impossibility theorems, enabling topological transformations that monotonic networks cannot perform.

\noindent\textbf{Our contributions.} We study the expressivity and learnability of neural network with low-dimensional topology:

\begin{enumerate}[leftmargin=*, itemsep=2pt, topsep=2pt]
    \item \textbf{Topology--ML connection.} We prove that classification of linked data \emph{requires} unlinking before a linear readout (Proposition~\ref{prop:classification-requires-unlinking}).

    \item \textbf{Topological expressivity.} We prove that width-$d$ feedforward networks with continuous coordinatewise monotonic activations cannot separate linked manifolds in $\R^d$ (Sections~\ref{sec:hopf-example}--\ref{sec:generalization}). As a corollary, universal approximation with such activations requires width at least $d+1$. We then show that skip connections, attention, nonmonotonic activations, and width expansion all provide ways to fold the data representation and thereby unlink linked manifolds (Section~\ref{sec:breaking}).

    \item \textbf{Learnability beyond expressivity.} Experiments on Hopf links and higher-dimensional linked spheres show that limited topological expressivity imposes an accuracy ceiling under our training protocol. Further experiments on synthetic data and CIFAR-10 show that architectures with folding mechanisms gain a classification advantage, and this advantage is more pronounced at data points near the links (Section~\ref{sec:experiments}).

    \item \textbf{Extrinsic TDA.} We present an algorithm for estimating linking between class manifolds from point-cloud samples, using spatial graphs, cycle bases, and Gauss integrals. Applied to CIFAR-10, the algorithm shows that class pairs with stronger linking tend to be harder for topologically less expressive architectures to classify (Section~\ref{sec:cifar10-experiments}).
\end{enumerate}

\begin{table}[h]
\centering
\small
\caption{Topological expressivity of width-$d$ architectures, scored on the linking/folding transformations studied here. \ding{55} = cannot perform under our hypotheses; \checkmark = can perform via the construction we give. \textbf{AH}: Ambient homeomorphism (flow-based models, Neural ODEs). \textbf{FM}: Feedforward monotonic (ReLU, sigmoid, tanh). \textbf{AE}: Autoencoder with width-$d$ bottleneck. \textbf{FN}: Feedforward nonmonotonic (GELU, Swish). \textbf{R}: ResNet. \textbf{T}: Pure transformer (two-token attention construction of Theorem~\ref{thm:attention}).}
\label{tab:topological-expressivity}
\setlength{\tabcolsep}{2pt}
\begin{tabular*}{\columnwidth}{@{\extracolsep{\fill}}l|cccccc@{}}
\toprule
\textbf{Topological Transformation} & \textbf{AH} & \textbf{FM} & \textbf{AE} & \textbf{FN} & \textbf{R} & \textbf{T} \\
\midrule
Deform shapes (no topological change) & \checkmark & \checkmark & \checkmark & \checkmark & \checkmark & \checkmark \\
Merge connected components & \ding{55} & \checkmark & \checkmark & \checkmark & \checkmark & \checkmark \\
Fill up holes & \ding{55} & \checkmark & \checkmark & \checkmark & \checkmark & \checkmark \\
Unlink 2-component link & \ding{55} & \ding{55} & \ding{55} & \checkmark & \checkmark & \checkmark \\
\bottomrule
\end{tabular*}
\end{table}

Table~\ref{tab:topological-expressivity} offers a preview of our main expressivity results: which architectures can perform which topological transformations under width constraints. We will develop the theory in Sections~\ref{sec:preliminaries}--\ref{sec:breaking}, validate predictions experimentally in Section~\ref{sec:experiments}, and demonstrate real-world relevance via CIFAR-10 linking detection in Section~\ref{sec:cifar10-experiments}.

\section{Related Work}
\label{sec:related-work}

Topological data analysis extracts intrinsic shape features (holes, connected components) via persistent homology \citep{carlsson2009topology,edelsbrunner2010computational,cang2017topologynet}. \citet{naitzat2020topology} study how deep networks change Betti numbers of data manifolds; we study how narrow networks \emph{preserve} linking numbers, an extrinsic invariant depending on how manifolds are embedded in an ambient space, not an intrinsic invariant of a manifold itself like the Betti numbers. This extrinsic perspective connects to early intuitions about neural networks ``untangling'' data \citep{colah2014topology}. Recent works have identified other topological limitations: higher-order message-passing cannot compute homology \citep{eitan2025topological}, E(3)-invariant GNNs cannot distinguish enantiomers \citep{dumitrescu2025chirality}, and topological obstructions constrain generative models \citep{esmaeili2023topological}. These concern what networks can \emph{detect}; we study what they can \emph{transform}, a question also relevant to world models that must represent geometric structure \citep{lecun2022path}. Width bounds for universal approximation \citep{hanin2017approximating,johnson2019deep,rochau2024lower} show that width-$d$ networks cannot approximate all functions on $\R^d$; we on the other hand show that arbitrarily deep width-$d$ networks cannot separate linked data in $\R^d$. Neural networks for molecular dynamics lift $\R^3$ coordinates to 64- to 384-dimensional features \citep{schutt2017schnet,satorras2021egnn,jumper2021alphafold}; our theory offers a topological rationale for this practice.

\section{The Hopf Link: An Intuitive Example}
\label{sec:hopf-example}
\label{sec:preliminaries}

Consider two interlinked circles in 3D space forming the Hopf link (for topological background, see \citealp{rolfsen1976knots,hatcher2002algebraic}).
\begin{quote}
Question: \emph{Can a width-3 ReLU network classify points on these circles into separate classes?}
\end{quote}
Here a width-$d$ feedforward network is $F = A_L \circ \sigma \circ \cdots \circ \sigma \circ A_1$ with affine $A_i(x) = W_i x + b_i$, $W_i \in \R^{d \times d}$, and coordinatewise activation $\sigma$, e.g., ReLU. Our results apply to any architecture whose intermediate representations have dimensions $\leq d$ after flattening any spatial axes and whose nonlinearities are coordinatewise. This includes CNNs, since convolutional layers are affine maps and the flattened representation size $C \cdot H \cdot W$ is the operative width, as well as bottleneck autoencoders with width-$d$ bottlenecks.

\begin{example}[Hopf link]
    \label{ex:hopf-link}
    The \emph{Hopf link} consists of two interlinked circles in $\R^3$ given parametrically by
    \begin{align*}
    X(t) &= (\cos(t), \sin(t), 0), \\
    Y(s) &= (1 + \cos(s), 0, \sin(s)),
    \end{align*}
    for $t, s \in [0, 2\pi]$. These two circles are topologically linked with $\lk(X, Y) = 1$, meaning they cannot be separated without cutting one of the curves.
\end{example}

Classifying samples from the two Hopf link circles exposes a topological constraint on narrow neural networks; we develop the argument step by step, leaving technical details to Appendix~\ref{app:proofs}.

\subsection{Linking Numbers}

The core of our argument is a topological invariant that quantifies how two closed curves are intertwined in 3D space.

\begin{definition}[Linking number]
\label{def:linking-number}
Let $X, Y \subset \R^3$ be two disjoint, oriented, simple closed curves. The \emph{linking number} $\lk(X, Y)$ is defined by the Gauss integral:
\[
\lk(X, Y) = \frac{1}{4\pi} \oint_{X} \oint_{Y} \frac{(x - y) \cdot (dx \times dy)}{|x - y|^3}.
\]
This measures the degree of the Gauss map $G: X \times Y \to \mathbb{S}^2$ given by $G(x, y) = (x - y)/\lVert x - y \rVert$. The linking number is always an integer. For the Hopf link, $\lk(X, Y) = \pm 1$.

There is an equivalent combinatorial formula: For any regular projection $\pi: \R^3 \to \R^2$,
\[
\lk(X, Y) = \frac{1}{2} \sum_{p \in \pi(X) \cap \pi(Y)} \epsilon_p
\]
where $\epsilon_p = \pm 1$ is the sign of each crossing.
\end{definition}

\subsection{Network Operations and Linking Number Preservation}

We analyze each layer type in a width-3 ReLU network through the following key lemmas.

\begin{definition}[Homotopy]
\label{def:homotopy}
A \emph{homotopy} between two continuous maps $f, g: X \to Z$ is a continuous function $H: X \times [0,1] \to Z$ such that $H(x,0) = f(x)$ and $H(x,1) = g(x)$ for all $x \in X$. Intuitively, $H$ provides a continuous ``interpolation'' from $f$ to $g$, deforming $f(X)$ to $g(X)$ within $Z$. The \emph{straight-line homotopy} $H_t(x) = (1-t)f(x) + tg(x)$ is the simplest example when $Z = \R^n$.
\end{definition}

\begin{lemma}[Homotopy invariance]
\label{lem:homotopy}
If $X'$ is homotopic to $X$ via a homotopy avoiding $Y$, then
\[
\lk(X', Y) = \lk(X, Y).
\]
\end{lemma}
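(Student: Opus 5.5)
We prove the statement through the degree interpretation of the linking number in Definition~\ref{def:linking-number}. Parametrize $Y$ by $\gamma:\mathbb{S}^1\to\R^3$, and let the homotopy be $H:\mathbb{S}^1\times[0,1]\to\R^3$ with $H(\cdot,0)$ parametrizing $X$ and $H(\cdot,1)$ parametrizing $X'$. The hypothesis that $H$ avoids $Y$ means $H(u,\tau)\neq \gamma(v)$ for all $u,v,\tau$. The linking number of the pair at time $\tau$ is then $\deg G_\tau$, where
\[
G_\tau:\mathbb{S}^1\times\mathbb{S}^1\to\mathbb{S}^2,\qquad G_\tau(u,v)=\frac{H(u,\tau)-\gamma(v)}{\lVert H(u,\tau)-\gamma(v)\rVert}.
\]
This definition also covers intermediate curves $H(\cdot,\tau)$ that need not be simple, so it is the natural setting for the argument.

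The key steps are as follows. First, $G_\tau$ is well defined for every $\tau$, because the denominator never vanishes. It is jointly continuous in $(u,v,\tau)$, since $H$ and $\gamma$ are continuous and normalization is continuous away from $0$. Hence $(u,v,\tau)\mapsto G_\tau(u,v)$ is a homotopy from $G_0$ to $G_1$ between maps from the torus $T^2$ to $\mathbb{S}^2$. Second, homotopic maps between closed oriented surfaces induce the same map on $H_2$, so they have the same degree. This gives $\deg G_0=\deg G_1$, which is exactly $\lk(X,Y)=\lk(X',Y)$.

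The main obstacle is reconciling this degree argument with the Gauss-integral formula, which needs differentiable curves, while a general homotopy is only continuous. There are two ways to handle it. The preferred option is to take the degree of the Gauss map as the definition of $\lk$ for continuous curves, as the definition itself permits, and to note that for smooth curves the Gauss integral computes this degree: it equals $\int G^*\omega$, where $\omega$ is the normalized area form on $\mathbb{S}^2$. Alternatively, we can keep the integral definition. In that case, compactness gives a positive lower bound $\delta$ on the distance between $H(\mathbb{S}^1\times[0,1])$ and $Y$. We then approximate $H$ uniformly within $\delta/2$ by a smooth homotopy, using straight-line homotopies at the two ends, which stay at distance at least $\delta/2$ from $Y$. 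Along the smooth homotopy, $\tau\mapsto\int G_\tau^*\omega$ is continuous and integer-valued, hence constant. The end adjustments do not change the value, by the same reasoning. Either way the integer-valuedness of degree carries the argument, and the remaining details are routine.
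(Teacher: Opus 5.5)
Your proposal is correct and takes essentially the same route as the paper. The paper's proof also observes that the Gauss map varies continuously along the homotopy, is well defined because the homotopy avoids $Y$, and so has constant degree; equivalently, $t\mapsto \lk(X_t,Y)$ is a continuous integer-valued function on $[0,1]$. Your handling of non-simple intermediate curves and your smoothing argument reconciling the degree with the Gauss integral are careful additions that the paper leaves implicit.
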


\begin{proof}[Proof sketch]
The function $t \mapsto \lk(X_t, Y)$ is continuous and integer-valued on $[0,1]$, hence constant.
\end{proof}

\begin{lemma}[Rank-deficient intersection]
\label{lem:rank-deficient-intersection}
Let $X, Y \subset \R^3$ be disjoint curves with $\lk(X, Y) \neq 0$. For any rank-deficient affine $f: \R^3 \to \R^3$, we have $f(X) \cap f(Y) \neq \varnothing$.
\end{lemma}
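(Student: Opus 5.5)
We argue by contradiction. Suppose $f(x)=Wx+b$ with $\rank W\le 2$ and $f(X)\cap f(Y)=\varnothing$. The image of $f$ lies in an affine plane $P\subset\R^3$, or a line or point when the rank is smaller. The plan is to show that the linking number can be computed after pushing $X$ toward its image, and that $f(X)$ and $f(Y)$, lying in a common plane, cannot link.

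\emph{Step 1: interpolate to the image.} Consider the straight-line homotopies $X_t=(1-t)x+t f(x)$ for $x\in X$ and $Y_t=(1-t)y+tf(y)$ for $y\in Y$, $t\in[0,1]$. Linking number is a link-homotopy invariant, and Lemma~\ref{lem:homotopy} applies to moving one component in the complement of the other. So it suffices that $X_t\cap Y_t=\varnothing$ for every $t$; then $\lk(X,Y)=\lk(f(X),f(Y))$.

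The obstacle is that this disjointness may fail: the map $g_t=(1-t)\,\mathrm{id}+tf$ is affine with linear part $(1-t)I+tW$, which is singular exactly when $-(1-t)/t$ is an eigenvalue of $W$. For $t<1$ this means $W$ has a negative real eigenvalue, and only then can $g_t$ collapse points.

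To avoid this, first reduce $f$ to a projection. Write $f=A\circ\pi$, where $\pi$ is the orthogonal projection onto $(\ker W)^\perp$ (a subspace of dimension at most 2) and $A$ is affine and injective on that subspace. Then $f(x)=f(y)$ iff $\pi(x)=\pi(y)$, so it suffices to treat $f=\pi$ an orthogonal projection onto a subspace $V$ with $\dim V\le 2$.

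For $g_t=(1-t)\,\mathrm{id}+t\pi$, the linear part is the identity on $V$ and $(1-t)$ times the identity on $V^\perp$. This is invertible for $t<1$, so $X_t\cap Y_t=\varnothing$ for $t<1$. At $t=1$ the images are disjoint by hypothesis. Hence the homotopy is a link homotopy throughout, and $\lk(X,Y)=\lk(\pi(X),\pi(Y))$. Here the image curves may fail to be simple, but the Gauss-integral and Gauss-map-degree definitions make sense for disjoint closed loops, so this causes no trouble.

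\emph{Step 2: planar links have linking number zero.} The curves $\pi(X)$ and $\pi(Y)$ are disjoint closed curves in a plane $V$. The Gauss map $G(x,y)=(x-y)/\lVert x-y\rVert$ then takes values in the great circle $\mathbb{S}^2\cap V$, so it is not surjective and has degree $0$. Equivalently, the integrand $(x-y)\cdot(dx\times dy)$ vanishes identically, since all three vectors lie in $V$. If $\dim V\le 1$ the same argument applies even more directly. Therefore $\lk(X,Y)=0$, contradicting the hypothesis, so $f(X)\cap f(Y)\neq\varnothing$.

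The main obstacle is Step 1: the naive straight-line homotopy to a general rank-deficient $f$ can create intersections, and factoring $f$ through an orthogonal projection is what removes them.
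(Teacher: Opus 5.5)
Your proof is correct. It shares the paper's first move: replace $f$ by the orthogonal projection $\pi$ along $\ker W$, using $f(x)=f(y)\iff\pi(x)=\pi(y)$. It then finishes by a different mechanism.

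The paper's sketch, and its $\R^3$ remark after Lemma~\ref{lem:general-rank-deficient}, applies the crossing formula directly. Disjoint projections mean there are no $X$--$Y$ crossings, so the signed count is $0$, with no homotopy at all. Its general-$\R^d$ proof instead slides one component along a unit $v\in\ker A$. Because $f(N+tv)=f(N)$, no collision can occur. Once the pair is far enough apart it is linearly separable, and Lemma~\ref{lem:higher-dim-linear-separability-appendix} (contraction to points) forces $\lk=0$.

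You instead collapse both components onto $V=(\ker W)^\perp$ via $g_t=(1-t)\,\mathrm{id}+t\pi$, which is invertible for $t<1$. You then kill the degree by the non-surjectivity of a planar Gauss map.

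What each route buys:
\begin{itemize}
\item Your version is self-contained and extends verbatim to $\R^d$, since $\mathbb{S}^{d-1}\cap V$ is a proper subsphere.
\item It also sidesteps the fact that the crossing formula is stated only for regular projections. Projection along $\ker W$ need not be regular, so the paper's sketch implicitly needs a small tilt of the direction. That tilt is harmless, since disjoint compact images stay disjoint.
\item The paper's translation route reuses the separability lemma that the main theorem needs anyway.
\end{itemize}

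Your Step~1 can actually be dropped. If $\pi(X)\cap\pi(Y)=\varnothing$, the Gauss map of the original pair never takes the value $u$ for any unit $u\in\ker W$. That would require $x-y\in\ker W$, i.e.\ $\pi(x)=\pi(y)$. So the Gauss map is already non-surjective and has degree $0$. This is precisely the ``zero crossings'' content of the paper's formula.
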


\begin{proof}[Proof sketch]
Reduce via SVD to projection. If projected curves were disjoint, the combinatorial formula gives $\lk = 0$, contradiction.
\end{proof}

\begin{figure}[t]
\caption{Rank-deficient transformations force intersection.}
\begin{center}
\begin{tikzpicture}[scale=0.55]
    \begin{scope}[xshift=-4.5cm]
        \tdplotsetmaincoords{70}{110}
        \begin{scope}[tdplot_main_coords, scale=0.7]
            \draw[red, thick] plot[domain=0:360, samples=60] ({1.2*cos(\x)}, {1.2*sin(\x)}, {0.5*sin(\x)});
            \draw[blue, thick] plot[domain=0:360, samples=60] ({0.8 + 1.2*cos(\x)}, {1.2*sin(\x)}, {-0.5*sin(\x)});
        \end{scope}
        \node[below, font=\small] at (0, -2) {$\R^3$: $\lk \neq 0$};
    \end{scope}

    \draw[->, thick] (-1.5, 0) -- (-0.5, 0) node[midway, above, font=\small] {$f$};

    \begin{scope}[xshift=2cm]
        \draw[red, thick] (0,0) circle (1);
        \draw[blue, thick] (0.6,0) circle (1);
        \fill[black] (0.3, 0.9) circle (2pt);
        \fill[black] (0.3, -0.9) circle (2pt);
        \node[below, font=\small] at (0, -2) {$\R^2$: intersection};
    \end{scope}
\end{tikzpicture}
\end{center}
\vspace{-1em}
\end{figure}

\subsection{Classification Requires Unlinking}

Linear separability is the goal of representation learning for classification: the final layer (classifier head) applies a linear transformation to features, so perfect classification requires that class representations become linearly separable. The task imposes a strict topological constraint:

\begin{proposition}[Linear readouts require unlinking]
\label{prop:classification-requires-unlinking}
Let $X, Y \subset \R^3$ be disjoint oriented closed curves with $\lk(X, Y) \neq 0$. If a continuous feature map $F: \R^3 \to \R^3$ makes $F(X)$ and $F(Y)$ linearly separable, then $\lk(F(X), F(Y)) = 0$; the feature map must unlink the classes.
\end{proposition}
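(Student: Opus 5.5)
Write $H = \{z : w\cdot z = c\}$ for a separating hyperplane, chosen so that $F(X) \subset \{w\cdot z < c\}$ and $F(Y) \subset \{w\cdot z > c\}$. The plan is to show $\lk(F(X),F(Y))=0$ by contracting $F(X)$ inside its own open half-space, which $F(Y)$ never meets, and then invoking Lemma~\ref{lem:homotopy}. One caveat comes first. $F(X)$ and $F(Y)$ need not be simple closed curves, so we interpret $\lk(F(X),F(Y))$ as the linking number of the disjoint closed loops $F\circ\gamma_X$ and $F\circ\gamma_Y$, where $\gamma_X,\gamma_Y$ parametrize $X,Y$. This is the Gauss-integral degree of $G(s,t) = (F\gamma_X(s) - F\gamma_Y(t))/\lVert\cdot\rVert$, which is well defined for any disjoint pair of continuous loops, after a smoothing approximation if needed. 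Homotopy invariance then holds in the form "homotopies through loops that stay disjoint from the other loop," and that is exactly the content of Lemma~\ref{lem:homotopy}.

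\textbf{Step 1.} Fix any point $p$ with $w\cdot p < c$, for instance $p = F(\gamma_X(0))$. Define the straight-line homotopy $X_\tau(s) = (1-\tau)\,F\gamma_X(s) + \tau p$ for $\tau\in[0,1]$.

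\textbf{Step 2.} The open half-space $\{w\cdot z<c\}$ is convex, so $X_\tau(s)$ stays in it for every $\tau$ and $s$. Hence $X_\tau$ never meets $F(Y)$, which lies in the opposite half-space.

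\textbf{Step 3.} By Lemma~\ref{lem:homotopy}, $\lk(F(X),F(Y)) = \lk(X_1, F(Y))$, where $X_1$ is the constant loop at $p$. For a constant loop the Gauss map factors through a map $S^1 \to S^2$, which has degree zero. Equivalently, the integrand contains $dx = 0$, so the integral vanishes. This gives $\lk(F(X),F(Y))=0$.

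Since $\lk(X,Y)\neq 0$, this also shows that such an $F$ must change the linking number, which is the "must unlink" phrasing. No relation between $F$ and the original link is actually needed beyond continuity, and that hypothesis only serves to make $F\circ\gamma_X$ and $F\circ\gamma_Y$ continuous loops.

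The main obstacle is the degenerate-image issue raised at the start: the paper defines $\lk$ only for simple closed curves. I would handle it by defining $\lk$ for disjoint continuous loops via the degree of the Gauss map on $S^1\times S^1$, which is continuous and homotopy-invariant by standard degree theory. If one insists on simple curves, a small perturbation keeps the loops embedded and inside their respective open half-spaces, since the two images are compact and at positive distance from $H$, so the same contraction argument applies unchanged.
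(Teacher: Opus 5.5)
Your argument is correct and is essentially the paper's proof. Both use convexity of the open half-spaces to build a collision-free straight-line contraction, and then appeal to homotopy invariance of the Gauss-map degree. The only difference is cosmetic: you contract just $F(X)$ while holding $F(Y)$ fixed, so the endpoint Gauss map factors through $\mathbb{S}^1$ and has degree zero, whereas the paper contracts both images to points $p$ and $q$, so its endpoint Gauss map is constant.
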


\begin{proof}
A separating hyperplane $H$ puts $F(X)$ in one open half-space and $F(Y)$ in the other. Each half-space is convex, so straight-line contractions of $F(X)$ and $F(Y)$ to points $p$ and $q$ inside their respective half-spaces stay disjoint. This gives a link homotopy to two point components. At the endpoint, the Gauss map is constant, hence has degree zero; by link-homotopy invariance, $\lk(F(X), F(Y)) = 0$.
\end{proof}

This exposes the conflict: to classify the Hopf link perfectly, a network must change the linking number from $\pm 1$ to $0$.

\subsection{Separation Impossibility for the Hopf Link}

\begin{theorem}[Link separation impossibility]
\label{thm:hopf}
Let $X, Y \subset \R^3$ be disjoint simple closed curves with $\lk(X, Y) \neq 0$.
Let $F: \R^3 \to \R^3$ be any width-3 feedforward network with affine transformations and ReLU activations. Then $F(X)$ and $F(Y)$ are not linearly separable, and perfect classification is impossible.
\end{theorem}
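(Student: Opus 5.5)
We argue by induction along the layers of $F = A_L \circ \sigma \circ \cdots \circ \sigma \circ A_1$, maintaining the invariant that the images of $X$ and $Y$ at every intermediate stage are disjoint closed curves with the same nonzero linking number as $(X,Y)$, or else are already forced to intersect. Linear separability then fails at the output by Proposition~\ref{prop:classification-requires-unlinking}. We write $X_k, Y_k$ for the images after $k$ operations, where each operation is either an affine layer or a ReLU.

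\emph{Affine layers.} If $W_i$ is invertible, $A_i$ is a homeomorphism of $\R^3$. It keeps $X_k, Y_k$ disjoint and preserves $\lk$ up to sign: an ambient homeomorphism preserves the linking number when orientation-preserving and negates it when orientation-reversing, so nonvanishing is kept. If $W_i$ is rank-deficient, Lemma~\ref{lem:rank-deficient-intersection} gives $A_i(X_k)\cap A_i(Y_k)\neq\varnothing$. Once the images share a point $p$, every later map sends $p$ to a common point of both images, so the final $F(X)$ and $F(Y)$ intersect and cannot be separated by a hyperplane.

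\emph{ReLU layers.} This is the main step. Consider the straight-line homotopy $H_t(z) = (1-t)z + t\,\sigma(z)$, applied coordinatewise. Each coordinate map $h_t(u) = (1-t)u + t\max(u,0)$ is nondecreasing, and it is strictly increasing for $t<1$. Hence $H_t$ is injective for $t<1$, so $H_t(X_k)$ and $H_t(Y_k)$ remain disjoint. By Lemma~\ref{lem:homotopy}, applied with both components moving (or equivalently via the continuity of the Gauss integral in $t$), $\lk(H_t(X_k),H_t(Y_k))$ is constant on $[0,1)$.

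The obstacle is the endpoint $t=1$, where ReLU collapses coordinates and is not injective. There are two cases.
\begin{itemize}
\item If $\sigma(X_k)\cap\sigma(Y_k)\neq\varnothing$, we are in the intersecting case and finish as above.
\item If the images are disjoint, then since $X_k\times Y_k$ is compact, the distance $|H_t(x)-H_t(y)|$ is bounded below uniformly in $t$ near $1$. The Gauss integral is therefore continuous up to $t=1$, and $\lk$ is preserved.
\end{itemize}
A subtlety is that $\sigma(X_k)$ need not be a simple closed curve. We handle this by working throughout with linking numbers of disjoint closed loops, i.e.\ with the degree of the Gauss map $X\times Y\to\mathbb{S}^2$, $(x,y)\mapsto (G_k(x)-G'_k(y))/|\cdot|$, for the composed maps $G_k, G'_k$. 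This degree is defined whenever the two loop images are disjoint and is invariant under homotopies that keep them disjoint. The proof of Proposition~\ref{prop:classification-requires-unlinking} only uses this degree formulation, so it applies directly.

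\emph{Conclusion.} At the output, either $F(X)\cap F(Y)\neq\varnothing$, or the loops are disjoint with $\lk(F(X),F(Y))=\pm\lk(X,Y)\neq0$. In both cases no hyperplane separates them: in the first case trivially, in the second by Proposition~\ref{prop:classification-requires-unlinking}. Therefore perfect classification by a linear readout is impossible.
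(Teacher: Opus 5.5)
Your proposal is correct and follows the paper's proof. Both are layer-by-layer arguments in the parametrized (Gauss-map degree) sense: invertible affine layers preserve $\lk$ up to sign, rank-deficient layers force an intersection (Lemma~\ref{lem:rank-deficient-intersection}) that propagates to the output, ReLU layers preserve $\lk$ through a link homotopy, and Proposition~\ref{prop:classification-requires-unlinking} closes the argument. The one cosmetic difference is the ReLU step: you use the simultaneous straight-line homotopy $(1-t)z+t\,\sigma(z)$, which is injective for $t<1$ (the paper's Lemma~\ref{lem:general-monotonic} specialized to ReLU), while the main-text sketch moves one coordinate at a time.
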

\begin{proof}[Proof sketch]
Assume for contradiction that linear separability is achieved. Proposition~\ref{prop:classification-requires-unlinking} gives $\lk(F(X), F(Y)) = 0$; moreover, any intermediate collision between the two class images would propagate through later layers and prevent final linear separation, so the class images stay disjoint throughout the network. We analyze each layer:

\textbf{Invertible affine layers:} Homeomorphisms of $\R^3$ preserve linking numbers (up to sign).

\textbf{ReLU activations:} We apply homotopies \emph{one coordinate at a time}. For coordinate $i$, define $H^{(i)}_t$ that interpolates $x_i \to \ReLU(x_i)$ while other coordinates stay fixed. Suppose paths collide: $H^{(i)}_t(x)_i = H^{(i)}_t(y)_i$ for some $t$. If $x_i \geq 0$, the path is constant at $x_i$; if $x_i < 0$, it traverses $[x_i, 0]$. In all cases, collision implies $\ReLU(x_i) = \ReLU(y_i)$. Since $\sigma(X) \cap \sigma(Y) = \varnothing$ by assumption, no collision occurs. Concatenating $H^{(1)}, H^{(2)}, H^{(3)}$ gives $\lk(\sigma(X), \sigma(Y)) = \lk(X, Y)$.

\textbf{Rank-deficient layers:} By Lemma~\ref{lem:rank-deficient-intersection}, these create intersections when $\lk \neq 0$.

Since ReLU preserves $\lk$ and invertible affines preserve it up to sign, changing it from non-zero to zero requires rank-deficient layers. But these force intersections.
\end{proof}
This example demonstrates the core mechanism: linking numbers create barriers that narrow monotonic networks cannot overcome, regardless of depth. The next section generalizes to higher-dimensional linked manifolds and arbitrary coordinatewise monotonic activations.

\section{Higher Dimension, More Activations}
\label{sec:generalization}
The Hopf link example gives the core mechanism in a visual setting. In this section we switch notation from the curves $X,Y \subset \R^3$ of Section~\ref{sec:hopf-example} to manifolds $M^m,N^n \subset \R^d$ (dimensions $m,n$ indicated when needed), and generalize from ReLU to arbitrary coordinatewise monotonic activations.

\subsection{Higher-Dimensional Linking Numbers}
\label{sec:higher-dim}
The concept of linking generalizes from curves to higher-dimensional manifolds. For two closed, oriented, disjoint manifolds $M^m$ and $N^n$ in ambient space $\R^d$ where $d = m + n + 1$, the linking number is well-defined via degree.

In the layerwise arguments below, $F$ denotes the composition of the first $j$ network layers, for some $j$, applied to both manifolds. The images $F(M)$ and $F(N)$ may no longer be manifolds; whenever they remain disjoint, $\lk(F(M), F(N))$ means the linking number of the maps $F\circ\iota_M$ and $F\circ\iota_N$ parametrizing the images $F(M)$ and $F(N)$.

\begin{definition}[Higher-dimensional linking number]
\label{def:higher-dim-linking}
For two disjoint, closed, oriented manifolds $M^m$ and $N^n$ in $\R^{d}$ where $d=m+n+1$, the linking number $\lk(M, N)$ is defined via the Gauss map $G: M \times N \to \mathbb{S}^{d-1}$ given by $G(x,y) = (x-y)/|x-y|$:
$$\lk(M, N) = \deg(G)$$
where $\deg(G)$ is the topological degree.
\end{definition}

\begin{example}[Higher Hopf links: $\mathbb{S}^n \sqcup \mathbb{S}^n$ in $\R^{2n+1}$]
\label{ex:higher-hopf}
Two $n$-spheres can be linked in $\R^{2n+1}$ with $\lk = \pm 1$. For $n=1$, this recovers the classical Hopf link ($\mathbb{S}^1 \sqcup \mathbb{S}^1 \subset \R^3$). For $n=2$, we obtain linked 2-spheres $\mathbb{S}^2 \sqcup \mathbb{S}^2 \subset \R^5$, which we use in experiments (Section~\ref{sec:higher-dim-experiments}). The construction embeds each sphere into complementary coordinate subspaces via stereographic projection.
\end{example}

The key lemmas from Section~\ref{sec:hopf-example} generalize:

\begin{lemma}[Higher-dimensional homotopy invariance]
\label{lem:higher-dim-homotopy}
Let $M^m, N^n \subset \R^d$ (with $d=m+n+1$) be disjoint manifolds. If $M'$ is homotopic to $M$ via a homotopy avoiding $N$, then $\lk(M', N) = \lk(M, N)$.
\end{lemma}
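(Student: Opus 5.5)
The idea is to realize the linking number as the degree of a Gauss map and use homotopy invariance of degree. Let $h: M \times [0,1] \to \R^d$ be the given homotopy, with $h_0 = \iota_M$ the inclusion and $h_1$ parametrizing $M'$, and with $h_t(x) \notin N$ for all $x \in M$, $t \in [0,1]$. This avoidance hypothesis is exactly what we need. Following the convention stated just before Definition~\ref{def:higher-dim-linking}, $\lk(M', N)$ is the linking number of the maps $h_1$ and $\iota_N$, so we never need $M'$ to be an embedded manifold. In that sense the statement is about maps $M \to \R^d$.

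Define $\Gamma: M \times N \times [0,1] \to \mathbb{S}^{d-1}$ by
\[
\Gamma(x,y,t) = \frac{h_t(x) - y}{\lVert h_t(x) - y \rVert}.
\]
The denominator never vanishes, since $h_t(x) \notin N$. Hence $\Gamma$ is continuous, being a composition of continuous maps on the set where the norm is nonzero. Its two ends are $\Gamma(\cdot,\cdot,0) = G_{M,N}$ and $\Gamma(\cdot,\cdot,1) = G_{M',N}$, the two Gauss maps. So the Gauss maps are homotopic as maps from the closed, oriented $(d-1)$-manifold $M \times N$ to $\mathbb{S}^{d-1}$; here $\dim(M \times N) = m+n = d-1$. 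Homotopy invariance of the topological degree then gives $\deg G_{M,N} = \deg G_{M',N}$, which is $\lk(M,N) = \lk(M',N)$ by Definition~\ref{def:higher-dim-linking}.

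The one point needing care is that degree must be well defined for the merely continuous map $G_{M',N}$. Use the homological definition: $\deg$ is the integer by which the map acts on $H_{d-1}(\cdot;\Z) \cong \Z$, computed with the product orientation on $M \times N$. This makes sense for any continuous map, and homotopic maps induce the same map on homology, so nothing more is needed. If one instead prefers the smooth or Gauss-integral picture of Lemma~\ref{lem:homotopy}, approximate $h$ by a smooth homotopy. The set $h(M\times[0,1])$ is compact and disjoint from the compact set $N$, so it sits at positive distance $\delta$ from $N$. Any approximation within $\delta/2$ still avoids $N$ and is homotopic to $h$ through maps avoiding $N$. Then $t \mapsto \deg \Gamma_t$ is a continuous integer-valued function on $[0,1]$, hence constant, which is the same argument as the proof sketch of Lemma~\ref{lem:homotopy}. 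I expect this regularity bookkeeping to be the only real obstacle; the topological content is just homotopy invariance of degree.
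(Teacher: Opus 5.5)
Your proof is correct and is essentially the paper's argument (its appendix ``link homotopy invariance'' lemma): avoidance keeps the Gauss map well defined and continuous along the homotopy, and degree is a homotopy invariant, equivalently $t \mapsto \lk$ is a continuous $\Z$-valued function on $[0,1]$ and hence constant. Your extra care in defining the degree of a merely continuous Gauss map (homologically, or via a $\delta/2$ smooth approximation) is a welcome refinement that the paper leaves implicit; one small point is that if $M\times N$ is disconnected, you should read the homological degree off $G_*[M\times N]$ rather than from an action on $H_{d-1}\cong\Z$.
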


\begin{lemma}[Higher-dimensional linear separability]
\label{lem:higher-dim-linear-separability}
If $M^m$ and $N^n$ in $\R^d$ (with $d=m+n+1$) are linearly separable, then $\lk(M, N) = 0$.
\end{lemma}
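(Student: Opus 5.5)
We mirror the proof of Proposition~\ref{prop:classification-requires-unlinking}, replacing the closed curves by the manifolds $M^m,N^n$ and the Gauss integral by the degree of Definition~\ref{def:higher-dim-linking}. Suppose an affine hyperplane $H=\{z:\langle w,z\rangle=c\}$ separates $M$ and $N$. After possibly swapping roles, $M\subset U_+=\{\langle w,z\rangle>c\}$ and $N\subset U_-=\{\langle w,z\rangle<c\}$. Each open half-space is convex. Since $M$ is compact and lies in the open set $U_+$, we may fix a point $p\in U_+$.

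The first step is to build a homotopy of $M$ that avoids $N$. We take the straight-line homotopy $H_t(x)=(1-t)x+tp$ for $x\in M$, $t\in[0,1]$. By convexity, $H_t(x)\in U_+$ for all $t$, so $H_t(M)\cap N=\varnothing$ throughout because $U_+\cap U_-=\varnothing$. The Gauss map $G_t(x,y)=(H_t(x)-y)/|H_t(x)-y|$ is therefore well defined and continuous on $M\times N\times[0,1]$. Lemma~\ref{lem:higher-dim-homotopy}, equivalently homotopy invariance of degree, gives $\deg G_0=\deg G_1$. Here $H_1$ is the constant map to $p$, not an embedding; this is harmless because the paper already interprets $\lk$ for maps parametrizing images.

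The second step is to show $\deg G_1=0$. At $t=1$, $G_1(x,y)=(p-y)/|p-y|$ depends only on $y$, so $G_1$ factors through the projection $M\times N\to N$. Since $\dim N=n<d-1=m+n$ whenever $m\ge 1$, the map $N\to\mathbb S^{d-1}$ is not surjective: a smooth approximation has measure-zero image, and a continuous map can be homotoped to a smooth one. Hence $G_1$ is homotopic to a map that misses a point, and its degree is $0$. Alternatively, if $m=0$, we contract $N$ to a point $q\in U_-$ instead, so that the Gauss map factors through $M$ with $\dim M=m<m+n$ whenever $n\ge1$. If $m=n=0$, then $d=1$ and we use the zero-dimensional degree directly: a separating point places all of $M$ on one side of all of $N$, and the signed count gives $0$. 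A cleaner uniform route is to contract both manifolds simultaneously, to $p$ and to $q$, which makes $G_1$ constant and thus of degree $0$.

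The main obstacle is essentially bookkeeping. We must ensure that homotopy invariance applies to possibly non-embedded maps and, in the simultaneous contraction, that the two moving images never meet. The latter follows from $H_t(M)\subset U_+$ and $K_t(N)\subset U_-$ for all $t$. Therefore $\lk(M,N)=\deg G_0=\deg G_1=0$.
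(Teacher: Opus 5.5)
Your proof is correct. The ``cleaner uniform route'' you sketch at the end is precisely the paper's argument: contract $M$ to $p$ and $N$ to $q$ simultaneously along straight lines inside their respective open half-spaces, observe that this is a link homotopy, and conclude from Lemma~\ref{lem:higher-dim-homotopy-appendix} that $\lk(M,N)$ equals the degree of the constant terminal Gauss map, namely $0$.

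Your main route instead moves only one component. This fits the hypothesis of Lemma~\ref{lem:higher-dim-homotopy} verbatim, since one set deforms while the other stays fixed. The cost is two extra ingredients: a dimension count and a case split on which component to contract. The dimension count says that a map $M\times N\to\mathbb{S}^{m+n}$ factoring through a manifold of dimension less than $m+n$ has degree $0$, by Sard or because that manifold's top homology vanishes.

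The case split is not cosmetic. If $m=0$ and you contracted $M$, the terminal map would factor through the $(d-1)$-dimensional $N$. Its degree would then be a multiple of the degree of $y\mapsto (p-y)/|p-y|$ on $N$, which is the original question with $M$ replaced by a point. You correctly avoid this by contracting $N$ instead. The simultaneous contraction makes the terminal Gauss map literally constant, so it needs neither the dimension count nor the case split.

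One small correction concerns your $m=n=0$ aside. In $d=1$ the signed count over the value actually attained is $\pm\bigl(\sum_{M}\epsilon\bigr)\bigl(\sum_{N}\epsilon\bigr)$, which need not vanish, because degree into $\mathbb{S}^0$ depends on the chosen regular value. That degenerate case lies equally outside the paper's ``constant map has degree zero'' step, so it is not a defect relative to the paper.
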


\begin{theorem}[Higher-dimensional separation impossibility]
\label{thm:higher-dim}
Let $M^m$ and $N^n$ be disjoint, closed, oriented manifolds in $\R^d$ (with $d=m+n+1$) such that $\lk(M, N) \neq 0$. Then no width-$d$ feedforward network with affine transformations and coordinatewise monotonic activations can linearly separate $M$ and $N$, and perfect classification is impossible.
\end{theorem}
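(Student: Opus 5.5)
We argue by contradiction, following the proof of Theorem~\ref{thm:hopf}. Write $F = A_L \circ \sigma \circ \cdots \circ \sigma \circ A_1$ and let $F_j$ denote the composition of the first $j$ layers (affine or activation), with $F_0 = \mathrm{id}$. Suppose $F(M)$ and $F(N)$ are linearly separable. Then Lemma~\ref{lem:higher-dim-linear-separability}, applied to the parametrizing maps $F\circ\iota_M$ and $F\circ\iota_N$, gives $\lk(F(M),F(N)) = 0$; the degree argument there only uses that the two images lie in disjoint open half-spaces. Moreover, if $F_j(M)\cap F_j(N)\neq\varnothing$ for some $j$, that common point is carried by every later layer to a common point of $F(M)$ and $F(N)$, which contradicts separability. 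Hence all intermediate images $F_j(M), F_j(N)$ are pairwise disjoint, and $\lk(F_j(M),F_j(N))$ is defined for every $j$. We will show that this integer is $\pm\lk(M,N)\neq 0$ at every stage. That contradicts the final value $0$.

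\textbf{Affine layers.} If $A(x)=Wx+b$ with $W$ invertible, then $A$ is an orientation-preserving or reversing homeomorphism. The Gauss map transforms by $G' = \phi\circ G$, where $\phi(u) = Wu/|Wu|$ is a homeomorphism of $\mathbb{S}^{d-1}$ of degree $\operatorname{sign}\det W$. So the linking number is multiplied by $\pm1$. Alternatively, $W$ can be joined to $I$ or to a reflection through $GL_d$, giving an isotopy, and Lemma~\ref{lem:higher-dim-homotopy} applies.

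If $W$ is singular, we need the analogue of Lemma~\ref{lem:rank-deficient-intersection}. The image lies in a hyperplane $P$. Disjoint images inside $P\cong\R^{d-1}$ can be pushed off in the normal direction: translate $A(M)$ by $\varepsilon\nu$ and $A(N)$ by $-\varepsilon\nu$, and then contract each to a point, staying in the two opposite open half-spaces. This is exactly the argument of Lemma~\ref{lem:higher-dim-linear-separability}, and the straight-line homotopy from $A(\cdot)$ to its pushed-off version keeps the images disjoint, since they are disjoint in $P$ and then lie on opposite sides. So the linking number would be $0$. Since the current linking number is $\pm\lk(M,N)\ne0$, the images must intersect, which contradicts the disjointness established above.

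\textbf{Activation layers.} This is the main step. Let $\sigma$ be continuous and monotone (say nondecreasing; the nonincreasing case is handled by composing with $-\mathrm{id}$, an invertible affine map). We homotope one coordinate at a time. For coordinate $i$, let
\[
H^{(i)}_t(x) = \bigl(x_1,\dots,x_{i-1},(1-t)x_i + t\sigma(x_i),x_{i+1},\dots,x_d\bigr),
\]
applied after coordinates $1,\dots,i-1$ have already been replaced by their $\sigma$-values. The key claim is that the two images never collide along this homotopy. Suppose two points $x\in$ (current image of $M$) and $y\in$ (current image of $N$) agree in all coordinates $k\neq i$, and satisfy $(1-t)x_i+t\sigma(x_i) = (1-t)y_i+t\sigma(y_i)$. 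The function $g_t(s)=(1-t)s+t\sigma(s)$ is nondecreasing. For $t<1$ it is strictly increasing, so $x_i=y_i$, which makes $x=y$ and contradicts disjointness. For $t=1$ the equation gives $\sigma(x_i)=\sigma(y_i)$. In either case, applying the remaining coordinatewise maps $\sigma$ for coordinates $>i$ sends $x$ and $y$ to the same point of $\sigma(\text{current})$. This contradicts the disjointness of the post-activation images.

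Concatenating $H^{(1)},\dots,H^{(d)}$ gives a link homotopy from $F_{j}$ to $\sigma\circ F_j$ on $M\sqcup N$. The homotopy acts on both components simultaneously, so we use the version of Lemma~\ref{lem:higher-dim-homotopy} that allows both components to move while staying disjoint. That version holds because $\deg$ of the Gauss map $M\times N\times[0,1]\to\mathbb{S}^{d-1}$ is homotopy invariant. Hence the linking number is unchanged.

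The main obstacle is precisely this collision-avoidance check for general monotone $\sigma$, which need not be strictly monotone and so may have flat pieces. Strictness is what handles $t<1$, and the disjointness of the final images handles $t=1$. Combining the layer cases, $\lk$ stays $\pm\lk(M,N)\neq0$ through all layers, contradicting $\lk(F(M),F(N))=0$. So no separating hyperplane exists, and a linear readout cannot classify perfectly.
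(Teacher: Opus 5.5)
Your handling of invertible affine layers, of disjointness propagating through the network, and of the activation layers is sound. Your one-coordinate-at-a-time homotopy is a valid alternative to the paper's simultaneous straight-line homotopy $(1-t)x+t\sigma'(x)$, which the paper applies after factoring out a reflection. You also correctly make the extra check it requires: a collision at any intermediate stage is pushed forward through the remaining coordinate maps to a collision of the post-activation images.

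The gap is in the singular affine layer. Write $(M',N')$ for the configuration entering that layer, so $\lk(M',N')=\pm\lk(M,N)\neq 0$. Your push-off argument shows only that $\lk(A(M'),A(N'))=0$ whenever $A(M')$ and $A(N')$ are disjoint. That holds for any disjoint pair lying in a hyperplane $P$, since the Gauss map misses $\pm\nu$. It does not conflict with $\lk(M',N')\neq 0$. Nothing you have established prevents the linking number from jumping across a non-invertible layer, and ruling out exactly that jump is the whole content of the rank-deficient lemma. Your argument never uses $M'$, $N'$, or $\ker W$. So the sentence ``since the current linking number is $\pm\lk(M,N)\neq0$, the images must intersect'' does not follow.

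The missing idea is a link homotopy tying the pre-layer configuration to something you control. The paper chooses a unit $v\in\ker W$ and translates the pre-layer $N'$ along $v$. Since $A(y+tv)=A(y)$, a collision $x=y+tv$ with $x\in M'$, $y\in N'$ would give $A(x)=A(y)$, contradicting post-layer disjointness. So $(M',N'+tv)$ for $t\in[0,L]$ is a link homotopy ending in a linearly separable configuration. Hence $\lk(M',N')=0$, a contradiction.

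Alternatively, you can keep your hyperplane computation and add a perturbation step. Pick $\varepsilon>0$ small enough that $W+\varepsilon I$ is invertible and $\varepsilon\max_{p\in M'\cup N'}|p|$ is less than half the distance between the compact sets $A(M')$ and $A(N')$. Then the straight-line homotopy $s\mapsto (W+s\varepsilon I)x+b$ keeps the images disjoint. This gives $\lk(A(M'),A(N'))=\pm\lk(M',N')\neq 0$, which contradicts your computed value $0$.
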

\begin{proof}[Proof sketch]
The argument follows Theorem~\ref{thm:hopf}: invertible affine maps preserve $\lk$ up to sign as homeomorphisms; monotonic activations preserve $\lk$ via the same straight-line homotopy argument (which works in any dimension); rank-deficient transformations force intersections when $\lk \neq 0$.
\end{proof}

\subsection{General Monotonic Activations}
\label{sec:general-monotonic}
The Hopf link analysis used ReLU, but the impossibility extends to any \emph{coordinatewise monotonic} activation $\sigma(x) = (\sigma_1(x_1), \ldots, \sigma_n(x_n))$ where each $\sigma_i$ is continuous and monotonic, e.g., sigmoid, tanh, Leaky ReLU, ELU. This shows the obstruction is fundamental to monotonicity, not an artifact of ReLU's piecewise-linear form.

\begin{figure}[t]
\centering
\begin{tikzpicture}[scale=0.45]
    \begin{scope}[xshift=-6.5cm, yshift=3.5cm]
        \draw[->] (-1.8,0) -- (1.8,0);
        \draw[->] (0,-0.5) -- (0,2);
        \draw[blue, very thick] (-1.5,0) -- (0,0) -- (1.5,1.5);
        \node[below] at (0,-0.7) {\small ReLU};
    \end{scope}
    \begin{scope}[xshift=-2.2cm, yshift=3.5cm]
        \draw[->] (-1.8,0) -- (1.8,0);
        \draw[->] (0,-0.5) -- (0,2);
        \draw[blue, very thick, domain=-1.5:1.5, samples=50] plot (\x, {1.5/(1+exp(-2*\x))});
        \node[below] at (0,-0.7) {\small Sigmoid};
    \end{scope}
    \begin{scope}[xshift=2.2cm, yshift=3.5cm]
        \draw[->] (-1.8,0) -- (1.8,0);
        \draw[->] (0,-1.2) -- (0,1.5);
        \draw[blue, very thick, domain=-1.5:1.5, samples=50] plot (\x, {tanh(2*\x)});
        \node[below] at (0,-1.4) {\small Tanh};
    \end{scope}
    \begin{scope}[xshift=6.5cm, yshift=3.5cm]
        \draw[->] (-1.8,0) -- (1.8,0);
        \draw[->] (0,-0.5) -- (0,2);
        \draw[blue, very thick] (-1.5,-0.3) -- (0,0) -- (1.5,1.5);
        \node[below] at (0,-0.7) {\small Leaky};
    \end{scope}

    \begin{scope}[xshift=-6.5cm, yshift=-1cm]
        \draw[->] (-1.8,0) -- (1.8,0);
        \draw[->] (0,-0.5) -- (0,2);
        \draw[red, very thick] (-1.5,1.5) -- (0,0) -- (1.5,1.5);
        \node[below] at (0,-0.7) {\small Abs};
    \end{scope}
    \begin{scope}[xshift=-2.2cm, yshift=-1cm]
        \draw[->] (-1.8,0) -- (1.8,0);
        \draw[->] (0,-0.8) -- (0,2);
        \draw[red, very thick, domain=-1.8:1.5, samples=80] plot (\x, {\x * 1/(1+exp(-1.702*\x))});
        \node[below] at (0,-1.0) {\small GELU};
    \end{scope}
    \begin{scope}[xshift=2.2cm, yshift=-1cm]
        \draw[->] (-1.8,0) -- (1.8,0);
        \draw[->] (0,-0.8) -- (0,2);
        \draw[red, very thick, domain=-1.8:1.5, samples=80] plot (\x, {\x * 1/(1+exp(-\x))});
        \node[below] at (0,-1.0) {\small Swish};
    \end{scope}
    \begin{scope}[xshift=6.5cm, yshift=-1cm]
        \draw[->] (-1.8,0) -- (1.8,0);
        \draw[->] (0,-0.8) -- (0,2);
        \draw[red, very thick, domain=-1.8:1.5, samples=80] plot (\x, {\x * tanh(ln(1+exp(\x)))});
        \node[below] at (0,-1.0) {\small Mish};
    \end{scope}
\end{tikzpicture}
\caption{Monotonic (top) vs nonmonotonic (bottom) activations.}
\label{fig:monotonic-activations}
\end{figure}

The key to generalizing beyond ReLU is the concept of \emph{link homotopy}: a simultaneous continuous deformation of multiple components that keeps them disjoint throughout. Unlike single-component homotopy, where one set moves while others stay fixed, link homotopy moves all components together via the same formula $H_t(x) = (1-t)x + t\sigma(x)$. This simultaneity is essential: if only one component moves, it can collide with stationary components even for nondecreasing activations.

\begin{lemma}[Monotonic activation preservation]
\label{lem:monotonic-homotopy}
Let $\sigma: \R^d \to \R^d$ be coordinatewise monotonic with $r$ nonincreasing coordinates. If $M, N \subset \R^d$ are disjoint compact manifolds such that $\sigma(M)$ and $\sigma(N)$ remain disjoint, then:
\[
\lk(\sigma(M), \sigma(N)) = (-1)^r \lk(M, N).
\]
\end{lemma}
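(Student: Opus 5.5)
The strategy is to factor $\sigma$ as a composition of a coordinatewise \emph{nondecreasing} map and a linear reflection, and treat the two pieces separately. Let $S \subset \{1,\dots,d\}$ be the $r$ indices where $\sigma_i$ is nonincreasing. Let $R$ be the diagonal reflection negating those coordinates. Write
\[
\sigma = \tau \circ R, \qquad \tau_i(u) = \sigma_i(-u) \text{ for } i \in S, \qquad \tau_i = \sigma_i \text{ otherwise}.
\]
Each $\tau_i$ is continuous and nondecreasing.

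\textbf{The reflection.} $R$ is a linear isometry with $\det R = (-1)^r$. Its Gauss map satisfies $G_{RM,RN}(Rx,Ry) = R\,G_{M,N}(x,y)$. Since $R$ restricted to $\mathbb{S}^{d-1}$ has degree $(-1)^r$ and $(x,y)\mapsto(Rx,Ry)$ preserves the orientations of the parametrizations, we get $\lk(RM,RN) = (-1)^r\lk(M,N)$. The sets $RM$ and $RN$ are disjoint, and $\tau(RM)=\sigma(M)$, $\tau(RN)=\sigma(N)$ are disjoint by hypothesis. It therefore suffices to prove the lemma for $r=0$, applied to $\tau$ and the pair $RM, RN$.

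\textbf{Nondecreasing case via a link homotopy.} Assume every $\sigma_i$ is nondecreasing. Mirroring the ReLU argument in Theorem~\ref{thm:hopf}, we deform one coordinate at a time. For $k=1,\dots,d$, let $H^{(k)}_t$ replace coordinate $k$ by $(1-t)u_k + t\sigma_k(u_k)$, leaving coordinates $<k$ already transformed and coordinates $>k$ untouched. Apply $H^{(k)}_t$ simultaneously to both components, starting from the maps produced at stage $k-1$.

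The key claim is that the components never collide. Suppose $H^{(k)}_t(x) = H^{(k)}_t(y)$ with $x$ from $M$ and $y$ from $N$. Every coordinate other than $k$ then agrees between the two points, and those coordinates already equal their final values under $\sigma$ or their original values. For coordinate $k$, the function $u\mapsto (1-t)u+t\sigma_k(u)$ is nondecreasing. Equality of its values at $x_k$ and $y_k$ forces $\sigma_k$ to be constant on $[\min(x_k,y_k),\max(x_k,y_k)]$ when $t>0$, hence $\sigma_k(x_k)=\sigma_k(y_k)$; when $t=0$ it forces $x_k=y_k$ directly. Either way, the stage-$(k-1)$ images already coincide in coordinate $k$ after applying $\sigma_k$.

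We want to push this equality forward to a genuine collision of $\sigma(M)$ and $\sigma(N)$, which is the main obstacle. The coordinates $j>k$ are still untransformed, but since they agree for the two points, applying $\sigma_j$ to both keeps them equal. Thus $\sigma(x)=\sigma(y)$, contradicting $\sigma(M)\cap\sigma(N)=\varnothing$. Note that the simultaneity of the homotopy is exactly what makes this step work: both points share the same later coordinates, so the collision propagates. Continuity of each $H^{(k)}$ is clear.

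\textbf{Concluding.} Concatenating $H^{(1)},\dots,H^{(d)}$ gives a link homotopy from $(\iota_M,\iota_N)$ to $(\sigma\circ\iota_M,\sigma\circ\iota_N)$. By link-homotopy invariance of the degree of the Gauss map, $\lk(\sigma(M),\sigma(N))=\lk(M,N)$. This invariance holds because the Gauss map varies continuously through maps into $\mathbb{S}^{d-1}$ and degree is homotopy invariant; it is the same fact underlying Lemma~\ref{lem:higher-dim-homotopy}, now applied with both components moving. Combining with the reflection step yields $\lk(\sigma(M),\sigma(N))=(-1)^r\lk(M,N)$.
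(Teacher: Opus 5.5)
Your proof is correct and follows the same approach as the paper. You split off the $r$ reflections, run a link homotopy that moves both components simultaneously for the nondecreasing part, and use that the reflection multiplies the Gauss-map degree by $(-1)^r$; the paper places the reflection on the output side ($\sigma = R\circ\sigma'$) rather than your input side ($\sigma=\tau\circ R$), and it interpolates all coordinates at once, ruling out collisions by choosing a coordinate with $x_i\neq y_i$ and noting the strict inequality persists for $t<1$, so your coordinate-by-coordinate staging is valid but not needed.
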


\begin{proof}[Proof sketch]
The straight-line homotopy $H_t(x) = (1-t)x + t\sigma(x)$ applied simultaneously to both $M$ and $N$ defines a link homotopy when $\sigma$ is nondecreasing: if $x_i < y_i$, then monotonicity gives $\sigma_i(x_i) \le \sigma_i(y_i)$, so $H_t(x)_i < H_t(y)_i$ for $t \in [0,1)$, and the assumed disjointness $\sigma(M) \cap \sigma(N) = \varnothing$ at $t=1$ rules out a collision at the endpoint. For nonincreasing coordinates, we decompose $\sigma = R \circ \sigma'$ where $\sigma'$ is nondecreasing and $R$ is the output reflection negating those coordinates; reflections contribute the sign factor $(-1)^r$.
\end{proof}

\begin{theorem}[General impossibility theorem]
\label{thm:general}
Let $M^m, N^n \subset \R^d$ be disjoint closed oriented submanifolds with complementary dimension $m + n + 1 = d$ and $\lk(M, N) \neq 0$. Let $F: \R^d \to \R^d$ be any width-$d$ feedforward network with affine transformations and coordinatewise monotonic activations. Then $F(M)$ and $F(N)$ are not linearly separable, and perfect classification is impossible.
\end{theorem}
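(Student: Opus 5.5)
We argue by contradiction, following the layerwise pattern of Theorem~\ref{thm:hopf}. Write $F = A_L \circ \sigma \circ A_{L-1} \circ \cdots \circ \sigma \circ A_1$ and let $F_j$ denote the composition of the first $j$ layers (affine or activation), so $F_0 = \mathrm{id}$ and $F_{\text{final}} = F$. Suppose $F(M)$ and $F(N)$ are linearly separable. First, every intermediate image pair $F_j(M), F_j(N)$ must be disjoint: if $F_j(x) = F_j(y)$ for some $x \in M$, $y \in N$, then since the remaining layers form a function, $F(x) = F(y)$. That contradicts strict separation by a hyperplane. Hence $\lk(F_j(M), F_j(N))$, interpreted as the linking number of the parametrizing maps $F_j\circ\iota_M$ and $F_j\circ\iota_N$ as set up before Definition~\ref{def:higher-dim-linking}, is defined for every $j$. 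By Lemma~\ref{lem:higher-dim-linear-separability} (equivalently, the convex-contraction argument of Proposition~\ref{prop:classification-requires-unlinking} run in $\R^d$), the final value is $\lk(F(M),F(N)) = 0$.

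Second, we show by induction on $j$ that $\lk(F_j(M),F_j(N)) = \pm \lk(M,N) \neq 0$, which gives the contradiction. There are three cases for the $j$-th layer.
\begin{enumerate}
\item \emph{Invertible affine layer.} $A$ is an affine homeomorphism of $\R^d$. The Gauss map transforms as $G' = \phi \circ G$, where $\phi(v) = Wv/|Wv|$ is a self-map of $\mathbb{S}^{d-1}$ of degree $\operatorname{sign}\det W$. So $\lk$ changes by at most a sign.
\item \emph{Activation layer.} Disjointness of the outputs is guaranteed by the first step, so Lemma~\ref{lem:monotonic-homotopy} applies directly and gives the factor $(-1)^r$.
\item \emph{Rank-deficient affine layer.} We show this case cannot occur while the linking number is nonzero. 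This requires the $d$-dimensional analogue of Lemma~\ref{lem:rank-deficient-intersection}: if $\lk(P,Q) \neq 0$ for disjoint images $P,Q$ in $\R^d$ and $A$ is affine of rank $< d$, then $A(P) \cap A(Q) \neq \varnothing$. This intersection contradicts the disjointness established in the first step.
\end{enumerate}

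The main obstacle is this rank-deficient lemma in general dimension, and in the generality where $P, Q$ are only continuous images of manifolds. My plan is as follows. The map $A$ factors as $A = B \circ \pi_v$, where $\pi_v$ is the orthogonal projection along some unit vector $v \in \ker W$ and $B$ is affine. So it suffices to show that $\pi_v(P) \cap \pi_v(Q) = \varnothing$ forces $\lk = 0$. If the projections are disjoint, the straight-line link homotopy $H_t = (1-t)\,\mathrm{id} + t\,\pi_v$, applied simultaneously to both components, keeps them disjoint. Indeed, $H_t(x) = H_t(y)$ implies $\pi_v(x) = \pi_v(y)$, because the $v^\perp$-components are unchanged along the homotopy. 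By Lemma~\ref{lem:higher-dim-homotopy} (in its link-homotopy form, as used in Proposition~\ref{prop:classification-requires-unlinking}), the linking number equals that of two maps into the hyperplane $v^\perp$. For such maps the Gauss map $G$ lands in the equator $\mathbb{S}^{d-1} \cap v^\perp$. It therefore misses $\pm v$, is not surjective, and has degree $0$. This replaces the crossing-count argument of the $d=3$ sketch.

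Finally, since the layers are finitely many, concatenating the three cases shows the linking number stays $\pm\lk(M,N)\neq 0$ through every layer. This contradicts the value $0$ at the output forced by linear separability. A failure of perfect classification then follows, because a linear readout with perfect accuracy on $M \sqcup N$ would be exactly a separating hyperplane.
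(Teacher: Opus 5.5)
Your proof is correct, and its top level is the paper's argument. You use the same contradiction setup and the same observation that a collision at any layer propagates to the output. You also use the same three-case induction: invertible affine layers change $\lk$ by $\operatorname{sign}\det W$, monotonic activations change it by $(-1)^r$ via Lemma~\ref{lem:monotonic-homotopy}, and rank-deficient layers are excluded.

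The one genuine difference is how you prove the general-$d$ rank-deficient lemma (the paper's Lemma~\ref{lem:general-rank-deficient}). The paper keeps $M$ fixed and translates $N$ along a unit $v\in\ker W$ until a hyperplane normal to $v$ separates the two. It then invokes Lemma~\ref{lem:higher-dim-linear-separability-appendix}. You instead collapse both components simultaneously onto $v^\perp$ via $(1-t)\,\mathrm{id}+t\pi_v$. You then note that the resulting Gauss map lands in an equator, so it misses $\pm v$ and has degree $0$.

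Both versions rest on the same point: motion along $\ker W$ is invisible to the affine map, so disjointness of the images rules out collisions during the homotopy.

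What each buys:
\begin{itemize}
\item The paper's version recycles a lemma it needs anyway, and needs only compactness to choose the translation length.
\item Your version needs the extra standard fact that a non-surjective map to a sphere is null-homotopic. In exchange, it is the direct $d$-dimensional analogue of the projection picture in the paper's $d=3$ sketch, with the crossing count replaced by non-surjectivity of the Gauss map. It also proves the slightly stronger statement that pushing a pair into any hyperplane kills its linking number.
\end{itemize}
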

\begin{proof}[Proof sketch]
By Lemma~\ref{lem:monotonic-homotopy}, monotonic activations preserve $\lk$ up to sign via link homotopy. The proof structure of Theorem~\ref{thm:hopf} carries through: invertible affines preserve $\lk$ up to sign, monotonic activations preserve $\lk$ up to sign, and since $\lk \neq 0$ implies $\pm\lk \neq 0$, rank-deficient layers are required to achieve $\lk = 0$. But rank-deficient layers force intersections.
\end{proof}

The result applies equally to smooth activations (sigmoid, tanh) and piecewise-linear activations (ReLU): the crucial property is monotonicity, not differentiability. Thus the topological barrier is intrinsic to the coordinatewise monotonic structure.

\begin{figure}[t]
\centering
\begin{subfigure}{0.19\columnwidth}
\centering
\begin{tikzpicture}[scale=0.46]
    \tdplotsetmaincoords{70}{110}
    \begin{scope}[tdplot_main_coords]
        \draw[red, thick] plot[domain=0:360, samples=60] ({cos(\x)}, {sin(\x)}, {0});
        \draw[blue, thick] plot[domain=0:360, samples=60] ({0}, {1 + cos(\x)}, {sin(\x)});
        \draw[gray, thin, ->] (-1.5,0,0) -- (1.5,0,0);
        \draw[gray, thin, ->] (0,-1.5,0) -- (0,2,0);
        \draw[gray, thin, ->] (0,0,-1) -- (0,0,1);
    \end{scope}
\end{tikzpicture}
\caption*{\scriptsize Step 1}
\end{subfigure}\hfill
\begin{subfigure}{0.19\columnwidth}
\centering
\begin{tikzpicture}[scale=0.46]
    \tdplotsetmaincoords{70}{110}
    \begin{scope}[tdplot_main_coords]
        \draw[red, thick] plot[domain=0:180, samples=30] ({cos(\x)}, {abs(sin(\x))}, {0});
        \draw[red, thick] plot[domain=180:360, samples=30] ({cos(\x)}, {abs(sin(\x))}, {0});
        \draw[blue, thick] plot[domain=0:360, samples=60] ({0}, {1 + cos(\x)}, {sin(\x)});
        \draw[gray, thin, ->] (-1.5,0,0) -- (1.5,0,0);
        \draw[gray, thin, ->] (0,-1.5,0) -- (0,2,0);
        \draw[gray, thin, ->] (0,0,-1) -- (0,0,1);
    \end{scope}
\end{tikzpicture}
\caption*{\scriptsize Step 2}
\end{subfigure}\hfill
\begin{subfigure}{0.19\columnwidth}
\centering
\begin{tikzpicture}[scale=0.46]
    \tdplotsetmaincoords{70}{110}
    \begin{scope}[tdplot_main_coords]
        \draw[red, thick] plot[domain=0:180, samples=30] ({cos(\x)}, {1 - abs(sin(\x))}, {0});
        \draw[red, thick] plot[domain=180:360, samples=30] ({cos(\x)}, {1 - abs(sin(\x))}, {0});
        \draw[blue, thick] plot[domain=0:360, samples=60] ({0}, {-cos(\x)}, {sin(\x)});
        \draw[gray, thin, ->] (-1.5,0,0) -- (1.5,0,0);
        \draw[gray, thin, ->] (0,-1.5,0) -- (0,2,0);
        \draw[gray, thin, ->] (0,0,-1) -- (0,0,1);
    \end{scope}
\end{tikzpicture}
\caption*{\scriptsize Step 3}
\end{subfigure}\hfill
\begin{subfigure}{0.19\columnwidth}
\centering
\begin{tikzpicture}[scale=0.46]
    \tdplotsetmaincoords{70}{110}
    \begin{scope}[tdplot_main_coords]
        \draw[red, thick] plot[domain=0:180, samples=30] ({abs(cos(\x))}, {1 - abs(sin(\x))}, {0});
        \draw[red, thick] plot[domain=180:360, samples=30] ({abs(cos(\x))}, {1 - abs(sin(\x))}, {0});
        \draw[blue, thick] plot[domain=0:360, samples=60] ({0}, {abs(cos(\x))}, {sin(\x)});
        \draw[gray, thin, ->] (-1.5,0,0) -- (1.5,0,0);
        \draw[gray, thin, ->] (0,-1.5,0) -- (0,2,0);
        \draw[gray, thin, ->] (0,0,-1) -- (0,0,1);
    \end{scope}
\end{tikzpicture}
\caption*{\scriptsize Step 4}
\end{subfigure}\hfill
\begin{subfigure}{0.19\columnwidth}
\centering
\begin{tikzpicture}[scale=0.46]
    \tdplotsetmaincoords{70}{110}
    \begin{scope}[tdplot_main_coords]
        \fill[black, opacity=0.3]
            (0.2, 0, 0.58) -- (0.6, 0, 0.79) -- (0.08, 0.9, 0) -- (0.51, 1.3, 0) -- cycle;
        \draw[red, thick] plot[domain=0:180, samples=30] ({abs(cos(\x))}, {1 - abs(sin(\x))}, {0});
        \draw[red, thick] plot[domain=180:360, samples=30] ({abs(cos(\x))}, {1 - abs(sin(\x))}, {0});
        \draw[blue, thick] plot[domain=0:360, samples=60] ({0}, {abs(cos(\x))}, {abs(sin(\x))});
        \draw[gray, thin, ->] (-1.5,0,0) -- (1.5,0,0);
        \draw[gray, thin, ->] (0,-1.5,0) -- (0,2,0);
        \draw[gray, thin, ->] (0,0,-1) -- (0,0,1);
    \end{scope}
\end{tikzpicture}
\caption*{\scriptsize Step 5}
\end{subfigure}
\caption{Hopf link unlinking via $|x|$ activations (full resolution: Figure~\ref{fig:abs-unlinking}).}
\label{fig:abs-unlinking-small}
\end{figure}

\section{Breaking Topological Constraints: Architectural Mechanisms}
\label{sec:breaking}
The impossibility results in Sections~\ref{sec:hopf-example}--\ref{sec:generalization} demonstrate fundamental limitations of narrow networks with coordinatewise monotonic activations. These constraints can be overcome through architectural approaches while preserving narrow width. We explore four complementary mechanisms, with supporting constructions in Appendices~\ref{app:upper-bound} and~\ref{app:constructions}.

\subsection{Non-Monotonic Activation Functions}
\label{sec:nonmonotonic}
The first approach removes the monotonicity constraint underlying our impossibility theorems. While ReLU and other monotonic activations preserve topological rigidity, nonmonotonic functions create the geometric flexibility needed to overcome linking obstructions.

Important nonmonotonic activations (Figure~\ref{fig:monotonic-activations}, bottom) include GELU \cite{hendrycks2016gelu} ($x \cdot \Phi(x)$, standard in transformers), Swish/SiLU \cite{ramachandran2017swish,elfwing2018silu} ($x \cdot \sigma(x)$), and Mish \cite{misra2020mish} ($x \cdot \tanh(\text{softplus}(x))$). The impossibility results critically depend on Lemma~\ref{lem:homotopy}: monotonic activations preserve linking numbers via straight-line homotopies that avoid intersections. This fails for nonmonotonic functions.

\begin{example}[Nonmonotonic unlinking]
\label{ex:abs-unlinking}
Coordinatewise $|x|$ can unlink the Hopf link through ``folding'' operations that collapse signed coordinates into the positive octant. By translating data appropriately, each folding layer affects only one coordinate at a time: $(x_1, x_2, x_3) \mapsto (|x_1|, x_2, x_3) \mapsto (|x_1|, |x_2|, x_3) \mapsto \cdots$. Figure~\ref{fig:abs-unlinking-small} illustrates.

The same recipe extends to any activation $\sigma$ that has a strict local extremum on an open interval $I \subset \R$, e.g., GELU near $-0.5$, Swish near $-1.3$, Mish near a similar point: For compact data, affine layers rescale the relevant coordinate into $I$, and the nonmonotonicity of $\sigma|_I$ breaks the same homotopy-disjointness argument that fails for $|\cdot|$.
\end{example}

\begin{figure}[t]
\centering
\begin{subfigure}[b]{0.54\columnwidth}
\centering
\begin{tikzpicture}[scale=0.25]
    \begin{scope}[xshift=0cm]
        \draw[->] (-2,0) -- (2,0);
        \draw[->] (0,-0.3) -- (0,2);
        \draw[blue, very thick] (-1.5,0) -- (0,0) -- (1.5,1.5);
        \node[below] at (0,-0.5) {\small$x$};
    \end{scope}
    \node at (3.2,0.7) {$+$};
    \begin{scope}[xshift=6.4cm]
        \draw[->] (-2,0) -- (2,0);
        \draw[->] (0,-0.3) -- (0,2);
        \draw[red, very thick] (-1.5,3) -- (0,0) -- (1.5,0);
        \node[below] at (0,-0.5) {\small$2\ReLU(-x)$};
    \end{scope}
    \node at (9.6,0.7) {$=$};
    \begin{scope}[xshift=12.8cm]
        \draw[->] (-2,0) -- (2,0);
        \draw[->] (0,-0.3) -- (0,2);
        \draw[purple, very thick] (-1.5,1.5) -- (0,0) -- (1.5,1.5);
        \node[below] at (0,-0.5) {\small$|x|$};
    \end{scope}
\end{tikzpicture}
\caption{ResNet identity}
\label{fig:abs-identity}
\end{subfigure}%
\hfill
\begin{subfigure}[b]{0.32\columnwidth}
\centering
\begin{tikzpicture}[xscale=1.9,yscale=1.1]
    \draw[->] (-0.6,0) -- (0.5,0) node[right] {\small$x$};
    \draw[->] (0,-0.05) -- (0,1.0) node[above] {\small$g(x)$};
    \draw[blue,thick,domain=-0.5:0.35,samples=100] plot (\x, {\x + 1 - 1/(1 + exp(-25*\x - 3.5))});
\end{tikzpicture}
\caption{attention V-shape}
\label{fig:attention-v}
\end{subfigure}
\caption{Mechanisms for expressing non-monotonicity in coordinates with monotonic activation functions.}
\end{figure}

\subsection{Skip Connections and Residual Networks}
\label{sec:skip-connections}

Residual networks \cite{he2016deep} maintain monotonic activations but violate the purely feedforward constraint via skip connections. The key insight is that ResNet can express nonmonotonic functions:

\begin{theorem}[ResNet topological expressivity]
\label{thm:skip-connection}
Width-$n$ ResNet architectures using only ReLU activations can perform the same topological transformations as networks with nonmonotonic activations, including unlinking.
\end{theorem}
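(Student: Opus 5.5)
The plan is a simulation argument: show that a width-$n$ ReLU ResNet block can realize exactly, not just approximately, the coordinatewise folding maps used in Example~\ref{ex:abs-unlinking}. Any unlinking sequence built from affine layers and folds $x \mapsto (x_1,\dots,|x_i|,\dots,x_n)$ then transfers verbatim to a ResNet. The basic identity is $|x| = x + 2\ReLU(-x)$ (Figure~\ref{fig:abs-identity}). I take a residual block of the form $h \mapsto h + W_2\,\ReLU(W_1 h + b_1) + b_2$, with all widths equal to $n$. Choosing $W_1 = -e_i e_i^\tp$, $b_1 = 0$, $W_2 = 2 e_i e_i^\tp$, $b_2 = 0$ gives
\[
h \mapsto (h_1,\dots,h_{i-1},\, h_i + 2\ReLU(-h_i),\, h_{i+1},\dots,h_n) = (h_1,\dots,|h_i|,\dots,h_n).
\]
The other rows of $W_1$ are zero. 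Their ReLU outputs are therefore $0$ and are killed by $W_2$, so the hidden width never exceeds $n$.

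Next I handle the affine layers. A residual block with $W_2 = 0$ and bias $b_2$ is just a translation. A general affine map $A(h) = Wh + b$ can be produced by a block $h \mapsto h + W_2\ReLU(W_1 h + c) + b_2$, taking $c$ large enough that $W_1 h + c > 0$ on the compact data set. On that set the ReLU acts as the identity, giving $h + W_2 W_1 h + W_2 c + b_2$. Setting $W_1 = I$ and $W_2 = W - I$, with $b_2$ absorbing the constants, yields $A$ exactly on the data. Compactness of the manifolds $M, N$, and hence of all their intermediate images, is what makes a single constant $c$ suffice. Alternatively, one can allow linear in/out projections as is standard in ResNets, but the bias trick avoids that.

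Finally I compose these blocks following the unlinking construction of Example~\ref{ex:abs-unlinking} and Figure~\ref{fig:abs-unlinking-small}. The sequence is: translate so the fold acts on the desired coordinate, fold, translate again, and so on, until the two images lie in regions separated by a hyperplane. I would check the separation explicitly for the Hopf link of Example~\ref{ex:hopf-link} using the Step~5 configuration. Proposition~\ref{prop:classification-requires-unlinking} then confirms that the linking number has dropped to $0$. For general "same topological transformations," the statement follows because every coordinatewise nonmonotonic piecewise-linear activation is a finite combination of ReLUs plus an identity term. The identity term is supplied by the skip connection, so each such layer is exactly realizable by a residual block of width $n$ using the same one-coordinate-at-a-time trick.

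The main obstacle is the claim for smooth nonmonotonic activations like GELU, which are not exact ReLU combinations. For these I would argue approximately: on a compact set, a width-$n$ stack of residual blocks approximates the target map uniformly. Disjointness and linear separability are open conditions with a positive margin, so a sufficiently close approximation achieves the same final separation. Making the width-$n$ approximation rigorous for each coordinate separately, rather than relying on wide hidden layers, is the delicate step. I would handle it with one residual block per breakpoint of a piecewise-linear interpolant, each block acting on a single coordinate.
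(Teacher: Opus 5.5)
Your core argument is the paper's proof. A single block with residual branch $2\ReLU(-h_i)$ realizes the fold $h_i\mapsto|h_i|$ exactly, and composing such folds with shifts along Figure~\ref{fig:abs-unlinking} reaches a separable configuration. The paper reads ``same topological transformations'' as exactly this inheritance of the $|\cdot|$ fold and stops there. Your bias trick for realizing arbitrary affine maps inside blocks on compact data is correct and slightly more general than the paper's ``composing affine shifts''. The paper's version suffices there because the reflection $y\mapsto 1-y$ is immediately followed by a fold, and $|y-1|=y-1+2\ReLU(1-y)$ is one block.

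The genuine problem is in your extension to general nonmonotonic activations, which the paper does not attempt. A PL activation $\sigma(x)=a+bx+\sum_{j=1}^k c_j\ReLU(x-t_j)$ needs $k$ hidden units for its kinks, plus one more (via your bias trick) when $b\neq 1$, all reading the same coordinate. So one width-$n$ block per coordinate handles it only when $k$ is at most about $n$.

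Your fallback, one block per breakpoint, fails as stated because blocks compose rather than add. Once the first block has replaced $x_i$ by $x_i+c_k\ReLU(x_i-t_k)$, the next block's ReLU reads the modified value, and the result is no longer $\sigma(x_i)$. This is not just bookkeeping: not every PL map on a compact interval is a composition of one-kink maps. Take $f$ with kinks at $0,1$ and slopes $1,-1,1$ on $[-3,4]$. The first non-injective factor must be a fold; zero slopes are excluded because $f$ has no flat pieces. A fold pairs points on the two sides of its fold point until one side reaches an endpoint, i.e.\ the image of $-3$ or $4$ under the earlier injective factors. But $f$ admits a value-preserving pairing of this kind only within $[-1,1]$ (around $0$) or within $[0,2]$ (around $1$), and neither reaches $-3$ or $4$.

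For the activations the paper actually has in mind, the step can be repaired, since GELU, Swish and Mish are unimodal. A PL interpolant $\tilde\sigma$ with a node at the minimizer $x^*$ factors as $\tilde\sigma=\tilde\sigma(x^*)+|\phi|$. Here $\phi$ equals $\tilde\sigma-\tilde\sigma(x^*)$ to the right of $x^*$ and its negative to the left, so it is an increasing PL homeomorphism. Increasing PL homeomorphisms factor into one-kink increasing maps by removing one kink at a time, and your margin argument then finishes. Either restrict the general claim to such activations with this factorization, or drop it; the theorem as the paper proves it does not need it.
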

\begin{proof}[Proof sketch]
ResNet blocks $\mathcal{F}(x) = x + \mathcal{G}(x)$ can express absolute value via the identity
\begin{equation}
|x| = x + 2\ReLU(-x).
\label{eq:abs-resnet}
\end{equation}
Setting $\mathcal{G}(x) = 2\ReLU(-x)$ yields $\mathcal{F}(x) = |x|$. This enables the same coordinatewise folding operations as nonmonotonic activations.
\end{proof}

While ResNets may be viewed as discretizations of neural ODEs \cite{chen2018neuralode}, the discrete formulation is strictly more expressive for topological transformations. Neural ODEs generate continuous flows (diffeomorphisms) that must preserve topological invariants. On the other hand, discrete ResNet blocks are capable of changing these invariants; the caveat being that the identity \eqref{eq:abs-resnet} relies crucially on discrete, non-infinitesimal residuals.

\subsection{Attention Mechanisms}
\label{sec:attention}

The transformer architecture \citep{vaswani2017attention} introduces self-attention mechanisms enabling global information routing. For simplicity, we study the topological expressivity of attention by considering \emph{pure transformers}.

\begin{theorem}[Two-token attention as a coordinate fold]
\label{thm:attention}
For each input coordinate $x_i$ there is a single-head two-token attention layer, with distinct positional encodings $(p_1, p_2)$ and scalar query/key/value/output weights, that approximates $|x_i|$ near the origin. Applying this construction coordinatewise yields a pure-transformer realization of the coordinatewise fold used in our unlinking construction.
\end{theorem}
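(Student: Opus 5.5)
The plan is to build the fold explicitly from a two-token sequence in which one token carries the data and the other serves as a fixed reference. The figure's V-shape $g(x) = x + 1 - \sigmoid(25x+3.5)$ suggests how: a two-way softmax is exactly a sigmoid of the score difference, and a convex combination of two value vectors gives $x$ times one weight plus a constant times the other. Concretely, token~1 is $h_1 = x_i + p_1$, with $p_1 = 0$, and token~2 is the reference $h_2 = p_2 = c$, a constant. The query of token~1 will be $q = w_q x_i + b$, folded into the positional part. The keys will be $k_1 = w_k h_1$ and $k_2 = w_k h_2$. Token~1 then attends to itself with weight
\[
\alpha(x_i) = \sigmoid\bigl(q\,(k_1 - k_2)\bigr),
\]
which, with the positional offsets chosen suitably, becomes $\sigmoid(\beta x_i + \gamma)$ up to a quadratic term. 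I would rather arrange the scores to be affine in $x_i$: the query is a positional constant, only the key of token~1 depends on $x_i$, and the key of token~2 is constant. This gives the score difference $s(x_i) = \beta x_i + \gamma$, with $\beta$ large.

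\textbf{Output.} The attention output for token~1 is $\alpha v_1 + (1-\alpha) v_2$, where $v_1 = w_v x_i$ and $v_2 = w_v c$. Adding the residual, or letting the output weight and a positional bias supply $x_i$ plus a constant, yields
\[
g(x_i) = a\,x_i + b' + \alpha(x_i)\,(\text{affine in } x_i).
\]
The key step is to check that for $x_i>0$ we get $\alpha\approx 1$ and output $\approx x_i$, while for $x_i<0$ we get $\alpha\approx 0$ and output $\approx -x_i$. To obtain this, I would take
\[
v_1 = x_i, \qquad v_2 = -x_i .
\]
Here $v_2$ depends on $x_i$ through the token-2 value. Since token~2 does not contain $x_i$, I would instead use a self-value minus a reference. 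The output is then
\[
\alpha x_i - (1-\alpha)x_i = (2\alpha - 1)\,x_i,
\]
which is realized as the value $w_v h_1$ routed with weight $\alpha$, together with the output projection $2\,\mathrm{out} - x_i$ obtained via the residual and output weight. Then $(2\alpha-1)x_i \approx \tanh(\beta x_i/2)\,x_i$, which converges to $|x_i|$ uniformly on compact sets as $\beta\to\infty$, with error at most $\sup_x |x|(1-\tanh(\beta|x|/2)) = O(1/\beta)$.

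\textbf{Coordinatewise assembly.} Each coordinate receives its own head, or a sequence of layers each acting on a single coordinate, with value and output matrices supported on that coordinate. This reproduces the coordinate-by-coordinate folds of Example~\ref{ex:abs-unlinking}.

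\textbf{Main obstacle.} The hard part is that the approximation is not exact: the unlinking argument needs $F(X)$ and $F(Y)$ to end up disjoint and linearly separable. I would handle this by compactness. The exact $|\cdot|$ construction yields images separated by a positive margin $\delta$, so choosing $\beta$ large enough makes the composed uniform error below $\delta/2$. This uses the fact that the intervening affine layers are Lipschitz, so errors propagate only boundedly. Linear separability with margin then survives the approximation.
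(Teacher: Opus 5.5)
There is a genuine gap: your construction depends on the residual connection. The statement asks for a \emph{pure}-transformer realization, and Definition~\ref{def:pure-transformer} removes residual connections. The point of the T column in Table~\ref{tab:topological-expressivity} is that attention by itself supplies the fold; with a skip connection available you are essentially back in Theorem~\ref{thm:skip-connection}. In your tokenization token~2 carries no $x_i$, so $v_2$ is constant. The attention output at token~1 is then $v_2+\alpha(x_i)(v_1-v_2)=c_0+\alpha(x_i)(a x_i+b)$, and an affine output map preserves this form. Such a function is asymptotically constant on one side, because the sigmoid gate dies exponentially. But $(2\alpha-1)x_i=2\alpha x_i-x_i$ is unbounded on both sides. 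So the ungated $-x_i$ can only come from the residual stream. Your alternative (``output weight and a positional bias supply $x_i$'') cannot produce it either: the output weight only sees the gated average, and the bias is constant. There is also a smaller slip. With $p_1=0$ and $h_2=c\neq 0$, the score difference is $\beta(x_i-c)$, not $\beta x_i$. So $(2\alpha-1)x_i$ converges to $\operatorname{sign}(x_i-c)\,x_i$, which jumps by $2|c|$ at $x_i=c$, rather than to $|x_i|$; you would need to pre-shift the input by $c$ and cancel it with a value bias.

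The missing idea, which is what the paper does, is to feed $x_i$ into \emph{both} tokens, $(x_i+p_1,\,x_i+p_2)$ with $p_1\neq p_2$. Then $k_1-k_2=w_k(p_1-p_2)$ and $v_1-v_2=w_v(p_1-p_2)$ are constants, and the query $q_2$ supplies the $x_i$-dependence, so the score $q_2(k_1-k_2)$ is affine in $x_i$. The output $w_o[v_2+\alpha_{21}(v_1-v_2)]+b_o$ is an ungated linear term plus a constant multiple of a steep sigmoid. With $w_q=-5$, $w_k=5$, $w_v=w_o=1$, $p_1=0$, $p_2=1$, $b_q=4.3$ it equals $x_i+1-\sigmoid(25x_i+3.5)$. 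This function has a strict local minimum just left of $0$, decreasing on a left interval and increasing to the right, so it is a residual-free fold. Note the price. With scalar tokens and scalar weights, a residual-free layer yields either $a x_i+b\,\alpha(x_i)+c$ (both tokens carry $x_i$) or $c+\alpha(x_i)(a x_i+b)$ (only one does), never $(2\alpha-1)x_i$. So you cannot get uniform convergence to $|\cdot|$ from one such layer. Your compactness/Lipschitz margin argument would be a real improvement over the paper's informal ``close enough'' in the residual setting, but it does not transfer. In the pure setting you must instead use the paper's weaker claim: any coordinatewise map with a strict local extremum suffices for the unlinking construction, after affine rescaling of the compact data into its effective region.
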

\begin{proof}[Proof sketch]
Use a two-token input $(x_i, x_i)$ or a copy-token preamble. With distinct positional encodings the second-position attention weight reduces to a sigmoid in $q_2(k_1 - k_2)$. Then pick appropriate  scalar weights.
\end{proof}
In other words, keeping all other coordinates constant, the function in $x_i$ has a strict local minimum at $x_i = 0$ and is monotonically decreasing on a left interval and monotonically increasing on a right interval, giving a smoothed V-shape  (Figure~\ref{fig:attention-v}). To be pedantic, we need an affine pre-shift that rescales the data into the V-shape's effective region. This construction is a per-coordinate \emph{local} surrogate for $|\cdot|$, not a global approximation; it suffices for topological transformation because the unlinking construction depends only on the existence of a coordinatewise nonmonotonic fold on the data domain, not on exact equality with $|\cdot|$.

\subsection{Width Threshold and Design Implications}

\noindent\textbf{Width $\geq d+1$ eliminates topological obstruction.} Our impossibility results are tight: width-$(d+1)$ ReLU networks achieve universal approximation on compact subsets of $\R^d$ \cite{hanin2017approximating}, with improved minimum-width thresholds for ReLU, leaky-ReLU, and compact-domain settings in \citep{park2021minimum,cai2023achieve,li2023minimum,kim2024minimum}. Consequently, any disjoint configuration of class manifolds may be mapped into disjoint scalar class labels, achieving linear separability. No ambient topological operation like ``unlinking'' is involved here, only the approximation of a continuous label function.

\noindent\textbf{Design guidance.} In effecting topological changes, the variation of widths over layers matters: A wide model may contain narrow layers, e.g., a latent bottleneck, the reduced channel inside a bottleneck residual block, or the $d_{\mathrm{model}}\to d_k$ and $d_v$ projections in an attention head. In such settings, one has the option of employing mechanisms that change topology directly rather than relying on dimension expansion, for instance, (i) expand MLP width slightly beyond input dimension before narrowing; (ii) use nonmonotonic activations like GELU, SiLU, Mish in bottleneck layers; (iii) skip connections; or (iv) leverage attention.

\section{Experiments}
\label{sec:experiments}
We validate our theoretical results with experiments on the Hopf link classification task, providing empirical evidence for the impossibility results in Section~\ref{sec:generalization} and the  mechanisms in Section~\ref{sec:breaking}. Details are in Appendices~\ref{app:experiments}--\ref{app:cifar10-details}.

\subsection{Experimental Setup}
\noindent\textbf{Thickened Hopf link.} We construct volumetric data by sampling points from two thickened interlinked tori. Each curve is thickened by adding small perturbations normal to the curve surface. Dataset: 6000 points (3000 per class), split 80/20 train/validation.

\noindent\textbf{Training protocol.} Adam/AdamW with learning rate $10^{-3}$, up to 800 epochs with early stopping (patience 100--200). Cross-entropy loss.

\subsection{ReLU vs GELU on Hopf Link}
We compare monotonic (ReLU) versus nonmonotonic (GELU) activations across twelve depths (3--20 layers) with thirty random seeds each (720 total experiments).

\begin{table}[t]
\caption{Held-out accuracy (\%) on Hopf link classification across depths, ReLU vs GELU (30 seeds per cell). Rows report mean, standard deviation, and maximum over seeds.}
\label{tab:relu-gelu}
\centering
\small
\renewcommand{\arraystretch}{1.08}
\setlength{\tabcolsep}{2pt}
\begin{tabular}{@{}llcccccc@{}}
\toprule
Model & Stat & 3 & 5 & 8 & 12 & 16 & 20 \\
\midrule
 & mean & 84.3 & 77.1 & 63.1 & 57.7 & 53.5 & 50.3 \\
ReLU & std dev & 9.1 & 18.7 & 18.1 & 15.1 & 9.9 & 1.8 \\
 & max & 92.8 & 92.5 & 92.6 & 91.6 & 91.4 & 54.4 \\
\addlinespace[2pt]
 & mean & 89.3 & 90.0 & 91.1 & 91.2 & 72.8 & 52.8 \\
GELU & std dev & 2.6 & 2.8 & 3.1 & 2.0 & 19.7 & 9.8 \\
 & max & 92.9 & 100.0 & 100.0 & 100.0 & 92.6 & 90.1 \\
\bottomrule
\end{tabular}
\end{table}

Table~\ref{tab:relu-gelu} confirms our theory: ReLU's performance is limited by the topological ceiling (Theorem~\ref{thm:hopf}), and degrades further with depth as as optimization difficulty becomes significant, compounding the expressivity barrier. GELU averages $\sim$90\% at moderate depths and its best runs reach $100.0\%$ at depths 5--12, overcoming the barrier. At extreme depths ($\geq 16$), both activations suffer optimization instability (large standard deviations). Across 30 seeds, ReLU never exceeds $92.8\%$ at any depth, matching our prediction.

\subsection{ResNet vs Plain ReLU}
We compare plain ReLU feedforward networks against ReLU ResNets, both at width 3.

\begin{table}[t]
\caption{Held-out accuracy (\%) on Hopf link classification across depths, plain ReLU vs ResNet (30 seeds per cell). Rows report mean, standard deviation, and maximum over seeds.}
\centering
\small
\renewcommand{\arraystretch}{1.08}
\setlength{\tabcolsep}{2pt}
\begin{tabular}{@{}llcccccc@{}}
\toprule
Model & Stat & 3 & 4 & 5 & 6 & 7 & 8 \\
\midrule
 & mean & 83.9 & 76.5 & 74.3 & 69.9 & 64.8 & 66.2 \\
Plain & std dev & 10.1 & 15.2 & 18.1 & 19.9 & 18.7 & 19.0 \\
 & max & 92.0 & 92.3 & 91.2 & 91.9 & 91.5 & 91.5 \\
\addlinespace[2pt]
 & mean & 97.5 & 97.3 & 98.5 & 96.6 & 97.5 & 97.5 \\
ResNet & std dev & 4.4 & 3.6 & 2.4 & 3.8 & 3.1 & 3.0 \\
 & max & 100.0 & 100.0 & 100.0 & 100.0 & 100.0 & 100.0 \\
\bottomrule
\end{tabular}
\end{table}

Across 30 seeds, ResNet reaches a 100\% best run at every depth from 3 to 8, and mean accuracy stays at $96.6$--$98.5\%$ across all depths. Plain ReLU never exceeds $92.3\%$ and has much lower means with high seed variance, confirming both an expressivity barrier and depth-dependent optimization difficulty. Skip connections provide a mechanism to overcome topological barriers even with monotonic activations, as predicted by Theorem~\ref{thm:skip-connection}.

\subsection{Mechanistic Interpretability: How ResNet Unlinks a Point/Circle Pair in $\R^2$}
The disk-annulus separation task is the ambient-dimension $d = 2$ instance of our linking framework, given by Theorem~\ref{thm:general} with $m=0$, $n=1$, and $d=2$: a point inside the annulus and the annulus' inner boundary circle form a $0$-manifold/$1$-manifold link with $|\lk| = 1$, equivalently, winding number $\pm 1$. By our theory, width-$2$ monotonic feedforward networks cannot linearly separate them, but a width-$2$ ResNet can.

Figure~\ref{fig:resnet-mechanism} shows the layer-by-layer transformation. ResNet implements ``folding'' operations via \eqref{eq:abs-resnet}, enabling nonmonotonic transformations that separate the topologically linked components.

\begin{figure}[t]
\centering
\includegraphics[width=0.95\columnwidth]{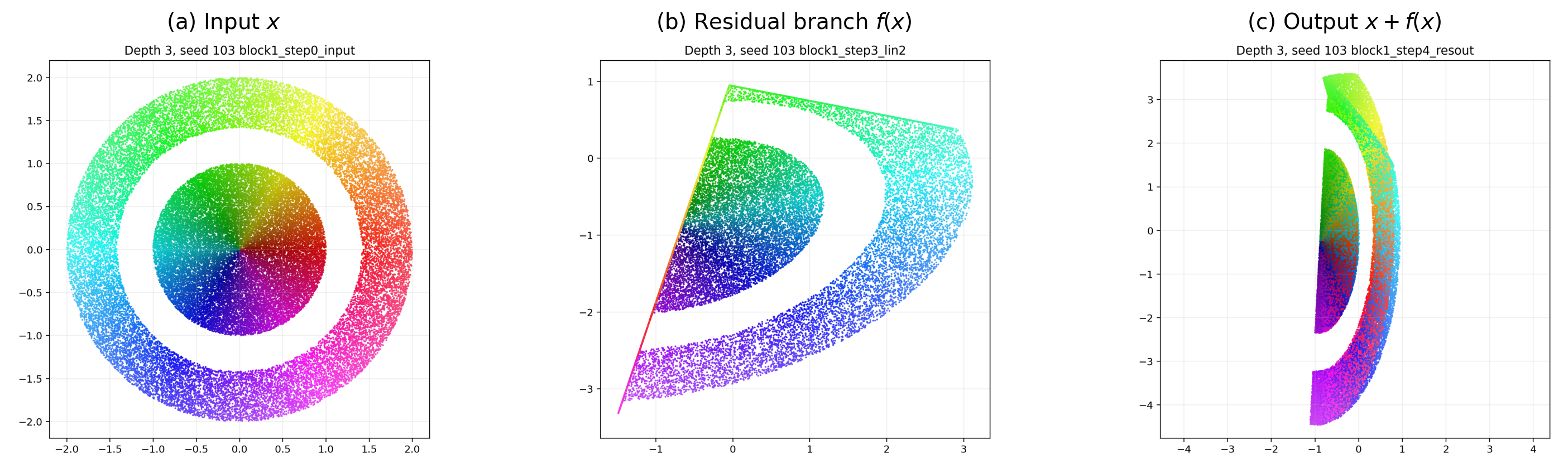}
\caption{ResNet skip connection implementing $|x| = x + 2\ReLU(-x)$ on a linked disk and annulus, respectively thickenings of a point and a circle that link in $\R^2$ ($m = 0$, $n = 1$ in Theorem~\ref{thm:general}). Here the linking number reduces to the winding number of the circle around the point. (a)~Input $x$. (b)~Residual branch $f(x) \approx 2\ReLU(-x)$. (c)~Output $x + f(x)$ shows that folding separates components.}
\label{fig:resnet-mechanism}
\end{figure}

\subsection{Higher-Dimensional Linking: $\mathbb{S}^n \sqcup \mathbb{S}^n$ in $\R^{2n+1}$}
\label{sec:higher-dim-experiments}
We extend our experiments to higher dimensions using linked hyperspheres: two $n$-spheres embedded in $\R^{2n+1}$ with linking number $\pm 1$. For $n=2$, this gives $\mathbb{S}^2 \sqcup \mathbb{S}^2 \subset \R^5$ classified by width-5 networks. This embedding is not easy to visualize, which is another reason we favored $d = 3$. We stress that it is \emph{not} one copy of $\mathbb{S}^2$ enclosed in another.

We quantify topological difficulty via linking number. A single link has only one local entanglement region; so networks can achieve high accuracy by being wrong only in that small patch. Placing $k$ disjoint translated copies of the linked pair removes this loophole by forcing the network to handle larger entanglement regions (Table~\ref{tab:linking-scaling}).

\begin{table}[t]
\caption{Best test accuracy (\%) across 100 seeds for total linking number $k$, i.e., $k$ disjoint copies of linked $\mathbb{S}^2 \sqcup \mathbb{S}^2$, in $\R^5$. Width-5, depth-5 networks; ``best'' reports the expressivity ceiling per cell.}
\label{tab:linking-scaling}
\centering
\small
\setlength{\tabcolsep}{2pt}
\begin{tabular}{@{}lcccccc@{}}
\toprule
Model & $k{=}1$ & $k{=}2$ & $k{=}5$ & $k{=}10$ & $k{=}20$ & $k{=}50$ \\
\midrule
ReLU        & 98.6 & 98.0 & 93.5 & 87.9 & 84.5 & 80.2 \\
ReLU+Skip   & \textbf{100.0} & \textbf{98.7} & \textbf{94.1} & 85.4 & 83.6 & 80.9 \\
GELU        & 98.8 & 98.4 & 92.5 & 91.1 & 88.6 & \textbf{84.8} \\
Swish       & 97.0 & 98.2 & 92.0 & \textbf{91.6} & \textbf{88.9} & 84.3 \\
\bottomrule
\end{tabular}
\end{table}

At small $k$, monotonic architectures with skip connections reach the expressivity ceiling (ReLU+Skip attains 100\% at $k=1$, 98.7\% at $k=2$). As $k$ increases, the best nonmonotonic architectures are GELU and Swish, which hold a $3$--$4$pp accuracy advantage over the best monotonic architecture at $k\geq 10$, consistent with our expectation that nonmonotonic activations have the ability to resolve each local entanglement separately. The scale of the accuracy advantage of nonmonotonic activations and skip connections is smaller at a single link here than in the Hopf-link experiments: compare $k=1$ in Table~\ref{tab:linking-scaling} ($n=2$) with Table~\ref{tab:relu-gelu} ($n=1$). We offer a geometric interpretation: a monotonic network is forced into errors only near the entanglement region, whose volume fraction decreases with ambient dimension; a single link thus depresses the accuracy ceiling less in $\R^5$ than in $\R^3$, and larger $k$ is needed to expose the obstruction (Appendix~\ref{app:linking-scaling-plots}).

\subsection{Linking in Real Data: CIFAR-10}
\label{sec:cifar10-experiments}
As a proof-of-concept, we investigate whether topological linking detectable by our algorithm corresponds to classification difficulty in real image data. We emphasize that the evidence below is correlational rather than causal: Classification on real images involves many confounding factors beyond ambient topology, and the construction relies on 3D PCA projection rather than the native data manifold. Using our linking detection algorithm, we analyze CIFAR-10 class manifolds projected to 3D via PCA.

\noindent\textbf{Link detection.} With 20$\times$ data augmentation (1.05M samples) and $k$-NN graph construction ($k=15$, mutual edges), we detect linked cycles between bird and deer classes at $\varepsilon = 0.034$ (0.22\% of the 3D bounding box diagonal). The witness cycles have linking number $\lk = -1$ with Gauss integral $-1.004$ (Figure~\ref{fig:cifar10-link}). This binary search only locates the onset scale; the all-pair consistency study uses fixed thresholds, and detected graph-cycle witnesses persist as $\varepsilon$ increases because the filtered $k$-NN edge set only grows. This is a reproducible linking signal in PCA-3D, not a claim about linking of the full $3072$D pixel manifold.

\begin{figure}[t]
\centering
\includegraphics[width=\columnwidth]{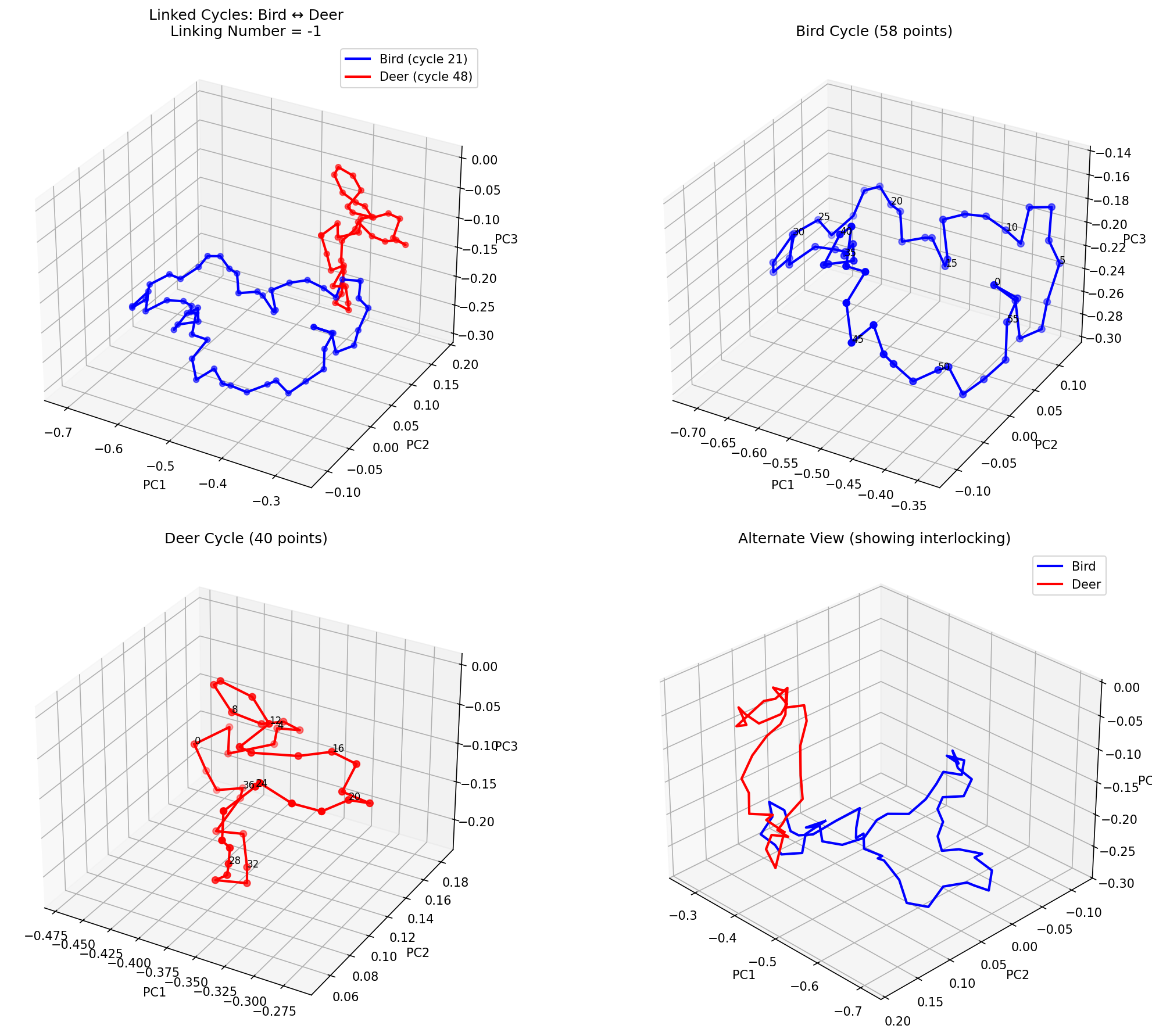}
\caption{Linked cycles in CIFAR-10: bird (blue) and deer (red) with $\lk = -1$ at $\varepsilon = 0.034$.}
\label{fig:cifar10-link}
\end{figure}

\noindent\textbf{Ten-class linking consistency.} Running detection eleven times across all 45 class pairs yields a linking-consistency summary. Of  the 45 pairs, 27 are strongly linked ($>$70\% consistency), 8 are weak or unlinked ($<$30\%), and 10 are ambiguous. High examples include deer--dog (91\%), bird--cat (91\%), cat--dog (82\%), and automobile--truck (91\%); low controls include frog--ship (18\%), airplane--horse (18\%), and cat--ship (27\%).

\noindent\textbf{Classification experiments.} We train width-bounded CNNs (all intermediate layers $\leq 3072$D) on both binary and ten-class classification tasks, comparing monotonic (ReLU, ELU, SELU, LeakyReLU) versus nonmonotonic (GELU, Swish, Mish) activations.

\begin{table}[t]
\caption{Binary classification accuracy at L8 no-skip: linked pair (deer--dog) vs unlinked control (frog--ship). All three nonmonotonic activations outperform all four monotonic activations on the linked pair; no such pattern on the unlinked pair.}
\label{tab:cifar10-binary}
\centering
\small
\begin{tabular}{lcccc}
\toprule
& \multicolumn{2}{c}{Linked (deer--dog)} & \multicolumn{2}{c}{Unlinked (frog--ship)} \\
\cmidrule(lr){2-3} \cmidrule(lr){4-5}
Activation & Type & Acc (\%) & Type & Acc (\%) \\
\midrule
GELU & NONM & \textbf{90.8} & NONM & 97.9 \\
Mish & NONM & \textbf{90.8} & NONM & 97.7 \\
Swish & NONM & 90.1 & NONM & 97.8 \\
ReLU & MONO & 89.6 & MONO & 97.8 \\
LeakyReLU & MONO & 89.6 & MONO & 97.7 \\
ELU & MONO & 89.1 & MONO & 97.2 \\
SELU & MONO & 87.4 & MONO & 97.6 \\
\midrule
\textbf{Gap} & \multicolumn{2}{c}{+1.2\%} & \multicolumn{2}{c}{+0.1\%} \\
\bottomrule
\end{tabular}
\end{table}

At L8 no-skip (Table~\ref{tab:cifar10-binary}), all nonmonotonic activations outperform all monotonic activations on the linked deer--dog pair (+1.2\% gap), while no such pattern exists on the unlinked frog--ship pair (control). With skip connections, the effect disappears: monotonic networks match nonmonotonic ones, as predicted by Theorem~\ref{thm:skip-connection}. As a localization diagnostic, we retrained bird--deer L8 no-skip ReLU/GELU classifiers, detected one PCA-3D link witness, and stratified the unaugmented test set by distance to that witness. The activation gap is largest near the witness and decays with distance (Table~\ref{tab:cifar10-local-link-gap}); this is diagnostic rather than causal evidence,  as distance is measured in PCA-3D and can mix topology with local geometry. PCA-3D linking consistency also correlates with within-CIFAR class-pair confusion ($r \approx 0.48$, $p < 0.001$), outperforming pixel-space distance metrics.

\begin{table}[h]
\centering
\caption{Bird--deer test accuracy by distance to a detected PCA-3D link witness.}
\label{tab:cifar10-local-link-gap}
\small
\begin{tabular}{lrrrr}
\toprule
Subset & $n$ & ReLU (\%) & GELU (\%) & Gap (pp) \\
\midrule
$<10\varepsilon$ & 31 & 90.6 & 97.1 & +6.5 \\
$<20\varepsilon$ & 243 & 89.4 & 92.3 & +2.9 \\
$<50\varepsilon$ & 1191 & 89.3 & 91.4 & +2.1 \\
All & 2000 & 89.9 & 91.6 & +1.6 \\
$>50\varepsilon$ & 809 & 90.6 & 91.4 & +0.8 \\
\bottomrule
\end{tabular}
\end{table}

\noindent\textbf{Expressivity vs optimization.} Our theoretical results concern \emph{expressivity}, i.e., what functions network architectures can represent, not \emph{optimization}, i.e., what stochastic gradient descent actually finds. Even when two architectures have identical expressivity, they may exhibit different training dynamics: convergence speed, sensitivity to learning rate, propensity to find particular local minima, etc. The observed CIFAR-10 gaps thus reflect a combination of (i) the topological expressivity barrier predicted by our theory, and (ii) optimization dynamics that our theory does not address.

\section{Conclusion}
\label{sec:conclusion}
We provide a new approach to analyze neural network expressivity. Rather than asking if a network can approximate a target function, we ask what geometric operations on embedded data it can perform and what topological changes it can effect. In this work, we focus on \emph{folding} for the former and \emph{linking number} for the latter.

The key insight is that it does not matter that different architectures may all have the universal approximation property; they differ from each other in their efficacies to change topology. We show that a monotonic activation imposes a topological barrier in this regard but mechanisms like skip connections, attention mechanisms, and nonmonotonic activations can all break this barrier by supplying the capability to  ``fold'' embedded data.

To do this, our framework requires \emph{extrinsic} topological invariants that capture the geometry of data embedded in an ambient space. This is fundamentally different from topological data analysis, which studies intrinsic topological invariants. For a case in point, the linking number of two manifolds depends on how they are embedded in $\R^n$, not just on the manifolds themselves. This extrinsic perspective reflects algorithmic reality: neural networks must transform data within the ambient space; its embedded geometry is often more important than its intrinsic geometry.

We are hopeful that this work would provide a conduit for low-dimensional topology and geometric topology to be applied in machine learning.  In a companion paper \cite{renlim2026topologyII}, we go beyond linking number to show how more sophisticated invariants from link theory and knot theory, such as Milnor's $\bar{\mu}$-invariants and knot types, may be similarly employed. In addition, the relation between unlinking and classification discussed in this article hints at a broader pattern, as yet unexplored.

\section*{Impact Statement}

This paper presents theoretical and empirical work on neural network expressivity. We identify fundamental limitations of certain architectures, which may inform more principled architecture design. We also propose a link detection algorithm that characterizes topological complexity in datasets, which may inform dataset difficulty analysis.

\section*{Acknowledgments} 
JR and LH are partially supported by a Vannevar Bush Faculty Fellowship ONR N000142312863.

\bibliography{references}
\bibliographystyle{icml2026}

\appendix
\onecolumn
\newpage

\section{Network Architecture}
\label{app:definitions}
\label{app:architecture-specifications}

This appendix collects the network-architecture definitions used in the main text (\S\ref{app:network-architectures}) and the classifier-head/linear-separability equivalence (\S\ref{app:classifier-heads}).

\subsection{Network Architectures}
\label{app:network-architectures}

\begin{definition}[Width-$d$ feedforward network]
\label{def:width-n-ffn}
A \emph{width-$d$ feedforward network} with activation $\sigma$ is $F = A_L \circ \sigma \circ A_{L-1} \circ \sigma \circ \cdots \circ \sigma \circ A_1$, where each $A_i(x) = W_i x + b_i$ is affine with $W_i \in \R^{d \times d}$, $b_i \in \R^d$, and $\sigma$ is applied coordinatewise. All intermediate representations lie in $\R^d$.
\end{definition}

\begin{definition}[Width-$d$ ResNet]
\label{def:resnet}
A \emph{width-$d$ ResNet} is $F = B_L \circ \cdots \circ B_1$ where each block $B_i(x) = x + R_i(x)$ and $R_i = A_{i,2} \circ \sigma \circ A_{i,1}$ is a width-$d$ feedforward sublayer with monotonic $\sigma$ (typically ReLU).
\end{definition}

\begin{definition}[Pure transformer]
\label{def:pure-transformer}
A \emph{pure transformer} is a transformer with residual connections and layer normalization removed: each block applies self-attention $\Attention(X) = \softmax\bigl(\tfrac{XW_Q(XW_K)^\tp}{\sqrt{d_k}}\bigr) XW_V$, then an affine layer, then a coordinatewise activation. The width constraint requires all intermediate dimensions $\leq n$.
\end{definition}

\begin{definition}[Autoencoder with bottleneck $d$]
\label{def:autoencoder}
An \emph{autoencoder with bottleneck $d$} is $F = D \circ E$ where the encoder $E: \R^n \to \R^d$ projects via an initial affine $A_1: \R^n \to \R^d$ followed by width-$d$ monotonic-feedforward layers, and the decoder $D: \R^d \to \R^n$ symmetrically expands via width-$d$ layers and a final $A_L: \R^d \to \R^n$.
\end{definition}

\begin{corollary}[Autoencoder topological impossibility]
\label{cor:autoencoder-impossibility}
Let $X, Y \subset \R^n$ be disjoint compact manifolds contained in a $d$-dimensional affine subspace $H \subset \R^n$ with $\lk(X, Y) \neq 0$. No autoencoder with bottleneck $d$ and coordinatewise monotonic activations can transform $(X, Y)$ into linearly separable images while preserving disjointness.
\end{corollary}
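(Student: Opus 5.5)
We reduce the statement to Theorem~\ref{thm:general} by showing that the encoder's first affine layer plays the role of an affine change of coordinates on $H$, and that the decoder cannot undo the resulting obstruction. Write $H = h_0 + V$ with $\dim V = d$, and fix an affine isometry $\phi:\R^d\to H$. We regard $X' = \phi^{-1}(X)$ and $Y' = \phi^{-1}(Y)$ as disjoint compact manifolds in $\R^d$. The hypothesis $\lk(X,Y)\neq 0$ is read as $\lk(X',Y')\neq 0$ in $\R^d$, with complementary dimensions as in Definition~\ref{def:higher-dim-linking}. The full map then factors as
\[
F\circ\phi = A_L\circ G_D\circ G_E\circ (A_1\circ\phi),
\]
where $A_1\circ\phi:\R^d\to\R^d$ is affine, and $G_E$ and $G_D$ are width-$d$ monotonic feedforward maps $\R^d\to\R^d$.

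\textbf{Step 1 (the bottleneck part).} The composite $\Phi = G_D\circ G_E\circ(A_1\circ\phi)$ is a width-$d$ feedforward network with affine layers and coordinatewise monotonic activations. The only caveat is that its first affine map may be rank-deficient, but Theorem~\ref{thm:general} allows arbitrary affine layers.

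Disjointness is required throughout, and any collision at an intermediate layer propagates to the output. The layerwise argument of Theorem~\ref{thm:general} then applies:
\begin{itemize}
\item invertible affine layers preserve $\lk$ up to sign;
\item monotonic activations preserve $\lk$ up to sign by Lemma~\ref{lem:monotonic-homotopy};
\item a rank-deficient affine layer applied while $\lk\neq 0$ forces an intersection by the higher-dimensional analogue of Lemma~\ref{lem:rank-deficient-intersection}, contradicting disjointness.
\end{itemize}
Hence $\Phi(X')$ and $\Phi(Y')$ are disjoint with $\lk(\Phi(X'),\Phi(Y')) = \pm\lk(X',Y')\neq 0$.

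\textbf{Step 2 (the final expansion $A_L:\R^d\to\R^n$).} Suppose $A_L\Phi(X')$ and $A_L\Phi(Y')$ are separated by an affine functional $\ell$ on $\R^n$. Then $\ell\circ A_L$ is an affine functional on $\R^d$ separating $\Phi(X')$ from $\Phi(Y')$. We must check that $\ell\circ A_L$ is nonconstant. If it were constant, it would take the same value on both images, which cannot strictly separate them. So $\Phi(X')$ and $\Phi(Y')$ are linearly separable in $\R^d$, and Lemma~\ref{lem:higher-dim-linear-separability} gives $\lk = 0$, a contradiction. Note that Step~2 needs no injectivity of $A_L$: separability pulls back along any affine map.

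\textbf{Main obstacle.} The hardest step is making the rank-deficient case of Step~1 rigorous, especially at the first layer $A_1\circ\phi$, which may collapse $H$. This requires the higher-dimensional version of Lemma~\ref{lem:rank-deficient-intersection}: if $\lk(M,N)\neq 0$ in $\R^d$ and $P$ is a projection onto a hyperplane, then $P(M)\cap P(N)\neq\varnothing$.

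I would prove this as follows. Suppose $P(M)\cap P(N)=\varnothing$. The straight-line homotopy from the identity to $P$ keeps the images disjoint, because moving along the kernel direction never identifies points with distinct projections. This yields a link homotopy to a configuration lying in a hyperplane. There, the Gauss map misses the two poles $\pm e_d$, so it is non-surjective and has degree $0$, contradicting homotopy invariance (Lemma~\ref{lem:higher-dim-homotopy}). A general rank-deficient affine map factors as an invertible map composed with such a projection, composed with an affine map, which reduces the general case to this one.
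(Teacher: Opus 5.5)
Your proposal is correct and follows essentially the paper's route: identify $H$ with $\R^d$, treat everything before the final expansion $A_L$ as a width-$d$ monotonic feedforward network (with the possibly rank-deficient $A_1|_H$ as its first layer), pull a separating hyperplane back through $A_L$, and apply the argument of Theorem~\ref{thm:general}. The step you flag as the main obstacle is already in the paper as Lemma~\ref{lem:general-rank-deficient}, proved there by translating one component along a kernel vector until the pair is linearly separable; your straight-line homotopy to the projection, ending with a Gauss map that lands in an equator and so has degree zero, is an equally valid alternative proof.
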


\begin{proof}
After an appropriate rotation, assume $H = \R^d \times \{0\}^{n-d}$. Let $G: H \cong \R^d \to \R^d$ denote the pre-final-decoder feature map, i.e., the composition of the encoder restricted to $H$ with all width-$d$ decoder layers \emph{except} the final affine $A_L: \R^d \to \R^n$. Then $G$ is a width-$d$ feedforward network in $\R^d$ with monotonic activations, and $F|_H = A_L \circ G$.

Suppose for contradiction that $F(X), F(Y) \subset \R^n$ are linearly separable by a hyperplane $\pi^\tp z = \alpha$. Pulling back through $A_L$, the linear functional $(A_L^\tp \pi)$ on $\R^d$ satisfies $(A_L^\tp \pi)^\tp G(x) + (\pi^\tp A_L(0) - \alpha) = \pi^\tp F(x) - \alpha$, which is positive on $G(X)$ and negative on $G(Y)$. Hence $G(X)$ and $G(Y)$ are linearly separable in $\R^d$. But $\lk(X, Y) \neq 0$, so by Theorem~\ref{thm:general} no width-$d$ feedforward network with monotonic activations can render $X, Y \subset \R^d$ linearly separable, a contradiction.
\end{proof}

The bottleneck dimension $d$ alone determines the topological constraint; the input dimension $n$ is irrelevant. To break the obstruction one must widen the bottleneck or place full-width nonmonotonic/skip-augmented layers before compression.

\subsection{Classifier Heads and Linear Separability}
\label{app:classifier-heads}

A standard classifier head maps features $z \in \R^d$ to logits $\ell = Wz + b \in \R^c$, predicting $\hat{y} = \arg\max_i \ell_i$. The class-$i$ decision region $D_i = \{z \in \R^c : z_i > z_j \text{ for all } j \neq i\}$ is the intersection of $c-1$ open half-spaces, hence convex, and the $\{D_i\}$ are pairwise disjoint.

\begin{proposition}[Binary classification equivalence]
\label{prop:binary-linear}
For $c = 2$, $\hat{y} = \arg\max\{\ell_0, \ell_1\} = \mathbf{1}\bigl[(w_0 - w_1)^\tp z + (b_0 - b_1) > 0\bigr]$. Two classes are perfectly classified by such a head iff their feature representations are linearly separable.
\end{proposition}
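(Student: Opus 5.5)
The plan is to prove the displayed identity for $\hat y$ first, by a direct computation. Then I will show that ``perfectly classified by such a head'' is equivalent to the existence of an affine functional that is strictly positive on one class and strictly negative on the other.

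\textbf{Step 1 (the identity).} Write $\ell_0 = w_0^\tp z + b_0$ and $\ell_1 = w_1^\tp z + b_1$, where $w_i^\tp$ are the rows of $W$. Then $\ell_0 - \ell_1 = (w_0 - w_1)^\tp z + (b_0 - b_1)$. The argmax selects index $0$ exactly when $\ell_0 > \ell_1$, which gives the indicator formula with the labeling convention that the indicator equal to $1$ means class $0$. On the hyperplane $\ell_0 = \ell_1$ there is a tie. I will adopt the convention of the decision regions $D_i$ defined just above: correct classification of class $i$ requires $z \in D_i$, i.e.\ a strict inequality. Ties then count as errors, so ``perfect classification'' means strict inequalities on every sample.

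\textbf{Step 2 (head $\Rightarrow$ separable).} Suppose the head classifies every $z \in F(X)$ as class $0$ and every $z \in F(Y)$ as class $1$. Set $\pi = w_0 - w_1$ and $\alpha = b_1 - b_0$. Then $\pi^\tp z > \alpha$ on $F(X)$ and $\pi^\tp z < \alpha$ on $F(Y)$. Note that $\pi \neq 0$ automatically: if $\pi = 0$, the sign of $\ell_0 - \ell_1$ would be constant, so both classes could not be correct, assuming both are nonempty. Hence $\{\pi^\tp z = \alpha\}$ is a genuine hyperplane putting the two feature sets in opposite open half-spaces, which is the notion of linear separability used in Proposition~\ref{prop:classification-requires-unlinking}.

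\textbf{Step 3 (separable $\Rightarrow$ head).} Conversely, given $\pi \neq 0$ and $\alpha$ with $\pi^\tp z > \alpha$ on one class and $< \alpha$ on the other, I will choose $w_0 = \pi$, $w_1 = 0$, $b_0 = -\alpha$, $b_1 = 0$, relabeling the classes if needed. By Step 1 this head outputs the correct label everywhere.

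\textbf{Main obstacle.} The only subtle point is the boundary convention. If ties were broken toward one class, ``perfect classification'' would only give weak separation, $\ge$ versus $<$. I expect to handle this by the strict decision-region convention above. If a weak tie-breaking convention is wanted instead, I would fall back on compactness: the feature sets are continuous images of compact manifolds, hence compact. With a weak tie-breaking convention, one class lies in a closed half-space $\{\pi^\tp z \ge \alpha\}$ and the other in the open half-space $\{\pi^\tp z < \alpha\}$. The open side is compact, so $\max_{F(Y)} \pi^\tp z = \alpha - \delta$ for some $\delta > 0$. Shifting the threshold to $\alpha - \delta/2$ then makes the separation strict on both sides, so the equivalence holds regardless of convention.
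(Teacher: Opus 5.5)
Your proof is correct and is the direct computation the paper has in mind. The paper gives no separate proof: it treats the claim as immediate because $\ell_0-\ell_1=(w_0-w_1)^\tp z+(b_0-b_1)$ is affine in $z$, so the two decision regions pulled back to feature space are the opposite open half-spaces of the hyperplane $\ell_0=\ell_1$, which matches its open-half-space notion of linear separability. You were right to flag the labeling: as displayed, the indicator equals $1$ exactly when $\ell_0>\ell_1$, i.e.\ when the argmax is $0$, so the identity should read $\mathbf{1}\bigl[(w_1-w_0)^\tp z+(b_1-b_0)>0\bigr]$; your strict tie convention, with the compactness fallback for weak tie-breaking, is a sound way to make ``perfectly classified'' precise.
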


Hence proving that a width-$d$ network cannot separate linked components is equivalent to proving that no classifier head achieves perfect accuracy on them.

\section{Invertible Architectures and Ambient Homeomorphisms}
\label{app:invertible-architectures}
\label{app:homeomorphism}

An \emph{ambient homeomorphism} of $\R^d$ is a continuous bijection $h: \R^d \to \R^d$ with continuous inverse. By the invariance of domain (Brouwer 1912; \citep[Thm.~36.5]{munkres2000topology}), any continuous injective map $f: U \to \R^d$ on an open $U \subseteq \R^d$ is an open map onto its image, hence a homeomorphism onto $f(U)$, though not in general an \emph{ambient} homeomorphism of $\R^d$. The architectures we analyze in this appendix (flow-based models, Neural ODEs, normalizing flows) are constructed to be ambient homeomorphisms by design: their forward maps are explicit continuous bijections $\R^d \to \R^d$ with continuous inverses. For such architectures, the lemmas below apply globally; for a continuous injective width-$d$ feedforward network, they apply on the image of the data manifold under that network, which suffices for the linking-preservation conclusion.

\begin{lemma}[Ambient homeomorphisms preserve component count and intrinsic invariants]
\label{lem:ah-components}
For disjoint compact connected sets $X_1, \ldots, X_k \subset \R^n$ and any ambient homeomorphism $h: \R^n \to \R^n$, the images $h(X_1), \ldots, h(X_k)$ are $k$ pairwise disjoint compact connected sets, and every intrinsic topological invariant of each $X_i$ (homotopy type, homology, knot complement type, etc.) is preserved.
\end{lemma}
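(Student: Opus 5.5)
The plan is to verify each claim of Lemma~\ref{lem:ah-components} directly from the fact that $h$ and $h^{-1}$ are both continuous bijections of $\R^n$. There are three claims: disjointness, compactness and connectedness, and preservation of intrinsic invariants. The first two are point-set topology; the third needs a precise reading of ``intrinsic'' and, for knot complements, an ambient argument.

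\textbf{Disjointness.} Injectivity of $h$ gives $h(X_i)\cap h(X_j)=h(X_i\cap X_j)=\varnothing$ for $i\neq j$. So there are still exactly $k$ images, and no two components are merged. This is the point that separates AH from FM in Table~\ref{tab:topological-expressivity}.

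\textbf{Compactness and connectedness.} The continuous image of a compact set is compact, and the continuous image of a connected set is connected. Hence each $h(X_i)$ is compact and connected.

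\textbf{Intrinsic invariants of each $X_i$.} The restriction $h|_{X_i}:X_i\to h(X_i)$ is a continuous bijection whose inverse is $h^{-1}|_{h(X_i)}$, which is continuous as a restriction of a continuous map. So $X_i\cong h(X_i)$ as topological spaces. Any invariant defined purely from the homeomorphism type is therefore preserved, including homotopy type and singular homology.

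\textbf{Knot complement type.} This is the one invariant on the list that is not a function of $X_i$ alone. Here I would use the ambient property: $h$ restricts to a homeomorphism $\R^n\setminus X_i\to\R^n\setminus h(X_i)$, since $h$ maps the complement bijectively onto the complement and $h^{-1}$ does the same in reverse. So the complement type is preserved as well. The same restriction argument applied to $\R^n\setminus\bigcup_i X_i$ shows the complement of the whole configuration is preserved.

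\textbf{Main obstacle.} The only delicate step is being precise about what ``intrinsic invariant'' means. I would state it as any invariant of the homeomorphism type of $X_i$, or of the pair $(\R^n,X_i)$ up to homeomorphism of pairs, and note that the complement claim genuinely uses that $h$ is ambient rather than merely an embedding of $X_i$. I would also remark why ambient matters: a continuous injective network handled via invariance of domain only gives the first two conclusions on the image. For invariants of the full pair one needs a homeomorphism defined on all of $\R^n$, which flows and Neural ODEs provide by construction.
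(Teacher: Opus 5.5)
Your proof is correct and follows the paper's argument: injectivity gives disjointness, continuity gives compactness and connectedness, and $h|_{X_i}$ is a homeomorphism onto $h(X_i)$, so every homeomorphism invariant of $X_i$ transfers. Your separate handling of knot complement type is more careful than the paper. You use that $h$ restricts to a homeomorphism $\R^n\setminus X_i\to\R^n\setminus h(X_i)$, whereas the paper lists complement type among invariants of $X_i$ even though it is not one (every knot is homeomorphic to a circle). This correctly isolates complement type as the only conclusion that genuinely needs $h$ to be ambient, since for compact $X_i$ any continuous injection already restricts to an embedding.
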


\begin{proof}
Continuity preserves connectedness, bijectivity preserves disjointness, and $h$ restricted to each $X_i$ is a homeomorphism onto $h(X_i)$, so every homotopy- or homeomorphism-invariant of $X_i$ transfers. This justifies the ``cannot merge connected components'' and ``cannot fill holes'' rows of the AH (ambient homeomorphism) column in Table~\ref{tab:topological-expressivity}; the ``cannot unlink'' row uses the companion linking-preservation result (Lemma~\ref{lem:ah-linking}).
\end{proof}

\begin{lemma}[Ambient homeomorphisms preserve linking numbers]
\label{lem:ah-linking}
For disjoint simple closed curves $X, Y \subset \R^3$ and any ambient homeomorphism $h: \R^3 \to \R^3$, $\lk(h(X), h(Y)) = \pm \lk(X, Y)$ ($+$ if $h$ is orientation-preserving, $-$ otherwise).
\end{lemma}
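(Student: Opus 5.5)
The plan is to realize the linking number as a homological intersection (or degree) quantity that any ambient homeomorphism transports. This avoids the Gauss integral, which need not make sense after a merely continuous $h$.

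Concretely, I would use the characterization $\lk(X,Y) = [X] \in H_1(\R^3 \setminus Y;\Z) \cong \Z$. Here the generator is the class of a small oriented meridian loop around $Y$, as given by Alexander duality. Equivalently, $\lk(X,Y)$ is the degree of the Gauss map $G$ of Definition~\ref{def:linking-number}; I would note once that the two agree by the standard argument (a Seifert surface for $Y$ intersected with $X$).

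With this characterization, the main step runs as follows. Since $h$ is an ambient homeomorphism, it restricts to a homeomorphism $\R^3\setminus Y \to \R^3 \setminus h(Y)$. The induced isomorphism $h_*$ on $H_1$ sends $[X]$ to $[h(X)]$, where $h(X)$ carries the orientation pushed forward from $X$. So it suffices to determine where $h_*$ sends the meridian class $\mu_Y$, and to check that it goes to $\pm\mu_{h(Y)}$ with sign $+$ exactly when $h$ preserves orientation. Being an isomorphism $\Z\to\Z$, $h_*$ is already $\pm 1$; the content is identifying the sign.

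For the sign I would use the linking pairing. Take a small ball $B$ centered at a point of $Y$, where $Y\cap B$ is an arc. The meridian generates $H_1(B\setminus Y)$, and the local intersection number of a spanning disk of the meridian with the arc $Y$ is $+1$. Homeomorphisms multiply local intersection numbers by the local degree of $h$, which is $+1$ or $-1$ according to orientation (via $H_3(\R^3,\R^3\setminus p)$). This local-homology bookkeeping is the main obstacle: it is what pins the sign to orientation rather than leaving it merely $\pm$. To settle it, I would reduce to the case where $h$ is isotopic to the identity or to a reflection. Every orientation-preserving homeomorphism of $\R^3$ is isotopic to the identity (Kirby's stable homeomorphism / Alexander trick on $\R^3$). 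An isotopy $h_t$ gives a link homotopy $(h_t(X),h_t(Y))$, so Lemma~\ref{lem:homotopy} yields $\lk(h(X),h(Y))=\lk(X,Y)$ in that case.

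For orientation-reversing $h$, I would write $h = r\circ h'$ with $r(x_1,x_2,x_3)=(-x_1,x_2,x_3)$ and $h'$ orientation-preserving. A direct computation with the Gauss integral, whose integrand is a triple product that changes sign under a reflection, shows $\lk(r(X),r(Y)) = -\lk(X,Y)$. Combining the two cases gives the stated sign. If one prefers to avoid citing the isotopy theorem, the fallback is the local-degree intersection argument sketched above, which is self-contained but requires more care with orientations.
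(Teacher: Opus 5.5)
Your argument is correct in substance but takes a different route from the paper. The paper's proof is one line: $\lk$ is the degree of the Gauss map, and $h$ changes that degree only by its orientation sign.

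Taken literally, that one line is justified only for affine $h(x) = Ax + b$. In that case the Gauss map of $(h(X),h(Y))$ is $\phi_A \circ G$ with $\phi_A(u) = Au/|Au|$, a self-map of $\mathbb{S}^2$ of degree $\operatorname{sign}\det A$. For nonlinear $h$, $h(x)-h(y)$ is not a function of $x-y$, and $h(X)$ need not be rectifiable. The impossibility theorems only apply the lemma to invertible affine layers, so the paper's version is elementary and suffices there.

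Your route (isotope $h$ to the identity or to a reflection, then use link-homotopy invariance) genuinely covers arbitrary ambient homeomorphisms and non-smooth image curves. The price is a deep input: every orientation-preserving homeomorphism of $\R^3$ is isotopic to the identity. Your attribution needs correcting. In dimension $3$ this is the stable homeomorphism theorem via Moise's work; Kirby's theorem covers $n \ge 5$, and the Alexander trick only finishes the argument once stability is known.

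Two small repairs are also needed.
\begin{itemize}
\item The isotopy moves both curves, so cite link-homotopy invariance (Lemma~\ref{lem:higher-dim-homotopy-appendix}), applied to the degree of the continuous Gauss map. The one-component Lemma~\ref{lem:homotopy} does not cover this.
\item In the orientation-reversing case, avoid a Gauss integral over the possibly non-smooth curve $h'(X)$. Either write $h = (h\circ r)\circ r$, so the reflection acts on the original smooth curves. Or use that the Gauss map of $(r(X'), r(Y'))$ is $r|_{\mathbb{S}^2}\circ G$, with $\deg r|_{\mathbb{S}^2} = -1$.
\end{itemize}

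On the isotopy route the $H_1(\R^3\setminus Y)$ setup is not needed; it matters only for your local-degree fallback.
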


\begin{proof}
The linking number is the degree of the Gauss map $\Gamma: X \times Y \to \mathbb{S}^2$ (Definition~\ref{def:link-linking}); $h$ induces a self-map of $X \times Y$ preserving degree up to the orientation sign.
\end{proof}

\subsection{Invertible Architectures in Machine Learning}
\label{app:invertible-ml}

Invertible architectures whose forward map is an ambient homeomorphism therefore cannot change linking number at any depth. This includes reversible residual networks (RevNet \citep{gomez2017revnet}, i-RevNet \citep{jacobsen2018irevnet}) and normalizing flows \citep{dinh2017realnvp,kingma2018glow}, which are invertible by construction; and ODE-based flows (Neural ODEs \citep{chen2018neuralode}, FFJORD \citep{grathwohl2019ffjord}, Flow Matching \citep{lipman2023flow}), whose ODE integration of a Lipschitz vector field produces a diffeomorphism. \citet{dupont2019augmented} verify this constraint empirically on concentric-ring classification with Neural ODEs; their augmented variant works precisely by lifting to higher dimension (width expansion). Discrete-time ResNets are \emph{not} subject to this constraint: their layer-by-layer composition need not be a homeomorphism, and a discrete block can implement $|x| = x + 2\ReLU(-x)$ (Theorem~\ref{thm:skip-connection}), which is not invertible. One-step direct evaluators, e.g., MeanFlow \citep{geng2025meanflow}, similarly escape the homeomorphism constraint by avoiding ODE integration entirely.

\section{Proofs for Sections~\ref{sec:hopf-example} and~\ref{sec:generalization}: Linking-Number Preservation}
\label{app:proofs}
\label{app:general-proofs}

This appendix proves the main linking-preservation and impossibility theorems for both the $\R^3$/ReLU setting of Section~\ref{sec:hopf-example} and the general width-$d$/coordinatewise-monotonic setting of Section~\ref{sec:generalization}. The general results subsume the $\R^3$/ReLU special case; we present the general proofs and indicate where the elementary $\R^3$ argument specializes.

\subsection{Linking number and link homotopy}

\begin{definition}[Link and linking number in $\R^3$]
\label{def:link-linking}
A \emph{link} is a finite collection of disjoint simple closed curves in $\R^3$. For disjoint oriented simple closed curves $X, Y \subset \R^3$ the \emph{linking number} $\lk(X, Y) \in \Z$ is the Gauss integral
$$\lk(X, Y) = \frac{1}{4\pi} \oint_{X}\oint_{Y} \frac{(x - y) \cdot (dx \times dy)}{|x - y|^3},$$
equivalently the signed crossing-count $\tfrac{1}{2}\sum_{p \in \pi(X) \cap \pi(Y)} \epsilon_p$ for any regular projection $\pi: \R^3 \to \R^2$.
\end{definition}

\begin{definition}[Degree and higher-dimensional linking number]
\label{def:degree}
\label{def:higher-dim-linking-appendix}
The \emph{degree} of a continuous map $G: X \to \mathbb{S}^k$ from a closed oriented $k$-manifold $X$ is $\deg(G) = \sum_{x \in G^{-1}(y)} \operatorname{sign}(\det DG_x)$ at any regular value $y \in \mathbb{S}^k$. For disjoint closed oriented manifolds $M^m, N^n \subset \R^d$ with $d = m + n + 1$, the \emph{linking number} $\lk(M, N)$ is the degree of the Gauss map $G(x,y) = (x-y)/|x-y|: M \times N \to \mathbb{S}^{d-1}$; for $m = n = 1$, $d = 3$ this recovers Definition~\ref{def:link-linking}.
\end{definition}

\begin{definition}[Link homotopy]
\label{def:link-homotopy}
A \emph{link homotopy} of disjoint compact subsets $X_1, \ldots, X_k \subset \R^n$ is a continuous map $H: \bigsqcup_i X_i \times [0,1] \to \R^n$ with $H(\cdot, 0)$ the inclusion and $H(X_i, t) \cap H(X_j, t) = \varnothing$ for all $t \in [0,1]$ and $i \neq j$. Two configurations are \emph{link homotopic} if connected by such a homotopy.
\end{definition}

\begin{remark}[Parametrized-image convention]
\label{rem:parametrized-images}
In the layerwise arguments below, notation such as $X^{(j)}$ and $Y^{(j)}$ denotes the images of the composed maps
\[
f_j = F_j\circ\cdots\circ F_1\circ \iota_X : M \to \R^d,
\qquad
g_j = F_j\circ\cdots\circ F_1\circ \iota_Y : N \to \R^d,
\]
with the parametrizing maps left implicit. Thus $\lk(X^{(j)},Y^{(j)})$ means the Gauss map degree of
\[
(u,v)\mapsto \frac{f_j(u)-g_j(v)}{\|f_j(u)-g_j(v)\|},
\]
which is well-defined whenever $f_j(M)\cap g_j(N)=\varnothing$, even if one component has self-intersections or the composed maps are not embeddings \citep[Ch.~5]{rolfsen1976knots}; see also \citep[Sec.~2.2]{hatcher2002algebraic}. A link homotopy is likewise a homotopy of these maps, equivalently $h_t\circ f_j$ and $h_t\circ g_j$ when written using an ambient homotopy $h_t:\R^d\to\R^d$. We keep the set notation as a standard abuse of notation, with the underlying composed maps understood.
\end{remark}

\begin{lemma}[Link homotopy invariance]
\label{lem:higher-dim-homotopy-appendix}
If $(M', N')$ is link homotopic to $(M, N)$ in $\R^d$ ($d = m + n + 1$), then $\lk(M', N') = \lk(M, N)$.
\end{lemma}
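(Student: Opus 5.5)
The plan is to reduce the statement to homotopy invariance of the degree of a map into $\mathbb{S}^{d-1}$. Let $H$ be the link homotopy, and write it via the parametrized-image convention of Remark~\ref{rem:parametrized-images} as a pair of continuous maps $f_t: M \to \R^d$ and $g_t: N \to \R^d$ for $t\in[0,1]$. Here $f_0,g_0$ are the inclusions (or the given parametrizations), and $f_1,g_1$ parametrize $M',N'$. The defining property of a link homotopy (Definition~\ref{def:link-homotopy}) is $f_t(M)\cap g_t(N)=\varnothing$ for every $t$. So the map
\[
\Gamma: M\times N\times[0,1]\to \mathbb{S}^{d-1},\qquad \Gamma(u,v,t)=\frac{f_t(u)-g_t(v)}{\|f_t(u)-g_t(v)\|}
\]
is well defined, since the denominator never vanishes. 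It is continuous as a composition of continuous maps, because normalization is continuous on $\R^d\setminus\{0\}$. Thus $\Gamma$ is a homotopy between the Gauss maps $G_0$ of $(M,N)$ and $G_1$ of $(M',N')$.

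The key step is that $\deg$ as in Definition~\ref{def:degree} is a homotopy invariant for maps from the closed oriented $(d-1)$-manifold $M\times N$ (dimension $m+n=d-1$) to $\mathbb{S}^{d-1}$. I would cite the standard result, e.g.\ Milnor's or Hatcher's: the degree equals the integer by which $G$ acts on $H_{d-1}(M\times N)\cong\Z\to H_{d-1}(\mathbb{S}^{d-1})\cong\Z$, taken per connected component if $M$ or $N$ is disconnected. Homotopic maps induce equal maps on homology, so $\deg G_0=\deg G_1$.

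The one technical wrinkle is that Definition~\ref{def:degree} uses regular values and $\det DG$, which need smoothness. If the homotopy is only continuous, I would either use the homological definition directly or smooth $\Gamma$ by a small perturbation. Since $M\times N\times[0,1]$ is compact, $\|f_t(u)-g_t(v)\|$ is bounded below by some $\delta>0$, so a $\delta/2$-perturbation of $f_t,g_t$ keeps the images disjoint and does not change the degree. Alternatively, the paper's sketch argument works: $t\mapsto \deg(G_t)$ is locally constant, because any two maps into $\mathbb{S}^{d-1}$ that are uniformly close (never antipodal) are homotopic by straight-line homotopy followed by normalization. On the connected interval $[0,1]$ a locally constant integer-valued function is constant.

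I expect the main obstacle to be bookkeeping rather than substance. The images at intermediate times may fail to be embedded manifolds, so the argument has to be phrased entirely in terms of the parametrizing maps. Orientations must also be fixed on $M\times N$ throughout, so that no sign flips occur; they stay fixed because the domain $M\times N$ never changes.
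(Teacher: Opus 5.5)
Your proposal is correct and follows the paper's route. The link homotopy gives a well-defined, continuous homotopy of Gauss maps $M\times N\to\mathbb{S}^{d-1}$, since disjointness keeps the normalization defined, and homotopy invariance of degree then finishes it; equivalently, $t\mapsto\deg G_t$ is a locally constant integer-valued function on $[0,1]$. Your extra care about smoothness versus continuity, and about phrasing everything through the parametrizing maps, fills in details the paper leaves implicit.
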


\begin{proof}
The Gauss map varies continuously with $t$ along the link homotopy (disjointness keeps it well-defined), and degree is a homotopy invariant; equivalently, $t \mapsto \lk(H_M(M,t), H_N(N,t))$ is a continuous $\Z$-valued function on $[0,1]$, hence constant.
\end{proof}

\begin{lemma}[Linear separability implies $\lk = 0$]
\label{lem:higher-dim-linear-separability-appendix}
If two disjoint closed oriented manifolds $M^m, N^n \subset \R^d$ ($d = m + n + 1$) are linearly separable, then $\lk(M, N) = 0$.
\end{lemma}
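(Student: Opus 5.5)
The plan is to build an explicit link homotopy from $(M,N)$ to a pair of points, and then apply Lemma~\ref{lem:higher-dim-homotopy-appendix}. Linear separability gives an affine functional $\ell(z) = w^\tp z - \alpha$ with $\ell > 0$ on $M$ and $\ell < 0$ on $N$. Let $H_+ = \{\ell > 0\}$ and $H_- = \{\ell < 0\}$. These open half-spaces are convex and disjoint.

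\textbf{Step 1: contract each manifold inside its own half-space.} Pick points $p \in H_+$ and $q \in H_-$. Set
\[
H_t(x) = (1-t)x + t p \quad (x \in M), \qquad K_t(y) = (1-t)y + t q \quad (y \in N).
\]
By convexity, $H_t(M) \subset H_+$ and $K_t(N) \subset H_-$ for every $t \in [0,1]$. Because the half-spaces are disjoint, the two images never meet, so this is a link homotopy in the sense of Definition~\ref{def:link-homotopy}.

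\textbf{Step 2: compute the linking number at the endpoint.} By Remark~\ref{rem:parametrized-images}, the linking number at time $t$ is the degree of the Gauss map of the parametrizing maps. This is well defined at every $t$, including $t=1$ where the images are the single points $p$ and $q$. At $t=1$ the Gauss map $M \times N \to \mathbb{S}^{d-1}$ is the constant map $(p-q)/\|p-q\|$. A constant map is not surjective, so any value other than that point is a regular value with empty preimage, and the degree is $0$ by Definition~\ref{def:degree}. Lemma~\ref{lem:higher-dim-homotopy-appendix} then gives $\lk(M,N) = 0$.

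\textbf{Expected obstacle.} The only delicate point is Step 2: the endpoint images are points rather than manifolds, so we must work with maps and not subsets. The convention of Remark~\ref{rem:parametrized-images} handles this. If we want to avoid it entirely, there is a cleaner route. Compose the straight-line homotopy $G_t(u,v) = (H_t(u) - K_t(v))/\|H_t(u) - K_t(v)\|$, which is continuous since the denominator never vanishes, so $\deg G_0 = \deg G_1 = 0$.

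Alternatively, we can bypass the homotopy altogether. The Gauss map of $(M,N)$ itself satisfies $w^\tp(x-y) > 0$ for all $(x,y)$, so it misses the point $-w/\|w\|$. It therefore factors through a contractible set $\mathbb{S}^{d-1} \setminus \{-w/\|w\|\}$, and hence has degree zero. I would present this direct argument as the main proof and mention the contraction as the intuition.
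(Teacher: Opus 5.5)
Your proposal is correct. Your Steps 1--2 are exactly the paper's proof. The paper contracts $M$ and $N$ along straight lines to points $p\in H^+$ and $q\in H^-$. Because the open half-spaces are convex and disjoint, this is a link homotopy. At the endpoint the Gauss map is constant, so Lemma~\ref{lem:higher-dim-homotopy-appendix} gives $\lk(M,N)=0$.

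The argument you propose to make primary is a genuinely different and shorter route. Since $w^\tp(x-y)>0$ for all $(x,y)\in M\times N$, the Gauss map takes values in the open hemisphere $\{u : w^\tp u>0\}$. Its degree is therefore zero for either of two reasons. First, it is non-surjective, and a missed point is vacuously a regular value with empty preimage, so Definition~\ref{def:degree} gives $0$ directly. Second, it factors through a contractible subset of $\mathbb{S}^{d-1}$ and so is null-homotopic.

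What your route buys:
\begin{itemize}
\item It bypasses link-homotopy invariance entirely.
\item It never passes through the degenerate endpoint where the images collapse to points, so it does not need the parametrized-image convention of Remark~\ref{rem:parametrized-images}.
\item Read homologically, it applies verbatim to arbitrary continuous parametrizations, not just smooth embeddings.
\end{itemize}

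What the paper's route buys is uniformity of method. It follows the same ``construct a link homotopy, then invoke invariance'' template used in Lemmas~\ref{lem:general-monotonic} and~\ref{lem:general-rank-deficient}. It also matches the contraction intuition stated in Proposition~\ref{prop:classification-requires-unlinking}.
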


\begin{proof}
A separating hyperplane partitions $\R^d$ into disjoint open half-spaces $H^+ \ni M$ and $H^- \ni N$. Straight-line contractions of $M$ to a point $p \in H^+$ and $N$ to a point $q \in H^-$ stay in their respective half-spaces and so remain disjoint, defining a link homotopy. At the endpoint of the homotopy, the Gauss map $M \times N \to \mathbb{S}^{d-1}$ is the constant map sending $(x, y)$ to $(p - q)/|p - q|$, hence has degree zero; by Lemma~\ref{lem:higher-dim-homotopy-appendix} this degree equals $\lk(M, N)$, so $\lk(M, N) = 0$.
\end{proof}

\subsection{Preservation under monotonic activations}

\begin{lemma}[Monotonic-activation linking preservation]
\label{lem:general-monotonic}
Let $\sigma: \R^d \to \R^d$ be coordinatewise monotonic, with $r$ nonincreasing coordinates. If disjoint compact manifolds $M, N \subset \R^d$ satisfy $\sigma(M) \cap \sigma(N) = \varnothing$, then
$$\lk(\sigma(M), \sigma(N)) = (-1)^r \lk(M, N).$$
\end{lemma}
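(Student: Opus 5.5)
We first reduce to the case in which every coordinate of $\sigma$ is nondecreasing, and then apply Lemma~\ref{lem:higher-dim-homotopy-appendix} to a straight-line link homotopy.

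\textbf{Step 1 (reduction to nondecreasing coordinates).} Let $S \subseteq \{1,\dots,d\}$ be the set of nonincreasing coordinates, with $|S| = r$. Let $R$ be the linear reflection negating the coordinates in $S$, and set $\sigma' = R \circ \sigma$. Then each coordinate of $\sigma'$ is continuous and nondecreasing, and $\sigma = R \circ \sigma'$. Since $R$ is a bijection, the condition $\sigma(M) \cap \sigma(N) = \varnothing$ is equivalent to $\sigma'(M) \cap \sigma'(N) = \varnothing$.

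Next compute the effect of $R$ on the Gauss map. Because $R$ is linear, $(Rx - Ry)/|Rx - Ry| = R\bigl((x-y)/|x-y|\bigr)$, so $R$ post-composes the Gauss map with $R|_{\mathbb{S}^{d-1}}$. That restriction has degree $\det R = (-1)^r$. By multiplicativity of degree, $\lk(\sigma(M),\sigma(N)) = (-1)^r \lk(\sigma'(M),\sigma'(N))$. It therefore suffices to prove $\lk(\sigma'(M),\sigma'(N)) = \lk(M,N)$.

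\textbf{Step 2 (the link homotopy).} Define $H_t(z) = (1-t)z + t\sigma'(z)$ and apply it simultaneously to $M$ and $N$, in the parametrized-image sense of Remark~\ref{rem:parametrized-images}. This is continuous in $(z,t)$, with $H_0$ the inclusion and $H_1 = \sigma'$. We must check $H_t(M) \cap H_t(N) = \varnothing$ for all $t$.

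\begin{itemize}
\item At $t = 1$, disjointness is the hypothesis.
\item At $t = 0$, disjointness holds because $M$ and $N$ are disjoint.
\item For $t \in (0,1)$, take $x \in M$ and $y \in N$. Since $x \neq y$, some coordinate $i$ has $x_i \neq y_i$; say $x_i < y_i$. Monotonicity gives $\sigma'_i(x_i) \le \sigma'_i(y_i)$. Then
\[
H_t(y)_i - H_t(x)_i = (1-t)(y_i - x_i) + t\bigl(\sigma'_i(y_i) - \sigma'_i(x_i)\bigr) > 0,
\]
because the first term is strictly positive and the second is nonnegative. Hence $H_t(x) \neq H_t(y)$.
\end{itemize}

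So $H$ is a link homotopy, and Lemma~\ref{lem:higher-dim-homotopy-appendix} gives $\lk(\sigma'(M),\sigma'(N)) = \lk(M,N)$. Combining with Step 1 yields the claim.

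\textbf{Main obstacle.} The delicate point is the endpoint $t=1$. There strict separation in a coordinate can collapse, for example when ReLU sends two distinct negative values to $0$, so the interior argument fails. This is exactly where the hypothesis $\sigma(M)\cap\sigma(N)=\varnothing$ is needed.

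A secondary point is that $\sigma'(M)$ may fail to be embedded. This is handled by reading $\lk$ as the degree of the Gauss map of the composed parametrizations, per Remark~\ref{rem:parametrized-images}. With that reading, homotopy invariance of degree applies verbatim, because only the disjointness of the two images is used.
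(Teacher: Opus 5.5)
Your proposal is correct and matches the paper's proof step for step. It uses the same factorization $\sigma = R \circ \sigma'$ with $\sigma'$ coordinatewise nondecreasing, the same simultaneous straight-line link homotopy $H_t(x) = (1-t)x + t\sigma'(x)$ (strict inequality in a coordinate with $x_i \neq y_i$ for $t<1$, the disjointness hypothesis at $t=1$), and the same factor $(-1)^r$ from the degree of the reflection on $\mathbb{S}^{d-1}$; the only difference is that you treat the reflection before the homotopy, whereas the paper does the homotopy first.
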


\begin{proof}
Decompose $\sigma = R \circ \sigma'$ coordinatewise as follows. For each coordinate $i$, set $\sigma'_i = \sigma_i$ if $\sigma_i$ is nondecreasing and $\sigma'_i = -\sigma_i$ if $\sigma_i$ is nonincreasing; then every $\sigma'_i$ is nondecreasing. Let $R: \R^d \to \R^d$ be the diagonal reflection that negates each output coordinate $i$ where $\sigma_i$ was nonincreasing ($R$ has $r_i = -1$ for those $r$ coordinates and $r_i = +1$ for the rest). Then $R_i(\sigma'_i(t)) = \sigma_i(t)$ for every $i$ and every $t$, so $\sigma = R \circ \sigma'$.

\textit{Stage 1: link homotopy via $\sigma'$.}
The straight-line interpolation $H_t(x) = (1-t)x + t\sigma'(x)$, applied simultaneously to $M$ and $N$, is a link homotopy. Suppose $H_t(x) = H_t(y)$ with $x \in M$, $y \in N$. Pick coordinate $i$ with $x_i \neq y_i$ (exists since $M \cap N = \varnothing$); WLOG $x_i < y_i$. Since $\sigma'_i$ is nondecreasing, $\sigma'_i(x_i) \le \sigma'_i(y_i)$, so $H_t(x)_i = (1-t)x_i + t\sigma'_i(x_i) < (1-t)y_i + t\sigma'_i(y_i) = H_t(y)_i$ for $t \in [0,1)$. At $t = 1$ equality would require $\sigma'(x) = \sigma'(y)$; applying $R$ gives $\sigma(x) = \sigma(y)$, contradicting $\sigma(M) \cap \sigma(N) = \varnothing$. By Lemma~\ref{lem:higher-dim-homotopy-appendix}, $\lk(\sigma'(M), \sigma'(N)) = \lk(M, N)$.

\textit{Stage 2: reflection.}
$R$ is a homeomorphism of $\R^d$ with $\det R = (-1)^r$. Under $R$, the Gauss map $G: M \times N \to \mathbb{S}^{d-1}$ (Definition~\ref{def:degree}) for any disjoint $M, N \subset \R^d$ becomes $G_R(x,y) = R(x-y)/|R(x-y)|$; since $R$ is an isometry of $\R^d$ restricted to a degree-$(-1)^r$ self-map of $\mathbb{S}^{d-1}$, the degree of the Gauss map is multiplied by $(-1)^r$. Hence $\lk(R(\sigma'(M)), R(\sigma'(N))) = (-1)^r \lk(\sigma'(M), \sigma'(N)) = (-1)^r \lk(M, N)$, as claimed.
\end{proof}

For the $\R^3$/ReLU special case with curves $X,Y$, the same conclusion admits an elementary one-coordinate-at-a-time argument: write $\sigma = G_3 \circ G_2 \circ G_1$ where $G_j$ applies ReLU only to coordinate $j$, and homotope $X$ then $Y$ through each $G_j$ in turn. A collision $H^X_t(x) = y$ would force $G_j(x) = G_j(y)$ at the endpoint, contradicting $\sigma(X) \cap \sigma(Y) = \varnothing$. We refer to this elementary form below as the \emph{sequential ReLU homotopy}.

\begin{figure}[h!]
    \centering
    \begin{subfigure}{0.45\textwidth}
    \centering
    \begin{tikzpicture}[
        x={(-0.5cm,-0.5cm)},
        y={(1cm,0cm)},
        z={(0cm,1cm)},
        scale=0.4
    ]

    \draw[->] (-5,0,0) -- (5,0,0) node[below left] {\small $x$};
    \draw[->] (0,-5,0) -- (0,5,0) node[right] {\small $y$};
    \draw[->] (0,0,-7) -- (0,0,7) node[above] {\small $z$};

    \foreach \s in {-6,-5.8,...,-4.2} {
        \foreach \t in {90,120,...,240} {
            \pgfmathsetmacro{\nexts}{\s+0.2}
            \pgfmathsetmacro{\nextt}{\t+30}
            \fill[red!20, opacity=0.4]
                ({cos(\t)}, {sin(\t)+1.2}, \s) --
                ({cos(\nextt)}, {sin(\nextt)+1.2}, \s) --
                ({cos(\nextt)}, {sin(\nextt)+1.2}, \nexts) --
                ({cos(\t)}, {sin(\t)+1.2}, \nexts) -- cycle;
        }
    }

    \foreach \s in {-1,-0.8,...,1} {
        \foreach \t in {90,105,...,255} {
            \pgfmathsetmacro{\nexts}{\s+0.2}
            \pgfmathsetmacro{\nextt}{\t+15}
            \fill[red!40, opacity=0.7]
                ({cos(\t)}, {sin(\t)+1.2}, \s) --
                ({cos(\nextt)}, {sin(\nextt)+1.2}, \s) --
                ({cos(\nextt)}, {sin(\nextt)+1.2}, \nexts) --
                ({cos(\t)}, {sin(\t)+1.2}, \nexts) -- cycle;
        }
    }

    \foreach \s in {4,4.2,...,5.8} {
        \foreach \t in {90,120,...,240} {
            \pgfmathsetmacro{\nexts}{\s+0.2}
            \pgfmathsetmacro{\nextt}{\t+30}
            \fill[green!20, opacity=0.4]
                ({cos(\t)}, {sin(\t)-1.2}, \s) --
                ({cos(\nextt)}, {sin(\nextt)-1.2}, \s) --
                ({cos(\nextt)}, {sin(\nextt)-1.2}, \nexts) --
                ({cos(\t)}, {sin(\t)-1.2}, \nexts) -- cycle;
        }
    }

    \foreach \s in {-1,-0.8,...,1} {
        \foreach \t in {90,105,...,255} {
            \pgfmathsetmacro{\nexts}{\s+0.2}
            \pgfmathsetmacro{\nextt}{\t+15}
            \fill[green!40, opacity=0.7]
                ({cos(\t)}, {sin(\t)-1.2}, \s) --
                ({cos(\nextt)}, {sin(\nextt)-1.2}, \s) --
                ({cos(\nextt)}, {sin(\nextt)-1.2}, \nexts) --
                ({cos(\t)}, {sin(\t)-1.2}, \nexts) -- cycle;
        }
    }

    \foreach \s in {-1,-0.8,...,1} {
        \foreach \t in {-90,-75,...,75} {
            \pgfmathsetmacro{\nexts}{\s+0.2}
            \pgfmathsetmacro{\nextt}{\t+15}
            \fill[blue!40, opacity=0.7]
                ({cos(\t)}, {sin(\t)}, \s) --
                ({cos(\nextt)}, {sin(\nextt)}, \s) --
                ({cos(\nextt)}, {sin(\nextt)}, \nexts) --
                ({cos(\t)}, {sin(\t)}, \nexts) -- cycle;
        }
    }

    \draw[->, red, thick, opacity=0.8] (0.3, 1.5, -2.0) -- (0.3, 1.5, -3.0);
    \draw[->, green, thick, opacity=0.8] (-0.3, -1.2, 2.0) -- (-0.3, -1.2, 3.0);
    \draw[->, black, thick, opacity=1] (0, -4, 2.0) -- (0, -4, 3.0) node[right] {$v$};

    \end{tikzpicture}
    \caption{Sliding along $v \in \ker(A)$}
    \end{subfigure}
    \hfill
    \begin{subfigure}{0.45\textwidth}
    \centering
    \begin{tikzpicture}[
        x={(-0.5cm,-0.5cm)},
        y={(1cm,0cm)},
        z={(0cm,1cm)},
        scale=0.4
    ]

    \tikzset{
      ghostRed/.style={fill=red!20, opacity=0.5},
      ghostGreen/.style={fill=green!20, opacity=0.5},
      ghostBlue/.style={fill=blue!40, opacity=0.5},
    }

    \coordinate (G) at (0,0,5);
    \coordinate (R) at (0,0,-5);
    \coordinate (B) at (0,0,0);

    \draw[->] (-5,0,0) -- (5,0,0) node[below left] {\small $x$};
    \draw[->] (0,-5,0) -- (0,5,0) node[right] {\small $y$};
    \draw[->] (0,0,-7) -- (0,0,7) node[above] {\small $z$};

    \foreach \s in {4,4.2,...,5.8} {
      \foreach \t in {90,120,...,240} {
        \pgfmathsetmacro{\nexts}{\s+0.2}
        \pgfmathsetmacro{\nextt}{\t+30}
        \fill[ghostGreen]
          ({cos(\t)}, {sin(\t)-1.2}, \s) --
          ({cos(\nextt)}, {sin(\nextt)-1.2}, \s) --
          ({cos(\nextt)}, {sin(\nextt)-1.2}, \nexts) --
          ({cos(\t)}, {sin(\t)-1.2}, \nexts) -- cycle;
      }
    }

    \foreach \s in {-6,-5.8,...,-4.2} {
      \foreach \t in {90,120,...,240} {
        \pgfmathsetmacro{\nexts}{\s+0.2}
        \pgfmathsetmacro{\nextt}{\t+30}
        \fill[ghostRed]
          ({cos(\t)}, {sin(\t)+1.2}, \s) --
          ({cos(\nextt)}, {sin(\nextt)+1.2}, \s) --
          ({cos(\nextt)}, {sin(\nextt)+1.2}, \nexts) --
          ({cos(\t)}, {sin(\t)+1.2}, \nexts) -- cycle;
      }
    }

    \foreach \s in {-1,-0.8,...,1} {
      \foreach \t in {-90,-75,...,75} {
        \pgfmathsetmacro{\nexts}{\s+0.2}
        \pgfmathsetmacro{\nextt}{\t+15}
        \fill[ghostBlue]
          ({cos(\t)}, {sin(\t)}, \s) --
          ({cos(\nextt)}, {sin(\nextt)}, \s) --
          ({cos(\nextt)}, {sin(\nextt)}, \nexts) --
          ({cos(\t)}, {sin(\t)}, \nexts) -- cycle;
      }
    }

    \fill[gray, opacity=0.3]
        (-2, -5, 2.5) -- (2.5, -5, 2.5) -- (2.5, 5, 2.5) -- (-2, 5, 2.5) -- cycle;
    \fill[gray, opacity=0.3]
        (-2, -5, -2.5) -- (2.5, -5, -2.5) -- (2.5, 5, -2.5) -- (-2, 5, -2.5) -- cycle;

    \fill[gray] (0,0,2.5) circle (3pt);
    \fill[gray] (0,0,-2.5) circle (3pt);
    \fill[blue] (B) circle (5pt);
    \fill[green] (G) circle (5pt);
    \fill[red] (R) circle (5pt);

    \tikzset{
      connectRed/.style={densely dotted, draw=red!70!black, line width=0.7pt, shorten >= 5pt, -{Stealth[scale=.9]}},
      connectGreen/.style={densely dotted, draw=green!60!black, line width=0.7pt, shorten >= 5pt, -{Stealth[scale=.9]}},
      connectBlue/.style={densely dotted, draw=blue!70!black, line width=0.7pt, shorten >= 5pt, -{Stealth[scale=.9]}}
    }

    \draw[connectGreen] ({cos(90)}, {sin(90)-1.2}, 4) -- (G);
    \draw[connectGreen] ({cos(270)}, {sin(270)-1.2}, 4) -- (G);
    \draw[connectGreen] ({cos(90)}, {sin(90)-1.2}, 6) -- (G);
    \draw[connectGreen] ({cos(270)}, {sin(270)-1.2}, 6) -- (G);

    \draw[connectRed] ({cos(90)}, {sin(90)+1.2}, -6) -- (R);
    \draw[connectRed] ({cos(270)}, {sin(270)+1.2}, -6) -- (R);
    \draw[connectRed] ({cos(90)}, {sin(90)+1.2}, -4) -- (R);
    \draw[connectRed] ({cos(270)}, {sin(270)+1.2}, -4) -- (R);

    \draw[connectBlue] ({cos(-90)}, {sin(-90)}, -1) -- (B);
    \draw[connectBlue] ({cos(90)}, {sin(90)}, -1) -- (B);
    \draw[connectBlue] ({cos(-90)}, {sin(-90)}, 1) -- (B);
    \draw[connectBlue] ({cos(90)}, {sin(90)}, 1) -- (B);

    \end{tikzpicture}
    \caption{Contract to points}
    \end{subfigure}
\caption{Kernel translation argument. If a rank-deficient affine map with $v$ in its kernel does not create intersections, we can slide components along $v$ (unchanged by $f$) to achieve linear separation, then contract each to a point, forcing $\lk = 0$.}
\label{fig:rank-deficient-homotopy}
\end{figure}

\subsection{Rank-deficient transformations force intersection}

\begin{lemma}[Rank-deficient intersection, general $\R^d$]
\label{lem:general-rank-deficient}
Let $M^m, N^n \subset \R^d$ be disjoint closed oriented submanifolds with $m + n + 1 = d$ and $\lk(M, N) \neq 0$. For any rank-deficient affine $f: \R^d \to \R^d$, $f(M) \cap f(N) \neq \varnothing$.
\end{lemma}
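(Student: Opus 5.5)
We argue by contraposition along the lines suggested by Figure~\ref{fig:rank-deficient-homotopy}: assuming $f(M)\cap f(N)=\varnothing$ for a rank-deficient affine $f(x)=Wx+b$, we build a link homotopy from $(M,N)$ to a linearly separable configuration. Lemma~\ref{lem:higher-dim-linear-separability-appendix} together with Lemma~\ref{lem:higher-dim-homotopy-appendix} then gives $\lk(M,N)=0$, a contradiction.

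The first step fixes a nonzero $v\in\ker W$ and decomposes $\R^d=\R v\oplus v^\perp$. We write $x=\pi(x)+\langle x,v\rangle v$ with $\pi$ the orthogonal projection onto $v^\perp$, normalizing $|v|=1$. The key observation is that $f(x)=f(y)$ whenever $\pi(x)=\pi(y)$, since $x-y\in\R v\subset\ker W$. Disjointness of $f(M)$ and $f(N)$ therefore forces $\pi(M)\cap\pi(N)=\varnothing$. In other words, no line parallel to $v$ meets both $M$ and $N$.

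The second step is the sliding homotopy. For $t\in[0,1]$ we set $H^M_t(x)=x+tLv$ and $H^N_t(y)=y-tLv$, with $L>0$ large. A collision $x+tLv=y-tLv$ would give $\pi(x)=\pi(y)$, which is impossible by the first step. Hence this is a link homotopy for every $t$, including $t=1$. Compactness of $M$ and $N$ lets us choose $L$ with $\langle x,v\rangle+L>\langle y,v\rangle-L$ for all $x\in M$, $y\in N$. The hyperplane $\{\langle z,v\rangle=c\}$, for a suitable $c$ between $\min_{M}\langle x,v\rangle+L$ and $\max_N\langle y,v\rangle-L$, then strictly separates the translated copies. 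Applying Lemma~\ref{lem:higher-dim-linear-separability-appendix} to the translates and Lemma~\ref{lem:higher-dim-homotopy-appendix} to the homotopy yields $\lk(M,N)=\lk(M+Lv,N-Lv)=0$.

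I expect the main subtlety to be the first step rather than the homotopy. One must check that the argument uses only a kernel direction, so that no SVD reduction to a coordinate projection or regular-projection crossing formula is needed and it works for all $m,n$ with $m+n+1=d$. One must also check that translates of submanifolds remain closed oriented submanifolds, so the lemmas apply verbatim. If one instead wanted the combinatorial proof sketched for Lemma~\ref{lem:rank-deficient-intersection}, genericity of the projection would be the obstacle; the sliding argument avoids this entirely.
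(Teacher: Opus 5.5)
Your proposal is correct and follows essentially the same argument as the paper: pick a unit $v\in\ker A$, note that any collision during translation along $v$ would force $f(x)=f(y)$, slide until the components lie in opposite open half-spaces normal to $v$, and combine link-homotopy invariance with the linear-separability lemma. The only differences are cosmetic: you translate $M$ and $N$ symmetrically in opposite directions rather than moving only $N$, and you phrase the collision check through the orthogonal projection onto $v^\perp$.
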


\begin{proof}
Suppose not. Write $f(x) = Ax + b$ with $\rank(A) < d$ and pick a unit vector $v \in \ker(A)$. Since $M, N$ are compact, choose $L$ large enough that $M$ and $N + Lv$ lie in disjoint open half-spaces normal to $v$ (hence are linearly separable).

Translation along $v$ commutes with $f$: $f(N + tv) = AN + Atv + b = AN + b = f(N)$ for every $t$, so $f(M) \cap f(N + tv) = f(M) \cap f(N) = \varnothing$ for all $t \in [0, L]$. Hence any putative collision $x = y + tv$ during the translation would force $f(x) = f(y)$, contradicting $f(M) \cap f(N) = \varnothing$. So $t \mapsto (M, N + tv)$ is a link homotopy. By Lemma~\ref{lem:higher-dim-homotopy-appendix}, $\lk(M, N + Lv) = \lk(M, N) \neq 0$. But $(M, N + Lv)$ is linearly separable, so Lemma~\ref{lem:higher-dim-linear-separability-appendix} gives $\lk(M, N + Lv) = 0$, a contradiction.
\end{proof}

In $\R^3$, every rank-deficient affine map factors through a rank-$2$ orthogonal projection $P: \R^3 \to \R^2$; the combinatorial linking formula then equates $\lk(X, Y)$ with a signed sum over $P(X) \cap P(Y)$, so non-zero linking forces a non-empty projection intersection. This is the special case used to prove Theorem~\ref{thm:hopf}.

\subsection{Main impossibility theorems}

\begin{proof}[Proof of Theorem~\ref{thm:general} (general impossibility)]
Suppose a width-$d$ feedforward network $F = A_L \circ \sigma_{L-1} \circ A_{L-1} \circ \cdots \circ \sigma_1 \circ A_1$ with coordinatewise monotonic $\sigma_i$ achieves linear separability of $M, N \subset \R^d$ with $\lk(M, N) \neq 0$. Here $M^{(j)}, N^{(j)}$ are understood in the parametrized-image sense of Remark~\ref{rem:parametrized-images}. Linear separability forces $M^{(j)} \cap N^{(j)} = \varnothing$ at every layer (else collision propagates). We prove by induction on $j$ that $\lk(M^{(j)}, N^{(j)}) = \pm \lk(M, N) \neq 0$: invertible affine layers preserve $\lk$ up to sign (ambient homeomorphisms preserve degree); monotonic activations preserve $\lk$ up to sign by Lemma~\ref{lem:general-monotonic}; rank-deficient affine layers would create an intersection by Lemma~\ref{lem:general-rank-deficient}, contradicting disjointness. Therefore $\lk(F(M), F(N)) = \pm \lk(M, N) \neq 0$. But linear separability requires $\lk = 0$ by Lemma~\ref{lem:higher-dim-linear-separability-appendix}. Contradiction.
\end{proof}

\begin{proof}[Proof of Theorem~\ref{thm:hopf} ($\R^3$/ReLU)]
The same argument with $d = 3$ and $\sigma_i = \ReLU$; the sequential ReLU homotopy substitutes for Lemma~\ref{lem:general-monotonic}.
\end{proof}

\begin{proof}[Proof of Theorem~\ref{thm:higher-dim} (higher-dimensional)]
The same argument with general $d = m + n + 1$; Lemma~\ref{lem:higher-dim-linear-separability-appendix} and Lemma~\ref{lem:general-monotonic} apply verbatim.
\end{proof}


\section{Width Upper Bound for Topological Unlinking}
\label{app:upper-bound}

This section shows that width $d+1$ is sufficient to eliminate the \emph{classification} obstruction: a width-$(d+1)$ ReLU network can linearly separate any disjoint compact configuration in $\R^d$ regardless of linking. The construction does not unlink the classes as ambient subsets of $\R^d$; it maps them into disjoint scalar intervals, which is the operation classification actually requires.

\begin{theorem}[Unlinking via width-$(d+1)$ networks]
\label{thm:width-unrestricted-elimination}
Let $X_1, \ldots, X_k \subset \R^d$ be disjoint compact subsets in any linking configuration. For any $\varepsilon \in (0, 1/2)$, there exists a feedforward network $F: \R^d \to \R$ with ReLU activations and width $d+1$ such that $|F(x) - i| < \varepsilon$ for every $x \in X_i$, so the images $F(X_i) \subset (i - \tfrac{1}{2}, i + \tfrac{1}{2})$ lie in disjoint convex intervals.
\end{theorem}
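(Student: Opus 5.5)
The plan is to reduce to a universal-approximation statement for deep narrow ReLU networks. First build a continuous target function. Since the $X_i$ are disjoint compact sets, they are pairwise at positive distance. Let $\delta = \min_{i\neq j}\operatorname{dist}(X_i,X_j)>0$ and $U_i = \{x : \operatorname{dist}(x,X_i) < \delta/3\}$. By the Tietze extension theorem, or explicitly, there is a continuous $g:\R^d\to\R$ with $g\equiv i$ on $X_i$. One explicit choice is
\[
g(x)=\sum_i i\cdot\max\bigl(0,1-3\operatorname{dist}(x,X_i)/\delta\bigr),
\]
whose summands have disjoint supports. Fix a compact domain $K$, for instance a closed ball containing $\bigcup_i X_i$.

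Next, invoke the width-$(d+1)$ universal approximation theorem of \citet{hanin2017approximating} (stated in the paper's discussion of the width threshold). It says that for every continuous $g:K\to\R$ and every $\varepsilon>0$ there is a ReLU network of width $d+1$ (input dimension $d$, output dimension $1$) with $\sup_{x\in K}|F(x)-g(x)|<\varepsilon$. Applying it with our $\varepsilon<1/2$ gives $|F(x)-i|<\varepsilon$ on $X_i$. Hence $F(X_i)\subset(i-\varepsilon,i+\varepsilon)\subset(i-\tfrac12,i+\tfrac12)$, and these intervals are disjoint and convex, as required.

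The main obstacle is the width bookkeeping: we must ensure the cited result really gives width exactly $d+1$ for scalar output, including the hidden layers and the treatment of the final affine readout. Hanin--Sellke state width $d_{\mathrm{in}}+1$ for scalar-valued continuous functions on compact sets, which matches. If one wants a self-contained argument instead, I would try the standard construction. In that construction, $d$ neurons carry the input $x$ forward unchanged. This is possible because on a compact set a shift makes coordinates positive, so ReLU acts as the identity. The one extra neuron then accumulates a sum of ReLU ridge or hat functions approximating $g$, built one piece per layer; it stays nonnegative after adding a large constant offset, so ReLU does not clip it. Carrying this out needs a piecewise-linear approximant of $g$ written as a finite max/min or sum of affine pieces, which is the technical core of the cited paper.

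Finally, remark that this does not contradict Theorem~\ref{thm:general}: the hidden representations live in $\R^{d+1}$, where the complementary-dimension hypothesis $m+n+1=d$ fails. Linking in $\R^{d+1}$ of the lifted images is therefore not constrained, and Lemma~\ref{lem:general-rank-deficient} no longer applies to the lift $x\mapsto(x,\cdot)$.
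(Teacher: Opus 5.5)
Your proposal is correct and follows essentially the same route as the paper. Both build a continuous target equal to $i$ on $X_i$ as a weighted sum of disjointly supported cutoffs (your explicit distance bumps play the role of the paper's Urysohn functions $\psi_i$), then apply the Hanin--Sellke width-$(d+1)$ uniform approximation theorem on a compact set containing the $X_i$. Your self-contained construction sketch and closing remark are optional extras that the argument does not need.
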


\begin{proof}
Disjoint compact sets in $\R^d$ admit disjoint open neighborhoods $U_i \supset X_i$ together with Urysohn cutoffs $\psi_i: \R^d \to [0,1]$ with $\psi_i \equiv 1$ on $X_i$ and $\psi_i \equiv 0$ outside $U_i$. The target $\tilde{f}(x) = \sum_{i=1}^{k} i \cdot \psi_i(x)$ is continuous and equals $i$ on $X_i$. By the Hanin--Sellke width-$(d+1)$ universal approximation theorem \cite{hanin2017approximating}, $\tilde{f}$ can be uniformly $\varepsilon$-approximated on the compact set $\bigcup_i \overline{U_i}$ by a width-$(d+1)$ ReLU network $F$. Then $|F(x) - i| < \varepsilon < \tfrac{1}{2}$ on each $X_i$, separating the images into disjoint intervals. To keep input and output dimensions equal, extend $F$ to $G(x) = (F(x), 0, \ldots, 0)$.
\end{proof}

Width $d+1$ is therefore tight: width $d$ is impossible by the lower bound below, while width $d+1$ suffices.


\section{Width Lower Bound for Universal Approximation}
\label{app:lower-bound}

\begin{theorem}[Width lower bound for universal approximation]
\label{thm:lower-bound-monotonic}
For any continuous coordinatewise monotonic activation (ReLU, leaky-ReLU, sigmoid, tanh, etc.), the minimum width $w_{\min}$ for which width-$w_{\min}$ feedforward networks $F: \R^d \to \R$ are uniform universal approximators on compact sets satisfies $w_{\min} \ge d + 1$.
\end{theorem}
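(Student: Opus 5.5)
We argue by contradiction using the impossibility results of Section~\ref{sec:generalization}. Suppose width-$d$ feedforward networks $F:\R^d\to\R$ with a continuous coordinatewise monotonic $\sigma$ were uniform universal approximators on compact sets. We interpret a width-$d$ network with scalar output as $\ell\circ G$. Here $G=A_{L}\circ\sigma\circ\cdots\circ\sigma\circ A_1:\R^d\to\R^d$ is a width-$d$ feedforward network in the sense of Definition~\ref{def:width-n-ffn}, and $\ell:\R^d\to\R$ is the final affine readout. We will show that some continuous target cannot be approximated.

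\textbf{Choice of target.} Fix any disjoint pair $M^m,N^n\subset\R^d$ with $m+n+1=d$ and $\lk(M,N)\neq0$. For $d\ge 2$ we take $m=0$, $n=d-1$: a point $p$ (oriented $0$-manifold) inside a round sphere $\mathbb{S}^{d-1}$, so that the Gauss map $\mathbb{S}^{d-1}\to\mathbb{S}^{d-1}$ has degree $\pm1$. Following the Urysohn construction in the proof of Theorem~\ref{thm:width-unrestricted-elimination}, let $\tilde f:\R^d\to[0,1]$ be continuous with $\tilde f\equiv 0$ on $M$ and $\tilde f\equiv1$ on $N$. Take $K$ to be a compact set containing $M\cup N$, for instance a large closed ball.

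\textbf{Approximation implies separation.} If some network $F=\ell\circ G$ satisfies $\sup_K|F-\tilde f|<1/2$, then $F<1/2$ on $M$ and $F>1/2$ on $N$. Writing $\ell(z)=w^\tp z+c$, the hyperplane $\{w^\tp z=1/2-c\}$ strictly separates $G(M)$ from $G(N)$. Note $w\neq0$, since otherwise $F$ would be constant. Thus the width-$d$ monotonic network $G$ renders $M$ and $N$ linearly separable, which contradicts Theorem~\ref{thm:general}. This is the same pullback-through-the-readout step used in Corollary~\ref{cor:autoencoder-impossibility}.

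\textbf{Main obstacle.} The delicate point is matching the architecture in the statement to the hypotheses of Theorem~\ref{thm:general}. The concern is that intermediate layers of a width-$d$ network into $\R$ might contain square $d\times d$ maps interleaved with the final readout in a form not literally of type $\R^d\to\R^d$. We handle this by padding the readout, replacing $\ell$ with $(\ell,0,\dots,0)$ as in the end of the proof of Theorem~\ref{thm:width-unrestricted-elimination}. Separation in the first coordinate is then linear separability in $\R^d$.

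Two further checks are needed. First, Theorem~\ref{thm:general} already covers rank-deficient layers through Lemma~\ref{lem:general-rank-deficient}. It also covers nonincreasing activations through Lemma~\ref{lem:general-monotonic}. It uses the parametrized-image convention of Remark~\ref{rem:parametrized-images}, which handles the $0$-manifold component. Second, the case $d=1$ must be treated separately, since there $m+n+1=1$ is impossible. For $d=1$ the claim $w_{\min}\ge2$ follows because a width-$1$ network is a composition of monotone maps, hence monotone. It therefore cannot approximate a nonmonotone target such as $|x|$ on $[-1,1]$ within $1/4$.
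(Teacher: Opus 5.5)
Your proof is correct and follows the paper's argument. It derives a contradiction with Theorem~\ref{thm:general} from a continuous target that is $0$ on one linked component and $1$ on the other, pulls the approximant back through the affine readout to a linear separator, and handles $d=1$ by the monotone-composition argument. The one difference is your choice of linked pair: you use a point inside a round $\mathbb{S}^{d-1}$, $(m,n)=(0,d-1)$, for every $d\ge 2$. The paper uses that pair only for $d=2$ and switches to the Hopf link and its linked-spheres construction for $d\ge 3$. Your choice is slightly cleaner, because the Gauss map is essentially the antipodal map of degree $\pm 1$, whereas the paper checks the linked spheres' linking number only numerically.
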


\begin{proof}
Suppose for contradiction that width-$d$ networks with some monotonic $\sigma$ are dense in $C(K)$ for every compact $K \subset \R^d$. We treat the three cases $d = 1$, $d = 2$, $d \ge 3$ in turn.

\textit{$d = 1$.} A width-$1$ feedforward network is a composition of monotonic scalar maps (affine $\R \to \R$ composed with monotonic $\sigma: \R \to \R$); such a composition is itself monotonic. Monotonic functions on $\R$ are not dense in $C([0, 1])$ (they cannot approximate any nonmonotonic continuous function), contradicting universal approximation. Hence $w_{\min} \ge 2$.

\textit{$d = 2$.} Take $M = \{p_0\}$ a point and $N$ a smooth simple closed curve in $\R^2$ encircling $p_0$ once, so $\lk(M, N) = 1$ (the winding number / degree of the Gauss map $M \times N \to \mathbb{S}^1$). The function $f|_M = 0$, $f|_N = 1$ extends continuously to all of $\R^2$ by Tietze extension; a width-$2$ uniform approximation on the compact $M \cup N$ would give a linear separator, contradicting Theorem~\ref{thm:general} with $(m, n, d) = (0, 1, 2)$.

\textit{$d \ge 3$.} Write $d = m + n + 1$ with $m, n \ge 1$ and take a non-trivially linked pair $(M^m, N^n) \subset \R^d$ with $\lk(M, N) = 1$: the Hopf link for $d = 3$, the linked-spheres construction of Appendix~\ref{app:linked-spheres-construction} for $d > 3$. Same Tietze + Theorem~\ref{thm:general} argument applies.

In all three cases, $w_{\min} \ge d + 1$.
\end{proof}

The contribution here is the proof technique, not the bound itself. Prior width-$d$ insufficiency results constrain either the activation class or its regularity: \citet{hanin2017approximating} (ReLU, level-set components), \citet{johnson2019deep} (activations approximable by injections, level-set topology), \citet{park2021minimum,cai2023achieve,li2023minimum,kim2024minimum} (ReLU, leaky-ReLU, and compact-domain minimum-width analyses), and \citet{rochau2024lower} (monotone \emph{Lipschitz} activations, approximation-theoretic). Our argument requires only \emph{continuous coordinatewise monotonic} activations, with no Lipschitz regularity, smoothness, or approximation by injections, and obtains the bound as a direct corollary of ambient topological invariance, complementing the width-$(d+1)$ upper bound of Appendix~\ref{app:upper-bound}.

\section{Detailed Constructions Breaking Topological Constraints}
\label{app:constructions}

\subsection{Why nonmonotonic activations break homotopy preservation}

The monotonic-activation preservation lemma (Lemma~\ref{lem:general-monotonic}) used the straight-line homotopy $H_t(x) = (1-t)x + t\sigma(x)$, which is a link homotopy precisely because monotonicity rules out coordinatewise collisions during interpolation. For $\sigma(x) = |x|$, the same straight-line interpolation crosses the fold line $x_i = 0$ from negative to positive sides: distinct points with mirror-image coordinates can collide on the fold. Thus $\lk(\sigma(M), \sigma(N))$ need not equal $\lk(M, N)$, and the rigidity that drives the impossibility theorems disappears.

\subsection{Hopf link unlinking via absolute value activations}
\label{app:hopf-unlinking-construction}

We give an explicit five-step construction (Figure~\ref{fig:abs-unlinking}) that takes the Hopf link
$X(t) = (\cos t, \sin t, 0)$, $Y(s) = (0, 1+\cos s, \sin s)$
(the same Hopf link as in Example~\ref{ex:hopf-link}, with the $X$-axis offset of the main text swapped to the $Y$-axis by a $90^\oh$ rotation of coordinates, chosen so the first fold acts on $y$)
to a linearly separable configuration in $\R^3$ using only coordinatewise $|\cdot|$ and affine maps:
\begin{enumerate}
\item Apply $|\cdot|$ to the $y$-coordinate: $X \mapsto (\cos t, |\sin t|, 0)$.
\item Affine $(x,y,z) \mapsto (x, 1-y, z)$: now $X = (\cos t, 1 - |\sin t|, 0)$ and $Y = (0, -\cos s, \sin s)$.
\item Apply $|\cdot|$ to the $x$- and $y$-coordinates.
\item Apply $|\cdot|$ to the $z$-coordinate.
\end{enumerate}
After step 4, both curves lie in the positive octant; an explicit hyperplane (shaded in Figure~\ref{fig:abs-unlinking}) separates them.

\noindent\textbf{Extension to activations with a local extremum.} The recipe generalizes to any activation $\sigma: \R \to \R$ with a strict local extremum on an open interval $I \subset \R$, e.g., the local minimum of GELU near $-0.5$, of Swish/SiLU near $-1.3$, of Mish near a similar point: since the data is compact, an affine pre-shift $x \mapsto ax + b$ rescales the relevant data coordinate into $I$, on which $\sigma|_I$ is nonmonotonic. The straight-line homotopy $t \mapsto (1-t)x + t\sigma(x)$ on $I$ admits a fold-line collision in exactly the same way as $|\cdot|$ does on $\R$, so the same disjoint-tube argument fails and the same unlinking construction goes through with $\sigma|_I$ in place of $|\cdot|$. We do not require $\sigma|_I$ to equal $|\cdot|$, only to fold the rescaled data into a region with reduced crossing structure.

\begin{figure}[h!]
\centering
\begin{subfigure}{0.19\textwidth}
\centering
\begin{tikzpicture}[scale=0.6]
    \tdplotsetmaincoords{70}{110}
    \begin{scope}[tdplot_main_coords]
        \draw[red, thick] plot[domain=0:360, samples=60] ({cos(\x)}, {sin(\x)}, {0});
        \draw[blue, thick] plot[domain=0:360, samples=60] ({0}, {1 + cos(\x)}, {sin(\x)});
        \draw[gray, thin, ->] (-1.5,0,0) -- (1.5,0,0) node[right] {\tiny $x$};
        \draw[gray, thin, ->] (0,-1.5,0) -- (0,2,0) node[above] {\tiny $y$};
        \draw[gray, thin, ->] (0,0,-1) -- (0,0,1) node[above] {\tiny $z$};
    \end{scope}
\end{tikzpicture}
\caption{Step 1}
\end{subfigure}
\begin{subfigure}{0.19\textwidth}
\centering
\begin{tikzpicture}[scale=0.6]
    \tdplotsetmaincoords{70}{110}
    \begin{scope}[tdplot_main_coords]
        \draw[red, thick] plot[domain=0:180, samples=30] ({cos(\x)}, {abs(sin(\x))}, {0});
        \draw[red, thick] plot[domain=180:360, samples=30] ({cos(\x)}, {abs(sin(\x))}, {0});
        \draw[blue, thick] plot[domain=0:360, samples=60] ({0}, {1 + cos(\x)}, {sin(\x)});
        \draw[gray, thin, ->] (-1.5,0,0) -- (1.5,0,0) node[right] {\tiny $x$};
        \draw[gray, thin, ->] (0,-1.5,0) -- (0,2,0) node[above] {\tiny $y$};
        \draw[gray, thin, ->] (0,0,-1) -- (0,0,1) node[above] {\tiny $z$};
    \end{scope}
\end{tikzpicture}
\caption{Step 2}
\end{subfigure}
\begin{subfigure}{0.19\textwidth}
\centering
\begin{tikzpicture}[scale=0.6]
    \tdplotsetmaincoords{70}{110}
    \begin{scope}[tdplot_main_coords]
        \draw[red, thick] plot[domain=0:180, samples=30] ({cos(\x)}, {1 - abs(sin(\x))}, {0});
        \draw[red, thick] plot[domain=180:360, samples=30] ({cos(\x)}, {1 - abs(sin(\x))}, {0});
        \draw[blue, thick] plot[domain=0:360, samples=60] ({0}, {-cos(\x)}, {sin(\x)});
        \draw[gray, thin, ->] (-1.5,0,0) -- (1.5,0,0) node[right] {\tiny $x$};
        \draw[gray, thin, ->] (0,-1.5,0) -- (0,2,0) node[above] {\tiny $y$};
        \draw[gray, thin, ->] (0,0,-1) -- (0,0,1) node[above] {\tiny $z$};
    \end{scope}
\end{tikzpicture}
\caption{Step 3}
\end{subfigure}
\begin{subfigure}{0.19\textwidth}
\centering
\begin{tikzpicture}[scale=0.6]
    \tdplotsetmaincoords{70}{110}
    \begin{scope}[tdplot_main_coords]
        \draw[red, thick] plot[domain=0:180, samples=30] ({abs(cos(\x))}, {1 - abs(sin(\x))}, {0});
        \draw[red, thick] plot[domain=180:360, samples=30] ({abs(cos(\x))}, {1 - abs(sin(\x))}, {0});
        \draw[blue, thick] plot[domain=0:360, samples=60] ({0}, {abs(cos(\x))}, {sin(\x)});
        \draw[gray, thin, ->] (-1.5,0,0) -- (1.5,0,0) node[right] {\tiny $x$};
        \draw[gray, thin, ->] (0,-1.5,0) -- (0,2,0) node[above] {\tiny $y$};
        \draw[gray, thin, ->] (0,0,-1) -- (0,0,1) node[above] {\tiny $z$};
    \end{scope}
\end{tikzpicture}
\caption{Step 4}
\end{subfigure}
\begin{subfigure}{0.19\textwidth}
\centering
\begin{tikzpicture}[scale=0.6]
    \tdplotsetmaincoords{70}{110}
    \begin{scope}[tdplot_main_coords]
        \fill[black, opacity=0.3]
            (0.200, 0.000, 0.579) --
            (0.600, 0.000, 0.790) --
            (0.079, 0.900, 0.000) --
            (0.513, 1.300, 0.000) -- cycle;
        \draw[red, thick] plot[domain=0:180, samples=30] ({abs(cos(\x))}, {1 - abs(sin(\x))}, {0});
        \draw[red, thick] plot[domain=180:360, samples=30] ({abs(cos(\x))}, {1 - abs(sin(\x))}, {0});
        \draw[blue, thick] plot[domain=0:360, samples=60] ({0}, {abs(cos(\x))}, {abs(sin(\x))});
        \draw[gray, thin, ->] (-1.5,0,0) -- (1.5,0,0) node[right] {\tiny $x$};
        \draw[gray, thin, ->] (0,-1.5,0) -- (0,2,0) node[above] {\tiny $y$};
        \draw[gray, thin, ->] (0,0,-1) -- (0,0,1) node[above] {\tiny $z$};
    \end{scope}
\end{tikzpicture}
\caption{Step 5 (separating hyperplane shaded)}
\end{subfigure}
\caption{Hopf link unlinking via absolute value activations.}
\label{fig:abs-unlinking}
\end{figure}

\subsection{ResNet absolute-value synthesis}

\begin{proof}[Proof of Theorem~\ref{thm:skip-connection} (ResNet topological expressivity)]
The single identity
$$|x| = x + 2\,\ReLU(-x)$$
realizes coordinatewise absolute value as one ResNet block: take residual branch $\mathcal{G}(x) = 2\,\ReLU(-x)$ (a width-$d$ ReLU sublayer with input weight $-I_d$ and output weight $2 I_d$), then $\mathcal{F}(x) = x + \mathcal{G}(x) = |x|$. Translated folds $x \mapsto c + |x - c|$ follow by composing affine shifts. Iterating coordinatewise yields the unlinking construction of Figure~\ref{fig:abs-unlinking}, so a width-$d$ ReLU ResNet inherits the topological expressivity of absolute-value activations.
\end{proof}

\subsection{Transformer absolute-value synthesis}

\begin{proof}[Proof of Theorem~\ref{thm:attention} (transformer topological expressivity)]
Process each coordinate $x_i$ independently as a two-token sequence $(x_i, x_i)$ with distinct positional encodings $(p_1, p_2)$. With scalar query/key/value/output weights and biases $(w_q, w_k, w_v, w_o, b_q, b_k, b_v, b_o)$, the second-position attention weight reduces to a sigmoid:
$$\alpha_{21} = \mathrm{softmax}(q_2 k_1, q_2 k_2)_1 = \sigmoid\bigl(q_2 (k_1 - k_2)\bigr),$$
giving output $g(x_i) = w_o\,[\alpha_{21} v_1 + (1 - \alpha_{21}) v_2] + b_o$. Choosing for instance $w_q = -5$, $w_k = 5$, $w_v = w_o = 1$, $p_1 = 0$, $p_2 = 1$, $b_q = 4.3$, $b_k = b_v = b_o = 0$ produces a function with a local minimum near $x = 0$, decreasing for $x < 0$ and increasing for $x > 0$, a smoothed V-shape that locally resembles $|x|$ near the origin.

This construction is a per-coordinate \emph{local} surrogate for $|\cdot|$, not a global approximation. For the unlinking sequence of Figure~\ref{fig:abs-unlinking}, an affine pre-shift first rescales the compact data into the V-shape's effective neighborhood; the resulting attention map then implements a nonmonotonic coordinate fold close enough to $|\cdot|$ for the topological transformation, since the construction depends only on the existence of a coordinatewise fold (any nonmonotonic surrogate with the required folding behavior on the rescaled data domain suffices), not on exact equality with $|\cdot|$. Composing this attention-fold step with the affine transformations from each step of the unlinking sequence gives a pure-attention realization of the same topological transformation, breaking the linking obstruction.
\end{proof}


\section{Experimental Details}
\label{app:experiments}

\subsection{Hopf Link Parametrization and Thickening}

\noindent\textbf{Hopf link parametrization.} $X(t) = (\cos t, \sin t, 0)$ and $Y(s) = (1 + \cos s,\, 0,\, \sin s)$.

\noindent\textbf{Thickening procedure.} Sample points as $\gamma(t) + \varepsilon \cdot \mathbf{n}(t)$ where $\mathbf{n}(t)$ is a unit normal to the curve, $\varepsilon \sim \mathcal{U}(0, r)$ with $r = 0.15$, and high-frequency oscillations $0.3\sin(100t)$ are added to preserve topology.

\subsection{Network Architectures and Training Protocol}

All architectures use width $3$. FFNs are fully connected with depths $3$--$20$; ReLU ResNets use residual blocks $x \mapsto x + g(x)$ with 2-layer width-$3$ subnetworks $g$. We train with Adam for FFNs and AdamW for ResNets at learning rate $10^{-3}$, batch size $128$, up to $800$ epochs, early stopping with patience $100$--$200$, and cross-entropy loss on $6000$ points ($3000$ per class) split $80/20$ for train/validation. Code: \href{https://github.com/7pocheR/low_dimensional_topology}{github.com/7pocheR/low\_dimensional\_topology}.

\subsection{ReLU vs GELU: Detailed Observations}
\label{app:relu-gelu-details}

Table~\ref{tab:relu-gelu} reveals three patterns. (i)~Across 30 seeds, ReLU's best run never exceeds the $\sim$90\% topological ceiling at any depth (max $92.8\%$ at depth 3); mean accuracy is bounded above by the ceiling and additionally degrades with depth as optimization difficulty compounds the expressivity barrier, consistent with Theorem~\ref{thm:hopf}. (ii)~GELU mean accuracy stays at $89$--$91\%$ for depths 3--12 and the best run achieves $100\%$ at depths 5--12, confirming that nonmonotonic activations escape the constraint (\S\ref{sec:nonmonotonic}). (iii)~At extreme depths ($\geq 16$) both activations suffer optimization instability (large stds); depth alone cannot compensate. Unlike standard universal-approximation results where depth substitutes for width, here depth provides no escape from the topological barrier.

\subsection{Higher-Dimensional Linked Spheres Construction}
\label{app:linked-spheres-construction}

For the higher-dimensional experiments (Section~\ref{sec:higher-dim-experiments}), we construct two $n$-spheres linked in $\R^{2n+1}$ with linking number $\pm 1$. The construction uses explicit parametrizations $\tilde{A}, \tilde{B}: \mathbb{S}^n \to \R^{2n+1}$ derived from stereographic projection.

\noindent\textbf{Parametrization.} For $u = (u_0, u') \in \mathbb{S}^n \subset \R^{n+1}$ with $u' = (u_1, \dots, u_n)$, set $a = 1 - u_0/\sqrt{2}$ and define $\tilde{A}(u) = (u'/a,\, -u_0/(\sqrt{2}\,a),\, \mathbf{0}_n) \in \R^n \times \R \times \R^n = \R^{2n+1}$, where $\mathbf{0}_n$ is the zero vector in $\R^n$. Symmetrically, for $v = (v_0, v') \in \mathbb{S}^n$ set $b = 1 - v_0/\sqrt{2}$ and define $\tilde{B}(v) = (\mathbf{0}_n,\, v_0/(\sqrt{2}\,b),\, v'/b)$. The images $\tilde{A}(\mathbb{S}^n), \tilde{B}(\mathbb{S}^n)$ are disjoint $n$-spheres in $\R^{2n+1}$ with $\lk(\tilde{A}(\mathbb{S}^n), \tilde{B}(\mathbb{S}^n)) = 1$ (verified numerically via the higher-dimensional Gauss linking integral).

\noindent\textbf{Geometric structure.} The embedding places $\tilde{A}(\mathbb{S}^n)$ in the $X$-$Z$ subspace (first $n$ coordinates plus middle coordinate, with $Y=\mathbf{0}$) and $\tilde{B}(\mathbb{S}^n)$ in the $Z$-$Y$ subspace (middle coordinate plus last $n$ coordinates, with $X=\mathbf{0}$). These coordinate subspaces intersect only along the shared middle coordinate axis, and the linking arises from the spheres' interlocking configuration around this axis.

\noindent\textbf{Minimum separation.} The minimum distance between points on the two spheres is $d_{\min} = 2(\sqrt{2} - 1) \approx 0.828$, independent of $n$. This ensures the spheres remain well-separated.

\noindent\textbf{Targeted thickening.} To create training data with non-trivial volume, we use \emph{targeted thickening}: each sphere is thickened within its complementary (normal) subspace to preserve the linking structure. In the $(X, Z, Y)$ split with $X, Y \in \R^n$, $Z \in \R$: $\tilde{A}$ lies in the $X$-$Z$ subspace ($Y = 0$) and is thickened in the $Y$-direction as $\tilde{A}_\rho(u, \eta) = \tilde{A}(u) + (0, 0, \eta)$; $\tilde{B}$ lies in the $Z$-$Y$ subspace ($X = 0$) and is thickened in the $X$-direction as $\tilde{B}_\rho(v, \zeta) = \tilde{B}(v) + (\zeta, 0, 0)$, with $\eta, \zeta \sim \text{Uniform}(B_n(\rho))$ on the $n$-dimensional radius-$\rho$ ball. Our experiments use $\rho = 0.5$.

\noindent\textbf{Multi-copy placement.} To amplify the topological barrier (Section~\ref{sec:higher-dim-experiments}), we place $k$ disjoint copies of the linked pair using \emph{$L^1$-ordered grid placement}: copy centers are integer lattice points in $\mathbb{Z}^{2n+1}$ enumerated in nondecreasing $L^1$ norm order, scaled by spacing $s = 10$. Specifically, we enumerate shells $\{v \in \mathbb{Z}^{2n+1} : \|v\|_1 = m\}$ for $m = 0, 1, 2, \ldots$ and take the first $k$ vectors. This spreads copies isotropically rather than along a single axis, preventing networks from exploiting directional biases. Each copy contributes an independent entanglement region, so networks must overcome $k$ local obstructions simultaneously.

\subsection{Linking Scaling Experiments: Variance and Dimensional Effect}
\label{app:linking-scaling-plots}

For the higher-dimensional linking experiments (Table~\ref{tab:linking-scaling}, $\mathbb{S}^2 \sqcup \mathbb{S}^2$ in $\R^5$ with $k$ disjoint copies), we tracked accuracy variance across 100 seeds in addition to best-case ceilings. At small $k$, ReLU+Skip is both the most reliable and reaches the best ceiling: at $k=1$ it attains $100\%$ best and a $97.0\pm 3.5$ mean test accuracy, whereas plain ReLU's mean is $89.4\pm 11.4$ with seeds occasionally collapsing far below the topological ceiling. As $k$ grows, monotonic ceilings (ReLU, ReLU+Skip) decline faster than nonmonotonic ceilings (GELU, Swish): the nonmonotonic-minus-monotonic best-case gap is mildly negative for $k\leq 5$, becoming $+3.7, +4.4, +3.9$pp at $k=10, 20, 50$. The pattern is consistent with skip connections resolving the single entanglement at $k=1$, while nonmonotonic activations are required to resolve each of many local entanglements at larger $k$. Mean accuracies decline for all architectures as $k$ grows, but nonmonotonic mean accuracies remain $4$--$6$pp above monotonic at $k\geq 10$, e.g., GELU mean $81.0\pm 3.7$ vs.\ ReLU+Skip mean $76.4\pm 3.9$ at $k=10$.

\noindent\textbf{Why higher dimensions need larger $k$.} At $n=1$ ($\mathbb{S}^1 \sqcup \mathbb{S}^1$ in $\R^3$) the topological advantage of skip/nonmonotonic architectures is visible at $k=1$ (Table~\ref{tab:relu-gelu}); at $n=2$ ($\mathbb{S}^2 \sqcup \mathbb{S}^2$ in $\R^5$) one needs $k > 5$ to see it. The reason is geometric: monotonic networks can achieve high accuracy by sacrificing accuracy on a small overlap region of characteristic size $\varepsilon$ near each linked location, whose volume fraction scales as $\Theta(\varepsilon^{2n+1})$ in ambient $\R^{2n+1}$. For $n=1$ that fraction is $\sim 10\%$; for $n=2$ the same $\varepsilon$ gives $\sim \varepsilon^2$ smaller (the $\sim 90\% \to \sim 99\%$ drop at $k=1$ implies $\varepsilon \approx 0.3$). Real high-dimensional datasets are expected to compensate by having more linking opportunities (volume growing as $\Theta(\mathrm{poly}(n))$); our multi-copy design synthetically restores the cumulative obstruction so the experiment remains controlled across $n$.

\subsection{ResNet Skip Connection Visualization}
\label{app:resnet-visualization}

Figure~\ref{fig:resnet-full} shows the full resolution visualization of the ResNet skip connection mechanism on the disk-annulus ($\mathbb{S}^0$-$\mathbb{S}^1$) separation task. This experiment demonstrates that a depth-3 width-2 ResNet with ReLU activations learns to implement the folding operation $|x| = x + 2\ReLU(-x)$ predicted by Theorem~\ref{thm:skip-connection}.

\begin{figure}[h!]
\centering
\includegraphics[width=\textwidth]{figures/resnet_skip_mechanism.png}
\caption{\textbf{Full resolution ResNet skip connection visualization.} (a)~\textbf{Input $x$}: The nested disk (inner) and annulus (outer) form an $\mathbb{S}^0$-$\mathbb{S}^1$ link with $\lk = \pm 1$. Each point is assigned a color based on its angular position in the input space; this color is preserved across all three panels, allowing one to track where each input point is mapped by the network. Since colors vary continuously with position, a point's trajectory can be identified by locating the dot (or small patch of similar color) in each panel. (b)~\textbf{Residual branch output $f(x)$}: The learned transformation approximates $2\ReLU(-x)$, ``folding'' negative coordinate values toward zero. Note the characteristic triangular shape created by the ReLU. (c)~\textbf{Skip connection output $x + f(x)$}: Adding the residual to the input implements $|x|$, folding the configuration and separating the two components into vertically disjoint regions. The disk and annulus are now linearly separable. This experiment validates that ResNet can overcome topological barriers through the skip connection mechanism.}
\label{fig:resnet-full}
\end{figure}

\noindent\textbf{Training details.} The ResNet architecture uses 3 residual blocks with width-2 hidden layers. Training uses Adam optimizer with learning rate 0.001 for 5000 epochs on 50,000 points per class. The visualization shows the block 1 transformation (input $\to$ first residual block output). Multiple random seeds (including seed 103 shown) consistently learn similar folding transformations, demonstrating that the theoretical mechanism is reliably discovered by gradient descent.

\subsection{Width Expansion Eliminates the Obstruction}
\label{app:width-expansion}

Theorem~\ref{thm:width-unrestricted-elimination} predicts that increasing width past the critical $d+1$ threshold removes the topological barrier. We verify this in $\R^7$ on $\mathbb{S}^3 \sqcup \mathbb{S}^3$ with $k = 10$ copies, depth $5$, $15$ seeds per width (Table~\ref{tab:width-expansion-r7}). Critical width $d = 7$ caps at $\approx 88\%$ (mean) / $93\%$ (max), and accuracy improves overall as width is expanded (with small finite-seed nonmonotonicities), saturating near $99{-}100\%$ around width $\approx 5d$. The same pattern holds in $\R^5$ ($k = 10$, depth $5$): plain ReLU at width $20$ already reaches $98.5\%$ mean, and adding nonmonotonic activations or skip connections only marginally changes this, confirming that the architectural mechanisms (folding, skip) are most valuable when width is constrained below the critical threshold and become substitutable with width when width is relaxed.

\begin{table}[!htbp]
\centering
\small
\caption{Width-expansion in $\R^7$ ($\mathbb{S}^3 \sqcup \mathbb{S}^3$, $k=10$, depth $5$, $15$ ReLU seeds).}
\label{tab:width-expansion-r7}
\begin{tabular}{lcccccccc}
\toprule
Width & 7 & 8 & 10 & 14 & 21 & 28 & 35 & 49 \\
Multiplier of $d$ & $1\times$ & $1.1\times$ & $1.4\times$ & $2\times$ & $3\times$ & $4\times$ & $5\times$ & $7\times$ \\
\midrule
Mean (\%) & 87.9 & 86.7 & 91.2 & 95.4 & 98.8 & 99.4 & 99.6 & 99.7 \\
Max (\%) & 93.1 & 93.9 & 94.8 & 98.0 & 99.2 & 99.7 & 100.0 & 99.9 \\
\bottomrule
\end{tabular}
\end{table}

\subsection{Layer-by-Layer Linking-Number and Min-Distance Tracking}
\label{app:layer-tracking}

To verify mechanistically that the impossibility theorem reflects what the network actually does, we track both the Gauss linking integral $\lk$ and the minimum inter-class distance $d_{\min}$ layer-by-layer for a width-$3$, depth-$5$ network on the Hopf link, comparing ReLU, GELU, and ReLU+skip (Table~\ref{tab:layer-tracking}).

\begin{table}[!htbp]
\centering
\small
\caption{Layer-by-layer $\lk$ and minimum inter-class distance $d_{\min}$ on the Hopf link (best seed, $200$ points per class). Starred ReLU $\lk$ values are artifacts (see Interpretation below).}
\label{tab:layer-tracking}
\begin{tabular}{lccccccc}
\toprule
& Input & $L_0$ & $L_1$ & $L_2$ & $L_3$ & $L_4$ & $L_5$ \\
\midrule
ReLU $\lk$       & $-1$ & $0.50^*$ & $0.18^*$ & $0$ & $0$ & $0$ & $0$ \\
ReLU $d_{\min}$  & $0.83$ & $0.06$ & $0.00$ & $0.00$ & $0.00$ & $0.00$ & $0.00$ \\
\midrule
GELU $\lk$       & $-1$ & $0$ & $0$ & $0$ & $0$ & $0$ & $0$ \\
GELU $d_{\min}$  & $0.83$ & $0.11$ & $0.17$ & $0.63$ & $1.01$ & $1.32$ & $1.41$ \\
\midrule
ReLU+skip $\lk$       & $-1$ & $0$ & $0$ & $0$ & $0$ & $0$ & $0$ \\
ReLU+skip $d_{\min}$  & $0.83$ & $0.38$ & $0.46$ & $1.05$ & $2.21$ & $3.04$ & $3.73$ \\
\bottomrule
\end{tabular}
\end{table}

\noindent\textbf{Interpretation.} ReLU forces the two curves into intersection by $L_1$ (minimum distance collapses to $0$); the apparent fractional $\lk$ values at $L_0$, $L_1$ are not meaningful linking numbers but artifacts of the Gauss integral becoming ill-conditioned as $|x - y| \to 0$ (linking number is only defined for disjoint curves). After $L_1$, the network has destroyed the geometric structure rather than resolved the topology, consistent with Lemma~\ref{lem:rank-deficient-intersection} and the impossibility theorem. GELU and ReLU+skip, by contrast, achieve $\lk = 0$ \emph{while keeping the curves disjoint}: after the first unlinking layer ($d_{\min}$ small but positive), subsequent layers increase $d_{\min}$ steadily, implementing the genuine unlinking that monotonic feedforward networks cannot.

\section{Linking Detection Algorithm for Point Cloud Data}
\label{app:link-detection}

This section gives the algorithm used to detect topological linking between two finite point clouds $\mathcal{X}, \mathcal{Y} \subset \R^d$. The pipeline is: (i) project to $\R^3$ via PCA, (ii) build $k$-NN spatial graphs per class, (iii) extract a fundamental cycle basis per graph, (iv) compute Gauss linking numbers over pairs of basis cycles.

\subsection{Graph construction}

For each class $\mathcal{X} = \{x_1, \dots, x_n\} \subset \R^3$, the \emph{$\varepsilon$-filtered $k$-NN graph} $G$ has vertex set $\{1, \dots, n\}$ and edge $(i,j)$ whenever $j$ is among the $k$ nearest neighbors of $x_i$ and $\|x_i - x_j\| \le \varepsilon$. The threshold $\varepsilon$ (typically a percentile of nearest-neighbor distances) suppresses spurious long edges. For fixed point cloud and $k$-NN relation, the edge set is monotone in $\varepsilon$: increasing $\varepsilon$ only adds eligible $k$-NN edges, so already detected witness cycles remain graph cycles at larger thresholds. The \emph{mutual} variant additionally requires $i \in \mathrm{kNN}(j)$, yielding sparser but more symmetric graphs robust to density variation.

\subsection{Fundamental cycle basis via spanning forest}

\begin{definition}[Fundamental cycle basis]
For a graph $G = (V, E)$ with $|V| = n$, $|E| = m$, and $c$ connected components, fix a spanning forest $T \subset E$ ($n - c$ edges). Each of the $m - n + c$ non-tree edges $e = (u, v)$ adds a unique cycle $C_e$ consisting of $e$ plus the unique $T$-path from $u$ to $v$. The collection $\{C_e\}$ is the \emph{fundamental cycle basis}.
\end{definition}

\begin{proposition}[Cycle-basis sufficiency]
\label{prop:basis-sufficiency}
The fundamental cycle basis generates $H_1(G; \Z)$, so every cycle is an integer linear combination of basis cycles. Linking number extends bilinearly to $H_1$, so $\mathcal{X}, \mathcal{Y}$ exhibit detectable linking iff some basis pair $(C_i, D_j)$ has $\lk(C_i, D_j) \neq 0$.
\end{proposition}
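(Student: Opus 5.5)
The plan is to prove the two claims in Proposition~\ref{prop:basis-sufficiency} separately: first that the fundamental cycle basis generates $H_1(G;\Z)$, and then that linking number is bilinear on $H_1(G;\Z)\times H_1(G';\Z)$. Here $G$ and $G'$ are the two spatial graphs built on $\mathcal{X}$ and $\mathcal{Y}$, and we assume their geometric realizations (straight edges in $\R^3$) are disjoint, which is needed for any linking number to be defined.

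\textbf{Step 1 (generation).} View the realized graph as a $1$-dimensional CW complex, so $H_1(G;\Z)=Z_1(G)=\ker(\partial:C_1\to C_0)$. Take a cycle $z=\sum_e a_e e$ and form
\[
z' = z-\sum_{e\notin T} a_e C_e .
\]
By construction $z'$ is supported on the spanning forest $T$. It is still a cycle, and a forest has no nonzero $1$-cycles, so $z'=0$. This shows $z=\sum a_e C_e$. Each $C_e$ contains the non-tree edge $e$, and no other $C_{e'}$ does, so the $C_e$ are linearly independent and form a $\Z$-basis of rank $m-n+c$.

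\textbf{Step 2 (bilinear extension).} For disjoint realizations $|G|,|G'|\subset\R^3$, extend the Gauss integral of Definition~\ref{def:link-linking} to $1$-chains by linearity over edges:
\[
\lk(z,w)=\sum_{e,f} a_e b_f\,\frac{1}{4\pi}\int_e\int_f \frac{(x-y)\cdot(dx\times dy)}{|x-y|^3}.
\]
This expression is bilinear by definition. For honest oriented simple cycles it agrees with Definition~\ref{def:link-linking}, since the integral over a cycle is the sum of its edge integrals. It remains to see that it is integer-valued and depends only on homology classes. Both follow because on cycles it equals the intersection number of $z$ with a $2$-chain bounding $w$, equivalently the class of $z$ evaluated against the Alexander-dual class of $w$ in $H^1(\R^3\setminus|G'|)$.

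\textbf{Step 3 (the equivalence).} Write $z=\sum_i \alpha_i C_i$ and $w=\sum_j\beta_j D_j$. Then
\[
\lk(z,w)=\sum_{i,j}\alpha_i\beta_j\,\lk(C_i,D_j).
\]
If every basis pair has $\lk(C_i,D_j)=0$, then every cycle pair has zero linking. Conversely, a basis pair with nonzero linking is itself a detected witness.

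The expected main obstacle is that an arbitrary graph cycle need not be simple. The statement ``every cycle'' must therefore be read homologically, with simple cycles in particular being sums of basis cycles. The other delicate point is making the ``detectable'' phrasing precise: I would state it as ``the linking pairing $H_1(G)\times H_1(G')\to\Z$ is nonzero iff some basis entry is nonzero,'' which is immediate from bilinearity.
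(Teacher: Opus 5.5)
Your proposal is correct and takes essentially the same route as the paper, which gives no separate proof beyond the reasoning built into the statement itself: the fundamental cycles generate $H_1(G;\Z)$, and the linking pairing is bilinear, so it vanishes on all cycle pairs iff it vanishes on all basis pairs. You supply the details the paper leaves implicit — reducing a cycle onto the spanning forest, defining the Gauss integral edgewise and extending bilinearly, and getting integrality from the intersection-number (Alexander-dual) description — and none of these changes the argument.
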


This reduces detection from a search over infinitely many cycle pairs to a finite computation over $O(\beta_\mathcal{X} \cdot \beta_\mathcal{Y})$ basis pairs, where $\beta = m - n + c$ is the first Betti number.

\subsection{Gauss linking number}

For disjoint piecewise-linear cycles $C_1, C_2 \subset \R^3$, we evaluate the Gauss integral
$\lk(C_1, C_2) = \frac{1}{4\pi} \oint_{C_1} \oint_{C_2} \frac{(x - y) \cdot (dx \times dy)}{|x - y|^3}$
via midpoint quadrature on subdivided edges. Algorithm~\ref{alg:link-detection} packages the full detection pipeline.

\begin{algorithm}[H]
\caption{Link Detection for Point Cloud Data}
\label{alg:link-detection}
\begin{algorithmic}[1]
\REQUIRE Point clouds $\mathcal{X}, \mathcal{Y} \subset \R^d$; parameters $k$, $\varepsilon$, subdivision $N_{\mathrm{sub}}$
\ENSURE Linked decision + witness cycle pair if linked
\STATE Project $\mathcal{X} \cup \mathcal{Y}$ to $\R^3$ via PCA
\STATE Build $\varepsilon$-filtered $k$-NN graphs $G_\mathcal{X}, G_\mathcal{Y}$
\STATE Compute spanning forests via BFS; extract fundamental cycle bases $\mathcal{C}_\mathcal{X}, \mathcal{C}_\mathcal{Y}$
\IF{$\mathcal{C}_\mathcal{X} = \varnothing$ or $\mathcal{C}_\mathcal{Y} = \varnothing$}
    \RETURN (Not linked: insufficient cycles)
\ENDIF
\FOR{$C \in \mathcal{C}_\mathcal{X}$, $D \in \mathcal{C}_\mathcal{Y}$}
    \STATE $\ell \leftarrow$ midpoint-quadrature Gauss integral over $C \times D$ with $N_{\mathrm{sub}}$ subdivisions, rounded to nearest integer
    \IF{$\ell \neq 0$}
        \RETURN (Linked, witness $(C, D, \ell)$)
    \ENDIF
\ENDFOR
\RETURN (Not linked, all $|\mathcal{C}_\mathcal{X}| \cdot |\mathcal{C}_\mathcal{Y}|$ basis pairs checked)
\end{algorithmic}
\end{algorithm}

\subsection{Parameters and complexity}

Typical choices: $k \in [6, 15]$ (denser $k$ exposes more cycles at the cost of compute), $\varepsilon$ at the 70th percentile of nearest-neighbor distances, $N_{\mathrm{sub}} \in \{4, 8\}$ subdivisions per edge, and minimum cycle length $\ge 4$ (smaller cycles are construction artifacts). Use mutual $k$-NN for heterogeneous density.

Complexity: $k$-NN graph in $O(n \log n)$ via $k$-d trees; spanning forest and cycle-basis extraction in $O(n + m) = O(kn)$; Gauss integral per cycle pair in $O(L_1 L_2 N_{\mathrm{sub}}^2)$ for cycle lengths $L_1, L_2$. Total: $O(n \log n + \beta_\mathcal{X} \beta_\mathcal{Y} L^2 N_{\mathrm{sub}}^2)$. For our CIFAR-10 setting ($n \sim 10^4$, $\beta \sim 10^2$, $L \sim 20$) the algorithm runs in tens of seconds per class pair.


\section{CIFAR-10 Linking Experiments: Details}
\label{app:cifar10-details}

This appendix gives the data-preparation, detection-parameter, and architectural details for the suggestive CIFAR-10 linking analysis of Section~\ref{sec:cifar10-experiments}. We emphasize throughout that the evidence here is \emph{correlational}, not causal, and rests on a 3D PCA projection rather than the native data manifold; the synthetic experiments (Appendix~\ref{app:experiments}) are the primary validation of the theory.

\subsection{Data preparation and link detection}

\noindent\textbf{Augmentation and projection.} We apply $20\times$ training-set augmentation (random horizontal flip, random crop with 4-pixel padding, color jitter $\pm 0.2$ for brightness/contrast/saturation) to CIFAR-10 (yielding $1{,}050{,}000$ samples; $105{,}000$ per class), flatten + standardize, and project to $\R^3$ via PCA on the combined augmented dataset.

\noindent\textbf{Detection parameters.} We run Algorithm~\ref{alg:link-detection} with $k = 15$ mutual nearest neighbors and minimum cycle length $30$. For the displayed bird--deer witness, we separately binary-search the threshold $\varepsilon$ to locate the smallest scale at which any pair of class cycles links. At $\varepsilon = 0.0338$ ($0.22\%$ of the 3D bounding-box diagonal $15.555$), the bird (58-vertex) and deer (40-vertex) classes form interlocked cycles with $\lk = -1$ (Gauss integral $-1.0036$); see Figure~\ref{fig:cifar10-link-full}.

\begin{figure}[h]
\centering
\includegraphics[width=\textwidth]{figures/linked_cycles_3d.png}
\caption{Linked cycles detected in CIFAR-10 PCA-3D: bird (blue, $58$ points) and deer (red, $40$ points) interlock with $\lk = -1$ at $\varepsilon = 0.034$.}
\label{fig:cifar10-link-full}
\end{figure}

\noindent\textbf{Linking-consistency definition.} For a class pair $(X_i, X_j)$ and $N$ independent runs (each regenerating the augmented dataset with a fresh random seed and recomputing graphs/cycles), the \emph{linking consistency} is $\mathrm{consistency}(X_i, X_j) = \tfrac{1}{N} \sum_r \mathbf{1}[|\lk_r(X_i, X_j)| \ge 1]$. Variation across runs stems from augmentation randomness, sampling-dependent $k$-NN structure, and cycle-vertex placement; consistency measures how robustly a pair's linking survives sampling variability rather than being a property of the underlying manifold.

The consistency runs do \emph{not} binary-search $\varepsilon$ independently for each pair. They use the same pre-arranged threshold sequence for all $45$ pairs,
$(0.11, 0.14, 0.15, 0.16, 0.17, 0.18, 0.19, 0.2, 0.3, 1.0, 30.0)$.
Because the $\varepsilon$-filtered graph is monotone in $\varepsilon$, a detected witness remains available at larger thresholds; the sequence therefore measures how small a scale suffices for detection, rather than whether some tuned threshold exists.

\subsection{Ten-class linking consistency}

Eleven runs across all $45$ CIFAR-10 class pairs give a deliberately coarse robustness measure rather than a definitive topological classification. The distribution is still informative: $27/45$ pairs are strongly linked (consistency $\ge 0.7$), $8/45$ are weak or unlinked ($\le 0.3$), and the remaining $10/45$ are ambiguous. Representative high-consistency pairs include deer--dog (91\%), bird--cat (91\%), cat--deer (91\%), cat--dog (82\%), and automobile--truck (91\%). Representative low-consistency pairs include frog--ship (18\%), airplane--horse (18\%), airplane--automobile (27\%), cat--ship (27\%), and deer--truck (27\%). We use this summary, rather than the full $45$-entry matrix, because the point is robustness of the signal across augmentation seeds and not a claim that each CIFAR-10 pair has an intrinsic binary linked/unlinked label.

\subsection{CNN architecture}

To probe whether the bounded-width topological obstruction applies to CNNs on CIFAR-10, we use width-bounded CNNs whose every intermediate representation has flattened dimension $\le 3072$ (matching the input). Each spatial-resolution stage quadruples channels while halving each spatial dimension, preserving the $C \cdot H \cdot W = 3072$ budget: $3 \times 32^2 \to 12 \times 16^2 \to 48 \times 8^2 \to 192 \times 4^2$. Depth variants L5/L8/L11 use 1/2/3 ConvBlocks per stage; ConvBlocks are $3 \times 3$ convs followed by activation (no batch normalization). When enabled, skip connections span each conv block: $x \mapsto x + \mathcal{G}(x)$. Training: Adam at $10^{-3}$, batch size $128$, up to $100$ epochs with patience-$15$ early stopping on the standard CIFAR-10 train/test split.

\subsection{Ten-class confusion vs.\ linking}
\label{app:cifar10-10class}

For each of $42$ trained models (7 activations $\times$ 3 depths $\times$ 2 skip settings) we compute the Spearman rank correlation between the linking-consistency scores and the symmetric confusion rate $\mathrm{conf}(i,j) = P(\hat{y}=j \mid y=i) + P(\hat{y}=i \mid y=j)$ across all $45$ class pairs. The pattern is stable across L5/L8/L11 and across activation choice: mean Spearman $r \approx 0.47$ with $p < 0.003$ throughout. Representative L11/no-skip values are ReLU $r=0.436$ ($p=0.0028$), GELU $r=0.501$ ($p=0.0005$), and Mish $r=0.497$ ($p=0.0005$).

The same runs show a large separation in confusion rates. High-link pairs (consistency $\ge 0.7$) have about $3\times$ the confusion of low-link pairs (consistency $\le 0.3$): ReLU gives $6.4\%$ vs.\ $2.1\%$, GELU gives $6.0\%$ vs.\ $1.8\%$, and Mish gives $5.9\%$ vs.\ $2.1\%$. We report representative values rather than the full per-activation table because the rows are redundant; the relevant observation is that the correlation persists across architectures and activations.

\subsection{Distance-metric and projection controls}

A natural concern is whether linking consistency simply measures inter-class distance. We compared linking consistency against two pixel-space distance metrics (mean Spearman $|r|$ across all $42$ models): linking consistency reaches $\mathbf{0.479}$, average pairwise distance $0.412$, minimum-distance $0.356$. Linking is therefore not reducible to a simple distance proxy; it captures additional signal about how class manifolds are intertwined rather than merely how far apart they sit on average.

We also repeated detection under $20$ random 3D projections. In the same projection-control protocol, PCA detected links in $21/45$ class pairs, while random projections detected $31$--$43/45$ pairs depending on the projection. Per-pair random-projection frequency ranged from $11/20$ (frog--ship) to $20/20$ (deer--dog), preserving the same weak/strong ordering but with less discrimination than PCA consistency ($55$--$100\%$ vs.\ $18$--$91\%$). These random-projection frequencies vary projection subspaces, whereas the linking-consistency summary above varies augmentation seeds, so they should be read as complementary controls rather than the same estimator.

\noindent\textbf{Caveats.} The evidence above is suggestive, not causal. (i) Linking is detected in 3D PCA space, not the native $3072$D pixel space, so some detected linking may be projection artifacts and some genuine high-dimensional linking will be invisible after PCA. (ii) The linked-pair gap on the binary task is modest ($1.2\%$) and concentrated on the few hundred points that participate in detected cycles ($\sim 0.1\%$ of each class). (iii) The ten-class confusion correlation is robust but partially confounded by semantic similarity; the within-dataset comparison controls factors like resolution and collection methodology but cannot disentangle topology from semantic similarity entirely. Multi-projection aggregation and higher-dimensional topological data analysis are natural next steps for stronger conclusions.

\end{document}